\theoremstyle{plain}
\newtheorem{theorem}{Theorem}[section]
\newtheorem{lemma}[theorem]{Lemma}
\theoremstyle{definition}
\newtheorem{definition}[theorem]{Definition}
\theoremstyle{remark}
\newtheorem{remark}[theorem]{Remark}
\newcommand{\systemnamesecond}{Frugal\textsc{MCT}}
\newcommand{\systemname}{Frugal\textsc{ML}}
\newcommand{\james}[1]{{\color{blue} {\bf James:} #1}}
\newcommand{\lingjiao}[1]{{\color{orange} {\bf Lingjiao:} #1}}
\DeclarePairedDelimiterX{\inp}[2]{\langle}{\rangle}{#1, #2}
\newcommand{\eat}[1]{}
\newcommand{\Exp}{\mathbb{E}}
\newcommand{\R}{\mathbb{R}}
\numberwithin{equation}{section}
\newlength{\dhatheight}
\icmltitlerunning{Efficient Online ML API Selection for Multi-Label Classification Tasks}
\begin{document}

\twocolumn[
\icmltitle{Efficient Online ML API Selection for Multi-Label Classification Tasks}



\icmlsetsymbol{equal}{*}

\begin{icmlauthorlist}
\icmlauthor{Lingjiao Chen}{yyy}
\icmlauthor{Matei Zaharia}{yyy}
\icmlauthor{James Zou}{yyy,comp}
\end{icmlauthorlist}

\icmlaffiliation{yyy}{Department of Computer Sciences, Stanford University, Stanford, USA}
\icmlaffiliation{comp}{Department of Biomedical Data Science, Stanford University, Stanford, USA}

\icmlcorrespondingauthor{Lingjiao Chen}{lingjiao@stanford.edu}

\icmlkeywords{Machine Learning, ICML}

\vskip 0.3in
]



\printAffiliationsAndNotice{}  

\begin{abstract}
Multi-label classification tasks such as OCR and multi-object recognition are a major focus of the growing machine learning as a service industry.
While many multi-label APIs are available, it is challenging for users to decide which API to use for their own data and budget, due to the heterogeneity in their prices and performance.    
Recent work has shown how to efficiently select and combine single-label  APIs to optimize performance and cost. 
However, its computation cost is exponential in the number of labels, and is not suitable for settings like OCR.
In this work, we propose \systemnamesecond{}, a principled framework that adaptively selects the APIs to use for different data in an online fashion while respecting the user's budget.
It allows combining ML APIs' predictions for any single data point, and selects the best combination based on an accuracy estimator. 
We run systematic experiments using ML APIs from Google, Microsoft, Amazon, IBM, Tencent, and other providers for tasks including multi-label image classification, scene text recognition and named entity recognition. 
 Across these tasks, \systemnamesecond{} can achieve over 90\% cost reduction while matching the accuracy of the best single API, or up to 8\% better accuracy while matching the best API's cost.
\end{abstract}

\section{Introduction}\label{Sec:SFAME:Intro}


Many machine learning users are starting to adopt machine learning as a service (MLaaS) APIs to obtain high-quality predictions.
One of the most common tasks these APIs target is multi-label classification.
For example, one can use Google's computer vision API~\cite{GoogleAPI}  to tag an image with a wide range of possible labels for \$0.0015, or Microsoft's  API~\cite{MicrosoftAPI} for \$0.0010. 
Another example is to extract all text strings from an image for \$0.005 via iFLYTEK's API~\cite{IflytekAPI} or \$0.021 via Tencent's API~\cite{TencentAPI}.
In practice, these APIs also provide different performance on different types of input data (e.g., English vs Chinese text).
The heterogeneity in APIs' performance and prices makes it hard for users to decide which API, or combination of APIs, to use for their own datasets and budgets. 

Recent work~\cite{FrugalML2020} proposed FrugalML, an algorithmic framework that adaptively decides which APIs to call for a data point to optimize accuracy and cost.
Their approach learns a fast decision rule for each possible output label that can significantly improve cost-performance over the individual APIs. However, FrugalML requires a large amount of training data 
and involves solving a non-convex optimization  problem with complexity exponential in the number of distinct labels.
This prevents it from being used for tasks with large number of  labels, such as multi-label classification.  
Furthermore, FrugalML ignores correlation between different APIs' predictions,  potentially limiting its accuracy.
For example, APIs A and B may output \textit{\{person, car\}} and  \textit{\{car, bike\}} separately for an image whose true keywords are \textit{\{person, car, bike\}}.
FrugalML would select one of the two label sets, but combining them  results in the true label set and thus higher accuracy.
Thus, this paper aims to solve these significant limitations and address the question: \textit{how do we design efficient ML API selection strategies for multi-label classification tasks to maximize accuracy within a budget?} 
\begin{figure*}[t]
	\centering
	\vspace{-1mm}
	\includegraphics[width=1.0\linewidth]{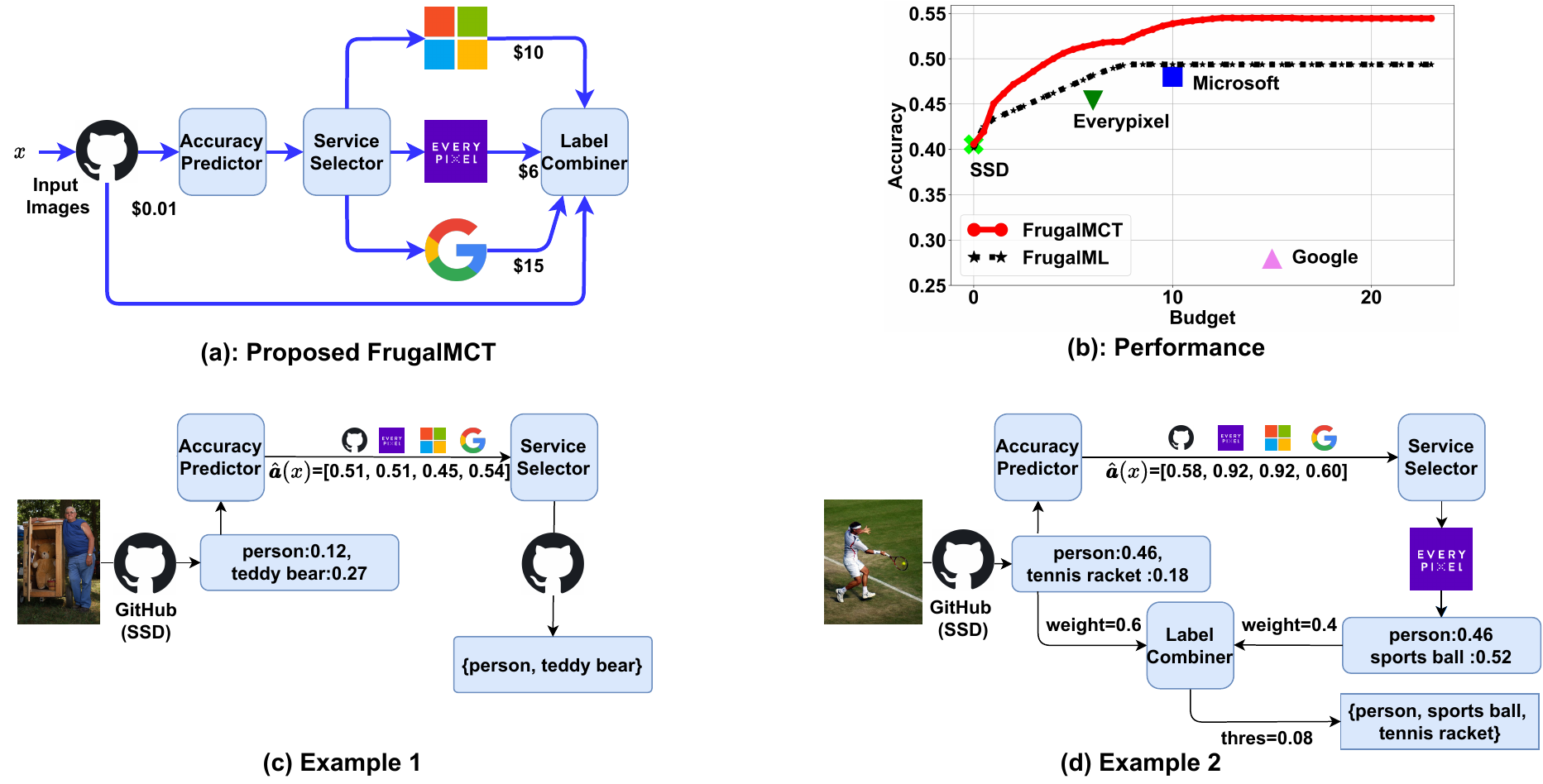}
	\vspace{-4mm}
	\caption{Demonstration of \systemnamesecond{}. \textbf{(a)}: \systemnamesecond{} workflow. \textbf{(b)}: Performance of \systemnamesecond{} on COCO, a multi-label image dataset, using real commercial ML APIs. \textbf{(c), (d)}: Examples of \systemnamesecond{}'s behavior on different inputs. In \textbf{(c)}, \systemnamesecond{} estimates that the accuracy of a cheap open source SSD model from GitHub is high, and thus directly returns its predictions. In \textbf{(d)}, \systemnamesecond{} estimates that combining SSD's results with the Everypixel API has a much higher estimated accuracy, and thus it invokes EveryPixel and combines its results with SSD's results.}
	\label{fig:SFAME:Example}
\end{figure*} 

We propose \systemnamesecond{}, a principled framework that learns the relative strengths of different combinations of multi-label classification APIs and efficiently selects the optimal combinations of APIs to call for different data and budget constraints. 
As shown in Fig.~\ref{fig:SFAME:Example} (a), \systemnamesecond{}  directly estimates the accuracy of each API combination on a particular input based on the features and predicted labels of that input. Then it uses a fast service selector based on the estimated accuracy to balance accuracy and budget.
For example, we might first call API A on an input. 
If A returns \emph{person} and \emph{teddy bear} 
and the accuracy predictor gives  relatively high estimated accuracy (Fig.~\ref{fig:SFAME:Example} (c)), then we stop and report \textit{\{person, teddy bear\}} as the label set.
If A returns \textit{person} and \textit{tennis racket}, and we predict that combining it with API B's output gives a much higher accuracy, then we invoke API B and combine their prediction to obtain \textit{\{person, sports ball, tennis racket\}} (Fig.~\ref{fig:SFAME:Example} (d)).

\paragraph{Contributions.} 
\systemnamesecond{} is an end-to-end approach that integrates the selection of APIs and the combination of their outputs for individual user queries. It leverages our key new finding that \emph{current commercial APIs have complementary strengths and weaknesses, and that we can reliably predict which APIs are likely to work well for a new query based on easy-to-generate metadata about its input.} Based on this API accuracy predictor,  
\systemnamesecond{} then leverages an efficient online algorithm to determine which combination of APIs to call for different user queries. 
We show that the online algorithm enjoys an accuracy provably close to the offline method as well as a small computational cost.
All components in \systemnamesecond{} are trainable, making it easy to customize for different applications.
To our knowledge, \systemnamesecond{} is the first work on how to effectively select and combine multi-label ML APIs. 


Empirically, \systemnamesecond{} produces substantially better prediction performance than  individual APIs and than  FrugalML adapted for multi-label tasks (Fig.~\ref{fig:SFAME:Example} (b)).
Extensive experiments with real commercial APIs on several tasks, including multi-label image classifications, scene text recognition, and named entity recognition, show that \systemnamesecond{} typically provides over 60\% (as high as  98\%) cost reduction when aiming to match the best commercial API's performance.
Also, when targeting the same cost as the best commercial API, \systemnamesecond{} can improve performance up to 8\%.
As a dataset contribution, we have also released~\footnote{\url{https://github.com/lchen001/FrugalMCT}} our dataset of 295,212 samples annotated by commercial multi-label APIs as the largest dataset and resource for studying multi-label ML prediction APIs.

\eat{

\textbf{Contributions.} Our main contributions are:
\begin{enumerate}
    \item We propose \systemnamesecond{}, a principled framework that adaptively exploits ML APIs for multi-label classification to optimize accuracy within a budget constraint. \systemnamesecond{} is  efficient and has provably  performance for both batch and streaming data.  This is the first work on how to select and combine multi-label ML APIs. 
    
    \item We evaluate  \systemnamesecond{}{}  using real-world APIs from diverse providers (e.g., Google, Microsoft, Amazon, and Tencent) for tasks including multi-label image classification, scene text recognition, and named entity recognition.
    \systemnamesecond{} can significantly reduce costs and improve accuracy over the best individual APIs.
     
     \item We release our dataset of 295,212 samples annotated by commercial multi-label APIs as the largest dataset and resource for studying ML APIs.
\end{enumerate}
}

\section{Related Work}
\textbf{MLaaS:} With the growing importance and adoption of MLaaS APIs \cite{AmazonAPI, TencentAPI, GoogleAPI, IBMNLPAPI, MicrosoftAPI}, existing research has largely focused on evaluating individual APIs for their performance  \cite{MLasS_EmpiricalAnalysis2017},  robustness  \cite{MLasS_GoogleNotRobust_2017}, biases \cite{RacialDisparityASR2020}, performance estimation~\cite{chen2021MASA}, pricing~\cite{chen2019nimbus},and applications \cite{ pmlr-v81-buolamwini18a, MLasS_GoogleDigitalMedia_2019, MLasS_Google_Microsoft_Blind18}.
Recent work on \systemname{} \cite{FrugalML2020}  studies API calling strategies for single label classification. 
While their approach's computational complexity is exponential in the number of labels, \systemnamesecond{}'s complexity does not depend on the number of labels, making it suitable for multi-label prediction APIs. 
In addition, \systemname{} selects only one API per user query, while  \systemnamesecond{} considers the combination of multiple APIs' output for each input data. 
This improves the overall accuracy (as shown in Sec \ref{Sec:SFAME:Experiment}), but also creates unique optimization challenges that we solve.

\textbf{Ensembles for multi-label classification:}
Ensemble learning is a natural approach to combine different predictors' output. Several ensemble methods have been developed, 
such as using pruned sets \cite{Ensemble_Multilabel_PrunedSet2008}, classifier chains \cite{Ensemble_Multilabel_chainmethod2011}, and random subsets \cite{Ensemble_Multilabel_2007}, with applications in image annotations~\cite{ensembleimagevideotag2011}, document classification~\cite{Ensemble_TextCategory2017}, and speech categorization~\cite{Ensemble_Multilabel_hatespeech2019}. 
 \citet{Ensemble_Multilabel_Survey2020} provide a detailed survey of this area.
Almost all of these ensemble methods require joint training of the base classifiers, but MLaaS APIs are black box to the users.
Also, while  ensemble methods focus only on improving accuracy, \systemnamesecond{} explicitly considers the cost of each API and enforces a budget constraint.

\textbf{Model cascades:} A series of works 
\cite{Viola01robustreal_time,ModelCascade_NIPS2001,FacePointCascade13,  PedestrianDetectionCascadeCai15, ModelCascade_Linear11, CascadeXu14,Multilabel_Cascade_Segmentation2018,Multilabel_cascade_medical2018,Multilabel_Cascade_Segmentation2018} explores cascades (a sequence of models) to balance the quality and runtime of inference.
Model cascades use a \textit{single} predicted quality score to avoid calling computationally expensive models,
but \systemnamesecond{}' strategies utilize both \emph{quality scores and predicted label sets} to select an  expensive add-on service. While cascades do not explicitly specify inference speed,  \systemnamesecond{} allows users to explicitly incorporate different budget requirements. 
Designing such strategies requires solving a significantly harder optimization problem, e.g., choosing how to divide the available budget between classes (\S\ref{Sec:SFAME:Theory}), but also improves performance substantially over using the quality score alone (\S\ref{Sec:SFAME:Experiment}).

\textbf{AutoML for multi-label classification:} AutoML~\cite{AutoML_earlywork_2013} automates the customization of ML pipelines, including the selection, combination, and parametrization of the learning algorithms. 
There is a rich literature of AutoML techniques for standard single label tasks, and fewer methods on multi-label predictions~\cite{AutoML_Multilabel_Survey2021} (e.g. genetic algorithms~\cite{AutoMLMLC_Genetic_2017} and a neural network-based search scheme ~\cite{AutoML_MLC_NN_2019}).
We refer interested readers to a recent survey~\cite{AutoML_Multilabel_Survey2021} for more details.
Applying AutoML to use multiple ML APIs is underexplored, and \systemnamesecond{} can be viewed as the first AutoML approch designed for automating the selection of multiple mutlti-label ML APIs. 
While most AutoML systems exclusively focus on prediction performance,  \systemnamesecond{}  optimizes accuracy and cost jointly, which is desirable for cost-sensitive API users. 

\textbf{Multiple choice knapsack and integer programming:} Many resource allocation problems can be modeled as 
multiple choice knapsack problem (MCKP)~\cite{knapsackbook1993}, such as keyword bidding \cite{knapsack_application_keywordbid_2008} and quality of service control \cite{Knapsack_application_qos_1999}.
While NP-hard \cite{multipleknapsack1979}, various approximations have been proposed for MCKP, such as branch and bound \cite{knapsackbook1993}, convex hull relaxation \cite{KCknapsackconvexhull2006} and bi-objective transformation \cite{approximateMultichoiceknapsack2018}.
Inherently an integer linear programming (ILP) problem, MCKP can also be tackled by  ILP solvers, motivated by online adwords searching \cite{OnlineAdwordRP}, resource allocation \cite{OnlineRecourseAllocation2019} and general linear programming \cite{FastOnlineLP2020}.  
The service selector of \systemnamesecond{} can be viewed as a MCKP with the same  item cost vector per item group, which we leverage to obtain a customized fast and online solver. Our goal is to not develop novel MCKP solver, but to efficiently adapt ILP methods as a subroutine of our end-to-end \systemnamesecond{} to tackle a practical new application. 

\eat{\textbf{Online resource allocations:}
Online resource allocations have been studied} 

\section{Preliminaries}\label{Sec:SFAME:Preli}

\paragraph{Notation.}
We denote matrices and vectors in bold, and scalars, sets,  and functions in standard script. 
Given a matrix $\mathbf{A} \in \mathbb{R}^{n\times m}$, we let $\mathbf{A}_{i,j}$ denote its entry at location $(i,j)$.
$\mathbbm{1}(\cdot)$ represents the indicator function.


\paragraph{Multi-label classification Tasks.}
Throughout this paper, we focus on multi-label classification  tasks: assigning a label set $Y \subseteq \mathcal{Y}$ to any data point $x \in \mathcal{X}$. 
In contrast to basic supervised learning, in multi-label
learning each data point is  associated with a set of labels instead of a single label.
Many MLaaS APIs target such tasks.
Consider, for example, image tagging, where $\mathcal{X}$ is a set of images and $\mathcal{Y}$ is the set of all tags.
Example label sets could be \{\textit{person}, \textit{car}\} or \{\textit{bag}, \textit{train}, \textit{sky}\}. 

\paragraph{MLaaS Market.}
Consider a MLaaS market consisting of $K$ different ML services for some multi-label  tasks.
For a data point $x$, the $k$th service returns to the user a set of labels with their quality scores, denoted by $Y_k(x)\subseteq \mathcal{Y} \times [0,1]$.
For example, one API for multi-label image classification might produce $Y_k(x)=\{(\textit{person}, 0.8), (\textit{car},0.7)\}$, indicating the label \textit{person} with confidence $0.8$ and \textit{car} with confidence $0.7$.
Let the vector $\pmb{c} \in \R^{K}$
denote the unit cost of all services.
E.g., $\mathbf{c}_k=0.01$ means that  users need to pay \$$0.01$ every time they call the $k$th service. 
\eat{
\begin{remark}
The accuracy can be generalized to wider settings.
For example, the subset accuracy $\mathbbm{1}_{Y_k(x)=Y(x)}$ or the  $F1$ score $ \frac{2 |\hat{Y}_k(x) \cap {Y}(x)| }{ |\hat{Y}_k(x)| + |{Y}(x)| }$ can also work.  We pick the standard accuracy  for demonstration purposes.
\end{remark}
}

\eat{\paragraph{Quality Score Enhanced Policy.}
We consider a set of policies based on the quality score given by the base model.
If the quality score is larger than some threshold $\hat{q}$, then the base model is good enough and thus we simply use the base model's prediction.
Otherwise, we call one of the ML services. 
Formally speaking, letting $a_t$ be the action at time $t$, then we have
\begin{equation*}
\begin{split}
a_t =
\begin{cases}
0,& \textit{if } q_{t}\geq \hat{q} \\
\hat{a}_t \in A,              & \text{otherwise}
\end{cases}
\end{split}
\end{equation*}
where $A=\{1,2,\cdots, K\}$ denotes all the ML services. 
Note that $a_t$ is uniquely determined by $\hat{q}_t$ and $\hat{a}_t$.
This can be viewed as a cascading of the base model and the ML services. 
We call such policies \textit{quality score enhanced policies}.
}

\eat{
\paragraph{Budget-limited Accuracy Maximization.}
Now we are ready to describe the budget-limited accuracy maximization problem.
Given a  data point $x$, we choose one ML service $a(x)$ to make a prediction and receive a reward $r_{a(x)}$.
Our goal is to choose action $a(x)$ to maximize the expected accuracy, such that the expected  cost is bounded by a given budget $B$.
That is, 
\begin{equation*}
\begin{split}
\max_{a(x)} \textit{} &   \Exp_{x \sim D_x} \left[ r_{a(x)} \right] \\
s.t. &  \Exp_{x \sim D_x} c_{a(x)} \leq B
\end{split}
\end{equation*}
where $c_0\triangleq 0$.
Note that an important challenge is that the reward is not known before an ML servicce is called.}
\begin{figure*}[t]
	\centering
	\includegraphics[width=1.0\linewidth]{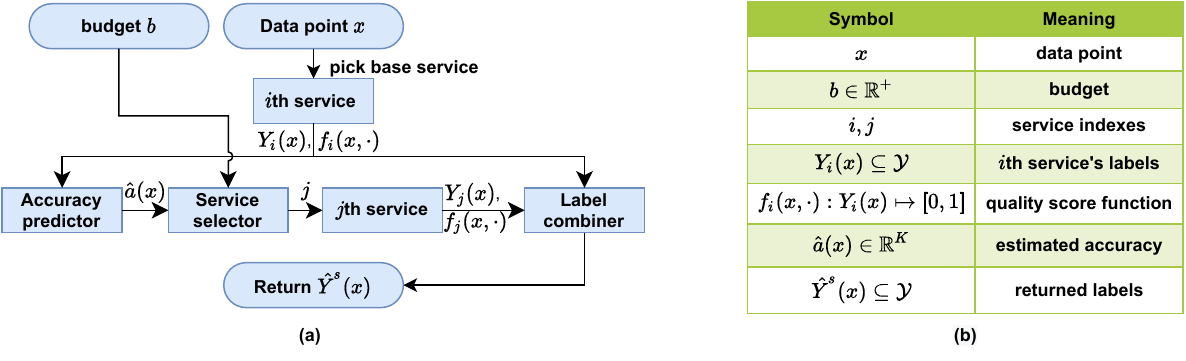}
	\vspace{-8mm}
	\caption{Overview of \systemnamesecond{}. \textbf{(a)} shows how it works: Given a data point, \systemnamesecond{} first invokes a base service. An accuracy predictor then estimates the performance of different APIs. Next, an add-on service is selected based on the predicted accuracy and budget. Finally, the add-on and base services' predictions are combined to return \systemnamesecond{}'s prediction. \textbf{(b)} lists notation. \eat{\james{The font here for symbols look thicker than usual.} \lingjiao{The bold operation on $\hat{a}$ has been removed. Does it look better now?}}
	}
	\label{fig:SFAME:Framework}
\end{figure*}

\section{\systemnamesecond{} Framework}\label{Sec:SFAME:Theory}

In this section, we present \systemnamesecond{}, a framework to adaptively select ML APIs for  multi-label classification tasks within a budget. All proofs are left to the  appendix.
We generalize the scheme in Figure  \ref{fig:SFAME:Example} (a) to $K$ ML services.
As shown in Figure \ref{fig:SFAME:Framework}, \systemnamesecond{} contains three main components: an accuracy estimator, a service selector, and a label combiner.


 
Given a data point $x$, it first calls some base service, denoted by $\textit{base}$, which is one of the $K$ APIs,  and obtains $Y_{\textit{base}}(x)$. Often, \textit{base} is a cheap or free service, such as an inexpensive open source model; we discuss how to choose \textit{base} out of multiple options in Section \ref{sec:SFAME:theory:baseandcombiner}.  Next, an accuracy predictor produces a vector $\hat{\pmb{a}}(x) \in [0,1]^{K}$, whose $k$th value estimates the accuracy of the label set produced by the label combiner using base's and $k$th API's outputs.
The service selector $s(\cdot): \mathcal{X} \mapsto [K]$ then decides if and which \textit{add-on service} needs to be invoked.
Finally, a label combiner generates a label set by combining the predictions from  the base and add-on APIs. 
Take Figure \ref{fig:SFAME:Example} (d) as an example. 
The image is first passed to the GitHub model, which produces \{(\textit{person}, 0.46),(\textit{tennis racket},0.18)\}, by which the accuracy predictor predicts the accuracy of the label set generated by combining each API's output with GitHub model's. 
The service selector then decides to further invoke   Everypixel, which gives \{(\textit{person}, 0.46), (\textit{sports ball}, 0.52)\}. 
Finally, the label combiner uses both APIs' output for the final prediction.

\systemnamesecond{} allows users to customize the accuracy predictor and the label combiner, depending on the  applications.
For example, for the image tagging problem, one might use image features (e.g., brightness and contrast) to build the accuracy predictor, while word embeddings can be more useful for named entity recognition. In the following sections, we explain the key  of accuracy predictor, API selector and the label combiner in more detail.

\subsection{Accuracy prediction} 
The accuracy predictor $\hat{\pmb{a}}(\cdot)$ can be obtained by two steps. The first step is to generate a feature vector for every data point in the training dataset $\mathbb{X}^{Tr} \triangleq \{x_1^{Tr}, x_2^{Tr}, \cdots,  x^{Tr}_{N^{Tr}}\}$. Generally the feature vector can be any embedding of the data point $x$ and base service prediction $Y_{\textit{base}}(x)$.
In this paper we adopt a simple approach: if the label set $\mathcal{Y}$ is bounded, a $|\mathcal{Y}|$ dimensional vector is generated using one hot encoding on $Y_{\textit{base}}(x)$. For example, given $\mathcal{Y}=\{\textit{person}, \textit{car}, \textit{bike}\}$ and $Y_{\textit{base}}(x)=\{(\textit{person}, 0.8), (\textit{car}, 0.7)\}$, the generated feature vector is $[0.8,0.7,0]$.
For unbounded $\mathcal{Y}$, word embedding is used to generate a vector for every predicted label, and the sum of them (weighted by their quality values) becomes the corresponding feature vector.

The next step is to train the accuracy predictor. 
For each $x_n^{Tr} \in \mathbb{X}^{Tr}$, as its true label sets and prediction from each API are available, we can construct its true accuracy vector $\pmb a(x_n^{Tr}) \in [0,1]^K$, whose $k$th element is the accuracy of the label produced by the label combiner using base and $k$th service predictions. Then we can train some regressor (e.g., random forest) to map the feature vector to the accuracy vector.  We use standard multi-label accuracy\footnote{$\frac{\|Y \cap Y'\|}{ \|Y \cup Y' \|}$ where $Y$/$Y'$ is the  true/predicted label set. } \cite{Review_Ensemble_Zhang2014} as a concrete metric. 
 \systemnamesecond{} can as easily use another metric  such as F1-score, precision or subset accuracy.  
\vspace{-2mm}
\subsection{The API selection problem}
A core subroutine of \systemnamesecond{} is  the API selector $s$: given a budget $b$ and the estimated accuracy $\hat{\pmb a}(x)$, which service should be invoked?
Let $\mathbb{X}\triangleq\{x_1,x_2,\cdots, x_N\}$ be the entire unlabeled dataset to be classified, and
$S\triangleq\{1,2,\cdots, K\}^{\mathbb{X}}$ be the set of all functions mapping each data point in $\mathbb{X}$ to an API. Let \textit{base} be the index of the base service.
 For any $s\in S$, $s(x)=\textit{base}$ implies no add-on API is needed, and $s(x)=k\not=\textit{base}$ implies $k$th API is invoked.
Our goal is to find some  $s \in S$  to maximize the estimated accuracy while satisfying the budget constraint,  formally stated as below. 
\begin{definition}
Let $\pmb Z^*_{n,k}$ be the optimal solution to the budget aware API selection problem 
\begin{equation}\label{prob:SFAME:optimaldefinition}
    \begin{split}
        \max_{\pmb Z \in \R^{N\times K}:} &  \frac{1}{N} \sum_{n=1}^{N} \sum_{k=1}^{K} \pmb{Z}_{n,k} \pmb {\hat{a}}_{k}(x_n)\\
        s.t. \textit{ } & \frac{1}{N} \sum_{n=1}^{N} \sum_{k=1,k\not=\textit{base}}^{K} \pmb Z_{n,k} {\pmb c}_{k} + \pmb c_{\textit{base}} \leq  b;\\  
        & \sum_{k=1}^{K} \pmb{Z}_{n,k} = 1,  \forall n;      \pmb{Z}_{n,k} \in \{0,1\}, \forall n, k
    \end{split}
\end{equation}
Then the optimal \systemnamesecond{} strategy is given by $s^*(x_n) \triangleq \arg \max_k \pmb Z^*_{n,k}$.

\end{definition}
Here, the objective quantifies the average accuracy, the first constraint models the budget requirement, and the last two constraints enforces only one add-on API is picked for each data point.  
Base service is needed for every data point and thus its cost
$\pmb c_{base}$ appears for every $n$ in the budget constraint.
Note that Problem \ref{prob:SFAME:optimaldefinition} is a MCKP (and thus integer linear program) and  NP-hard in general.

\eat{Given a user budget $b$, the  \systemnamesecond{} service selection strategy $s^*$ is the optimal solution to the following optimization problem
\begin{equation}\label{prob:SFAME:optimaldefinition}
\begin{split}
     \max_{s\in S}  &\frac{1}{N} \sum_{n=1}^{N}  {r}^s(x_n) \\ 
    \textit{  s.t. } & \frac{1}{N} \sum_{n=1}^{N} \eta^{[s]}(x_n,\pmb{c}) \leq b, 
\end{split}
\end{equation}
where $x_1, x_2,\cdots, x_N$ are the data to call ML APIs, $S$ is all possible selection strategies, $r^s(x) \triangleq \hat{\pmb a}_{s_{(x)}}(x) $ is the reward,  and $\eta^{[s]}(x,\pmb{c})$  is the total cost of strategy $s$ on $x$.

\begin{remark}
The strategy space $S\triangleq \{1,2,\cdots, K\}^N$ represents all possible API selections for each data point. The cost of strategy $s$, $\eta^{[s]}(x,\pmb{c})$, is the sum of all services called on $x$. For example, if service $1$ and $2$ are called for predicting $x$, then $\eta^{[s]}(x,\pmb{c})$ becomes $\mathbf{c}_1+\pmb{c}_2$.
\end{remark}
}

\eat{Interestingly, the optimal solution to the integer linear program and that to the relaxed version are the same except for at most 1 element. To see this,

Intuitively, this is because of the special constraint structure. 
If there is no budget constraint, then its optimal solution should simply be an integer solution as the corner points formed by all other constraints are all integer points.
Adding one budget constraint should bring in only a few fractional elements in the corner points.
Let $r^s(x) \triangleq \hat{\pmb a}_{s{(x)}}(x) $ be the accuracy produced by strategy $s$ on the data point $x$.
Then the above lemma implies that the mean accuracy achieved by $s^{*,LP}$ is close to that by $s^*$, stated formally as follows.
}

\subsection{An online algorithm for \systemnamesecond{}}
In many time-sensitive applications, the input data $x_n$ (as well as the accuracy vector $\hat{\pmb a}(x_n)$) comes sequentially, and the API needs to be selected before observing the future data.
The selection process also needs to be fast.

To tackle this challenge, we present an efficient online algorithm, which requires $O(K)$ computations per round and gives a provably near-optimal solution.
The key idea is to explicitly balance between accuracy and cost  at every iteration.
Specifically, for a given data point $x_n$ and $p\in \R$, let us define a strategy 
${s}^p(x_n) \triangleq {\arg \max_k \hat{\pmb a}_k(x_n) - p  {\pmb c}_k}\mathbbm{1}_{k\not=\textit{base}}$ and break ties by picking $k$ with smallest cost.
Here, $p$ is a parameter to balance between accuracy $\hat{\pmb a}(x_n)$ and cost ${\pmb c}$. When $p=0$, ${s}^{p}(x_n)$ selects  the API with highest estimated accuracy. When $p$ is large enough 
$ {s}^p(x_n)$ enforces to pick the base API. 
In fact, larger value of $p$ implies more weights on cost and smaller $p$ favors more the accuracy.
Let $r(s)\triangleq \frac{1}{N}\sum_{n=1}^{N} \hat{\pmb a}_{s(x_n)}(x_n)$ denote the average accuracy achieved by a strategy $s$. 
We can show, interestingly, an appropriate choice of $p$ leads to small average accuracy loss. 
\begin{theorem}\label{thm:SFAME:dualbound}
Assume the probability density of $\hat{\pmb a}(x)$ is a continuous function on $[0,1]^K$.  Then with probability 1,
 there exists $p^*$ such that $ {s}^{p^*}$ satisfies budget constraint, and $ {r}({{s}^{p^*}}) \geq  {r}(s^*) - \frac{1}{N}$.
\end{theorem}

In words, ${s}^{p^*}(x_n)$ gives a solution to the API selection problem with accuracy loss at most $\frac{1}{N}$.
In practice, $\hat{\pmb a}(x)$ is continuous for standard ML models of accuracy predictors (e.g., logistic regressors) and thus the assumption holds.  
In addition, it is computationally efficient: at iteration $n$, it only requires computing $ \hat{\pmb a}_k(x_n) - p  \pmb c_k \mathbbm{1}_{k\not=\textit{base}}$ for $k=1,2,\cdots, K$, which takes only $O(K)$ computations.

The remaining question is  how to obtain $p^*$. 
As we cannot see the future data to compute $p^*$, a natural idea is to estimate it using the training dataset. More precisely, given the training dataset $\{x^{Tr}_1, x^{Tr}_2,\cdots, x^{Tr}_{N^{Tr}}\}$, 
let $\hat{p}, \hat{\pmb{q}}$ be the optimal solution to the following problem 
\begin{equation}\label{prob:SFAME:APISelectionTrainingDual}
    \begin{split}
        \min_{p, \pmb q } &  (1-\delta) (b-\pmb c_{\textit{base}}) p +   \sum_{n=1}^{N^{\mathit{Tr}}} \pmb q_n,\\
        \textit{ } s.t. \textit{ } &  \frac{ {\pmb c}_k \cdot \mathbbm{1}_{k\not=\textit{base}} \cdot p}{N^{\mathit{Tr}}} + \pmb q_n \geq  \frac{\hat{\pmb a}_k(x_n^{\mathit{Tr}})}{N^{\mathit{Tr}}}, \forall n, k         \\
&        p\geq 0, \pmb q\in \R^{N^{Tr}}, \pmb q\geq 0 \\
    \end{split}
\end{equation}
where $\delta \in (0,1)$ is a small buffer to ensure that we don't exceed the budget (in practice we set $\delta \leq 0.01$).
Technically, Problem \ref{prob:SFAME:APISelectionTrainingDual} is the dual problem to the linear programming by relaxing the integer constraint in  Problem \ref{prob:SFAME:optimaldefinition} on the training dataset with budget $(1-\delta)b$, and $\hat{p}$ corresponds to the near-optimal strategy for the training dataset.
If the training and testing datasets are from the same distribution, then a small $\delta$ can  ensure with high probability, $\hat{p}$ is slightly less than $p^*$ and thus $s^{\hat{p}}$ satisfies the budget constraint.
Given $\hat{p}$, one can use ${s}^{\hat{p}}$ to select the APIs in an online fashion.
The details are given in Algorithm \ref{Alg:SFAME:OnlineAlg}. 

\begin{algorithm}
\caption{\systemnamesecond{}{} Online API Selection Algorithm.}
	\label{Alg:SFAME:OnlineAlg}
	\SetKwInOut{Input}{Input}
	\SetKwInOut{Output}{Output}
	\Input{${\pmb c}, b, \{x_1^{Tr},x_2^{Tr},\cdots, x_{N^{Tr}}^{Tr}\}, \{x_1, x_2,\cdots, x_N\}$}
	\Output{\systemnamesecond{} online API selector $s^o(\cdot)$}  
	
  \begin{algorithmic}[1]
  	\STATE Compute $\hat{p}$ by solving Problem \ref{prob:SFAME:APISelectionTrainingDual} and set $b^r=N (b-\pmb c_{base})$.
 
\STATE At iteration $n=1,2,\cdots, N$:

\STATE \textit{  } $s^o(x_n) =  \begin{cases}  { s}^{\hat{p}}(x_n) &\mbox{if } b^r - {\pmb c}_{{ s}^{\hat{p}}(x_n)} \geq 0 \\
\textit{base} & o/w \end{cases} $

\STATE \textit{  }  $b^r = b^r - {\pmb c}_{{ s}^{\hat{p}}(x_n)} \mathbbm{1}_{{ s}^{\hat{p}}(x_n)\not=\textit{base}} $ 

  \end{algorithmic}
\end{algorithm}

Here, $b^r$ is used to ensure the generated solution is always feasible. 
The following theorem gives the performance guarantee of the online solution.

\begin{theorem}\label{thm:SFAME:onlinebound}
If $\delta = \Theta\left(\sqrt{\frac{\log N/ \epsilon}{N}} + \sqrt{\frac{\log N^{Tr}/ \epsilon}{N^{Tr}}}\right)$ and the probability density of $\hat{\pmb a}(x)$ is a continuous function on $[0,1]^K$, then $s^o$ satisfies the budget constraint, and  with probability at least $1-\epsilon$, 
$  {r}({s}^{o}) \geq r(s^*) - O\left(\sqrt{\frac{\log N^{}/\epsilon }{N^{}}} + \sqrt{\frac{\log N^{Tr}/ \epsilon }{N^{Tr}}} \right)$. 
\end{theorem}
Roughly speaking,  ${s}^{o}$ leads to an accuracy loss at most $O\left(\sqrt{\frac{\log N}{N}}+\sqrt{\frac{\log N^{Tr}}{N^{Tr}}} \right)$ compared to the optimal offline strategy.
For large training and testing datasets, such an accuracy loss is often negligible, which is also verified by our experiments on real world datasets. 



\subsection{Base service selection and label combination}\label{sec:SFAME:theory:baseandcombiner}
Now we describe how the base service is selected and how the label combiner works. 
The base service can be picked by an offline searching process. 
More precisely, for each possible base service, we train a \systemnamesecond{} strategy and evaluate its performance on a validation dataset, and pick the base service corresponding to the highest performance.

The label combiner contains two phases. First, a new label set associated with its quality function is produced. The label set is simply the union of that from the base service and add-on service. The quality score is a weighted sum of the score from both APIs, controlled by  a hyperparameter  $w$.
For example, suppose the base predicts $\{(\textit{person},0.8), (\textit{car}, 0.7)\}$ and the add-on predicts $\{(\textit{car}, 0.5), (\textit{bike},0.4)\}$. Given $w=0.3$,  new confidence for \textit{person} is $0.3\times 0.8=0.24$, for \textit{car} is $0.3 \times 0.7 + 0.7\times 0.5=0.46$, and for \textit{bike} is $0.7 \times 0.4=0.28$.
Thus the combined set is $\{(\textit{person},0.24), (\textit{car},0.46), (\textit{bike},0.28)\}$.
Next, a threshold $\theta$ is applied to remove labels with low confidence.
For example, given $\theta=0.25$, the label $person$ would be removed, and the final predicted label set becomes $\{car, bike\}$.
The parameters $w$ and $\theta$ are global hyperparameters for each dataset, and can be obtained by an efficient searching algorithm to maximize the overall performance. 
The details are left to Appendix \ref{sec:SFAME:techdetails}.  

\section{Experiments}\label{Sec:SFAME:Experiment}
We compare the accuracy and incurred costs of \systemnamesecond{} to that of real world ML services for various tasks.
Our goal is to (i) understand when and why \systemnamesecond{} can reduce cost without hurting accuracy,  (ii) investigate the trade-offs between accuracy and cost achieved by \systemnamesecond{}, and (iii) assess the effect of training data size and accuracy predictors on \systemnamesecond{}'s performance. 

\paragraph{Tasks,  ML Services, and Datasets.} 
We focus on three common ML tasks in different domains: multi-label image classification (\textit{MIC}), scene text recognition   (\textit{STR}), and named entity recognition (\textit{NER}). 
\textit{MIC} aims at obtaining all keywords associated with an image, \textit{STR} seeks to recognize all texts in an image, and \textit{NER} desires to extract all entities in a text paragraph. 
The ML services used for each task and  their prices are summarized in Table \ref{tab:SFAME:MLservice}. 
For each task we use three datasets, summarized in Table \ref{tab:SFAME:DatasetStats}. More details can be found in Appendix \ref{sec:SFAME:experimentdetails}.

\begin{table}[t]
  \centering
  \small
  \caption{ML services used for each task. Price unit: USD/10,000 queries. A publicly available (and thus free) GitHub model is also used per task: a single shot detector (SSD)
	\cite{SSD_MIC_github} pretrained  on Open Images V4 \cite{OpenImagesV4IJCV2020} for \textit{MIC},  a  convolutional recurrent neural network (PP-OCR) \cite{PaddleOCR_github} pretrained on an industrial dataset \cite{PaddleOCRArxiv2020} for \textit{STR}, and a convolutional neural network (spaCy \cite{Spacy_github}) pretrained on OntoNotes \cite{Dataset_OntoNotes} for \textit{NER}. 
	}
    \begin{tabular}{|c||c|c|c|c|}
    \hline
    Task  & ML Service & Price & ML Service & Price \bigstrut\\
    \hline
    \hline    
    \multirow{2}[4]{*}{MIC} & SSD \cite{SSD_MIC_github} & <0.01 & Everypixel  \cite{EverypixelAPI} & 6 \bigstrut\\
\cline{2-5}          & Microsoft \cite{MicrosoftAPI}& 10    & Google \cite{GoogleAPI} & 15 \bigstrut\\
    \hline
    \multirow{2}[4]{*}{STR} & PP-OCR  \cite{PaddleOCR_github} & <0.01  & Google \cite{GoogleAPI} &  15 \bigstrut\\
\cline{2-5}          & iFLYTEK \cite{IflytekAPI} & 50    & Tencent \cite{TencentAPI} &  210 \bigstrut\\
    \hline
    \multirow{2}[4]{*}{NER} & spaCy \cite{Spacy_github} & <0.01 & Amazon \cite{AmazonAPI}& 3 \bigstrut\\
\cline{2-5}          & Google \cite{GoNLPAPI} & 10    & IBM \cite{IBMNLPAPI}  & 30 \bigstrut\\
    \hline
    \end{tabular}
\label{tab:SFAME:MLservice}
\end{table}%

\begin{table}[htbp]
  \centering
  \small
  \caption{Dataset Statistics.}
    \begin{tabular}{|c||c|c|c|c|}
    \hline
    Task  & Dataset & Size  & \# Labels & Dist Labels \bigstrut\\
    \hline
    \hline
    \multirow{3}[6]{*}{MIC} & PASCAL  & 11540 & 16682 & 20 \bigstrut\\
\cline{2-5}          & MIR  & 25000 & 92909 & 24 \bigstrut\\
\cline{2-5}          & COCO  & 123287 & 357662 & 80 \bigstrut\\
    \hline
    \multirow{3}[6]{*}{STR} & MTWI  & 9742  & 867727 & 4404 \bigstrut\\
\cline{2-5}          & ReCTS & 20000 & 555286 & 4134 \bigstrut\\
\cline{2-5}          & LSVT   & 30000 & 1878682 & 4852 \bigstrut\\
    \hline
    \multirow{3}[6]{*}{NER} & CONLL & 10898 & 43968 & 9910 \bigstrut\\
\cline{2-5}          & ZHNER & 16915 & 147164 & 4375 \bigstrut\\
\cline{2-5}          & GMB  & 47830 & 116225 & 14376 \bigstrut\\
    \hline
    \end{tabular}%
  \label{tab:SFAME:DatasetStats}%
\end{table}%

\paragraph{Accuracy Predictors.} Except when explicitly noted, we use a random forest regressor as the accuracy predictor for all the datasets. 
For \textit{MIC} and \textit{STR} datasets, we map each possible label to an index, and create a feature vector whose $k$th element is base service's  quality score for the label corresponding to $k$. If a label is not predicted,  the corresponding value is 0.  
For \textit{NER} datasets, we map each predicted label to a 96-dimensional vector using a word embedding from spaCy \cite{Spacy_github}, and then use the sum weighted by their corresponding quality scores as the feature vector. 
The accuracy predictor is then trained on half of the datasets using the feature vectors generated as above. Interestingly, we found we are able to accurately predict which commercial API is best for each instance using relatively simple features. This makes the approach more broadly applicable. 
We will study the effects of accuracy predictors later in this section. 

\eat{\james{Flesh out how good the accuracy predictor is. We can emphasize that while the method is simple, this is actually an interesting finding that: 1) commercial APIs have very heterogeneous performance and 2) we can accurately predict which commercial API is best for each instance using easy features.}
\lingjiao{Done. Also added a sub-title \textbf{effects of accuracy predictors} for a later discussion.}
}

	\paragraph{Multi-label Image Classification: A Case Study.}
Let us start with multi-label image classification on the COCO dataset~\cite{Dataset_COCO_2014}.
We set budget $b=6$, the price of Everypixel, the cheapest commercial API (except the open source model from GitHub). 
For comparison, we also use the average quality score over all predicted labels  as the confidence score and adapt FrugalML \cite{FrugalML2020} with the same budget ($=6$) as another baseline .

\begin{figure}[t]
	\centering
	\includegraphics[width=1.0\linewidth]{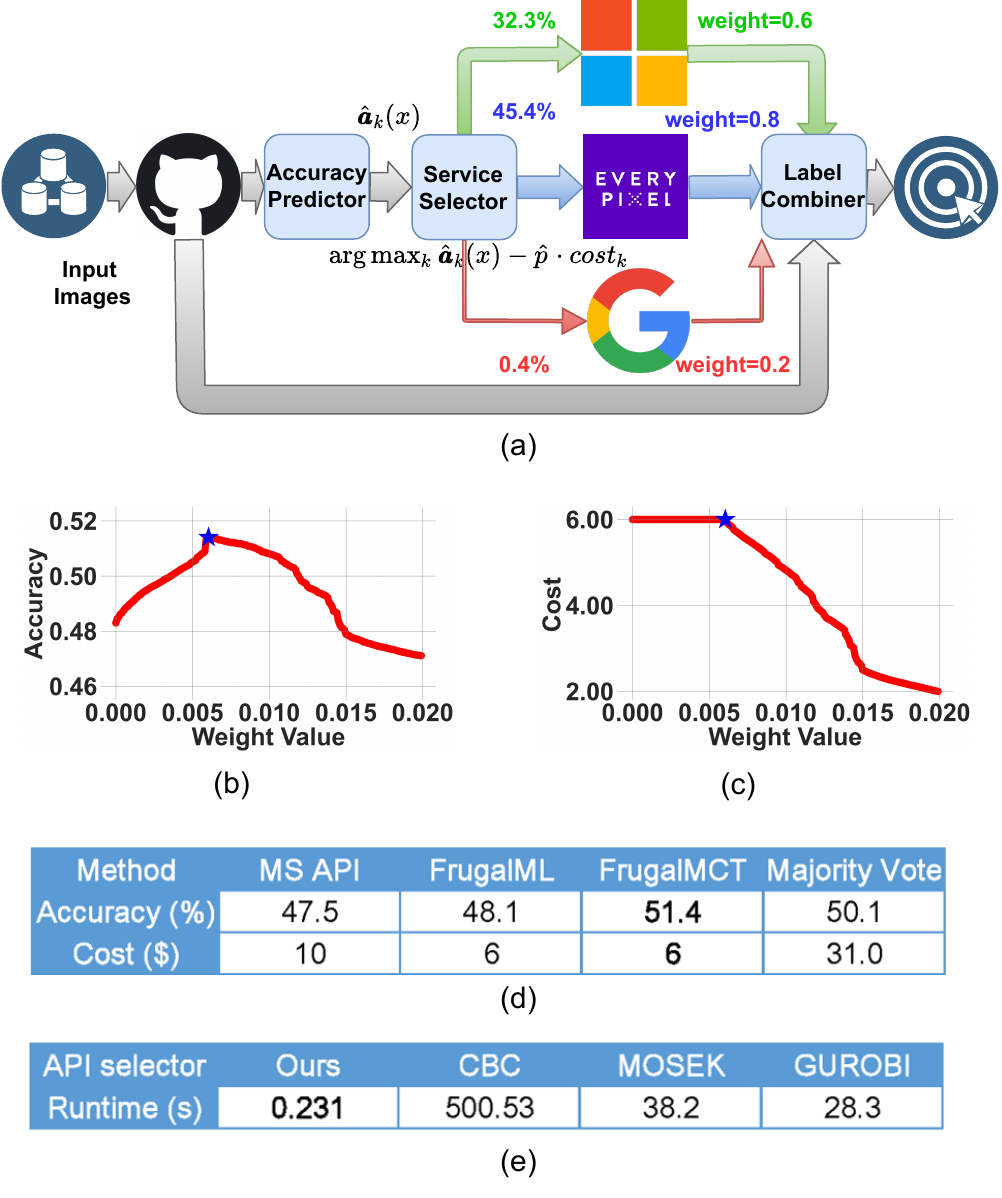}
	\vspace{-4mm}
	\small
	\caption{A \systemnamesecond{} strategy learned  on the dataset COCO. \textbf{(a)} shows that \systemnamesecond{} reduces cost by mostly calling the Everypixel API (45.4\%) or the GitHub API (22.1\%) only. \textbf{(b)} and \textbf{(c)} show how the accuracy and cost vary with weight  $p$. The blue point corresponds to $0.006$, the learned $\hat{p}$. (d) shows the accuracy and cost of \systemnamesecond{}, \systemname{}, Microsoft API, and majority vote. (e) gives the runtime performance of our (online) API selector and three commercial ILP solvers.}
	\label{fig:SFAME:casestudyflow}
	\end{figure}

Figure \ref{fig:SFAME:casestudyflow} demonstrates the learned  \systemnamesecond{} strategy.
As shown in Figure \ref{fig:SFAME:casestudyflow} (a), the learned \systemnamesecond{} reduces the cost by mostly using the Everypixel API (45\%, 6\$) and occasionally calling Microsoft API (32\%, 10\$), and rarely invoking the Google API (0.4\%, 15\$). 
 Note that its performance depends on the threshold value $\hat{p}$.
As shown in Figure  \ref{fig:SFAME:casestudyflow} (b) and (c), for small thresholds, \systemnamesecond{} tends to call the more accurate and expensive APIs.
However, it runs out of budget quickly, and for many data points only base service can be used, leading to low accuracy.
For large thresholds, \systemnamesecond{} tends to call cheaper but less accurate APIs, failing to fully use the budget and thus causing low accuracy too.
The $\hat{p}$ value learned by \systemnamesecond{} (blue point in Figure \ref{fig:SFAME:casestudyflow} (b) and (c)) produces the optimal accuracy given the budget.
\eat{
\begin{table}[t]
\begin{minipage}{.45\linewidth}
    \centering
    \small
    \caption{Accuracy and cost comparison on coco.}
    \label{tab:SFAME:casestudycompare}
    \medskip
    \small
    \begin{tabular}{|c||c|c|c|c|c|}
    \hline
          & MS  & FrugalML & Ours & Maj \bigstrut\\
    \hline
    \hline
    Acc & 0.475 & 0.481 & \textbf{0.514} & 0.501   \bigstrut\\
    \hline
    Cost  & 10    & 6     & 6     & 31.01 \bigstrut\\
    \hline
    \end{tabular}%
\end{minipage}\hfill
\begin{minipage}{.52\linewidth}
    \centering

    \caption{Runtime comparison of FrugalMCT and three ILP solvers. CBC, MOSEK and GUROBI.}
    \label{tab:second_table}

    \medskip
    \small
    \begin{tabular}{|c||c|c|c|c|}
    \hline
          & Ours  & CBC   & MOSEK & GUROBI \bigstrut\\
    \hline
    \hline
    Acc   & 0.514 & 0.514 & 0.514 & 0.511 \bigstrut\\
    \hline
    Time (s) & 0.231 & 500.53 & 38.2  & 28.3 \bigstrut\\
    \hline
    \end{tabular}%
\end{minipage}
\end{table}
}
Figure \ref{fig:SFAME:casestudyflow} (d) shows that \systemnamesecond{}'s accuracy (0.514) is higher than that of the best ML service (MS, 0.475) and majority vote (Maj 0.501), while its cost is much lower.
This is primarily because \systemnamesecond{} learns when the cheaper APIs perform better and call them aptly.
\systemnamesecond{} also outperforms FrugalML by exploiting the label combination. 
This is due to (i) that \systemname{} cannot utilize the label information due to explosion of complexity, and (ii) that the label combiner in \systemnamesecond{} gives higher accuracy than both the base and add-on APIs. 

To understand the efficiency of \systemnamesecond{}'s API selector, we compare it with three ILP solvers,  namely, CBC, MOSEK, and GUROBI.  
CBC~\cite{forrest2005cbc} is an integer linear programming package developed based on cutting and branch. 
MOSEK~\cite{andersen2000mosek} was originally developed for sparse programming and then extended for general mixed integer programming. 
On the other hand, the focus of GUROBI~\cite{bixby2007gurobi} is parallelism optimization in integer programming. 
As shown in Figure \ref{fig:SFAME:casestudyflow} (e), the API selector of \systemnamesecond{} (Alg.~\ref{Alg:SFAME:OnlineAlg}) is several orders of magnitude faster than those commercial ILP solvers. This is beause it leverages the specific structure of Problem \ref{prob:SFAME:optimaldefinition}.
\begin{table}[htbp]
  \centering
  \small
  \caption{End-to-end runtime comparison on COCO.}
    \begin{tabular}{|c||c|c|c|}
    \hline
    Runtime & \systemnamesecond{} & \systemname{} & Majority Vote \bigstrut\\
    \hline
    \hline
    Training & 60s   & 6627s & N/A \bigstrut\\
    \hline
    Inference & 1.25s & 1.24s & 1.92s \bigstrut\\
    \hline
    \end{tabular}%
  \label{tab:SFAME:Runtime}%
\end{table}%

The end-to-end runtime comparison of \systemnamesecond{} with \systemname{} and an ensemble approach (majority vote) is given in Table \ref{tab:SFAME:Runtime} . 
Majority vote does not need training, but its inference time is high due to calling all ML APIs.
\systemnamesecond{} enjoys a similar inference time with \systemname{} but a 100x smaller training time.

\begin{table}[htbp]
  \centering
  \small
  \caption{Cost savings achieved by \systemnamesecond{} that reaches same accuracy as the best commercial API. On average the cost saving across the evaluated datasets is 73\%.}
    \begin{tabular}{|c|c||c|c|c|c|}
    \hline
    Task  & Dataset & Acc (\%) & Best API \$ & Our \$ & \multicolumn{1}{c|}{Save} \bigstrut\\
    \hline
    \hline
    \multirow{3}[6]{*}{MIC} & PASCAL & 74.8  & 10    & 1.4   & 86\% \bigstrut\\
\cline{2-6}          & MIR   & 41.2  & 10    & 4.2   & 58\% \bigstrut\\
\cline{2-6}          & COCO  & 47.5  & 10    & 3     & 70\% \bigstrut\\
    \hline
    \multirow{3}[6]{*}{STR} & MTWI  & 67.9  & 210   & 30    & 86\% \bigstrut\\
\cline{2-6}          & ReCTS & 61.3  & 210   & 78    & 63\% \bigstrut\\
\cline{2-6}          & LSVT  & 53.8  & 210   & 67    & 68\% \bigstrut\\
    \hline
    \multirow{3}[6]{*}{NER} & CONLL & 52.6  & 3     & 1.5   & 50\% \bigstrut\\
\cline{2-6}          & ZHNER & 61.3  & 30    & 0.7   & 98\% \bigstrut\\
\cline{2-6}          & GMB   & 50.1  & 30    & 4.1   & 80\% \bigstrut\\
    \hline
    \end{tabular}%
  \label{tab:SFAME:costsaving}%
\end{table}%

\begin{figure*}[t]
\centering     
\begin{subfigure}[PASCAL]{\label{fig:f}\includegraphics[width=0.32\linewidth]{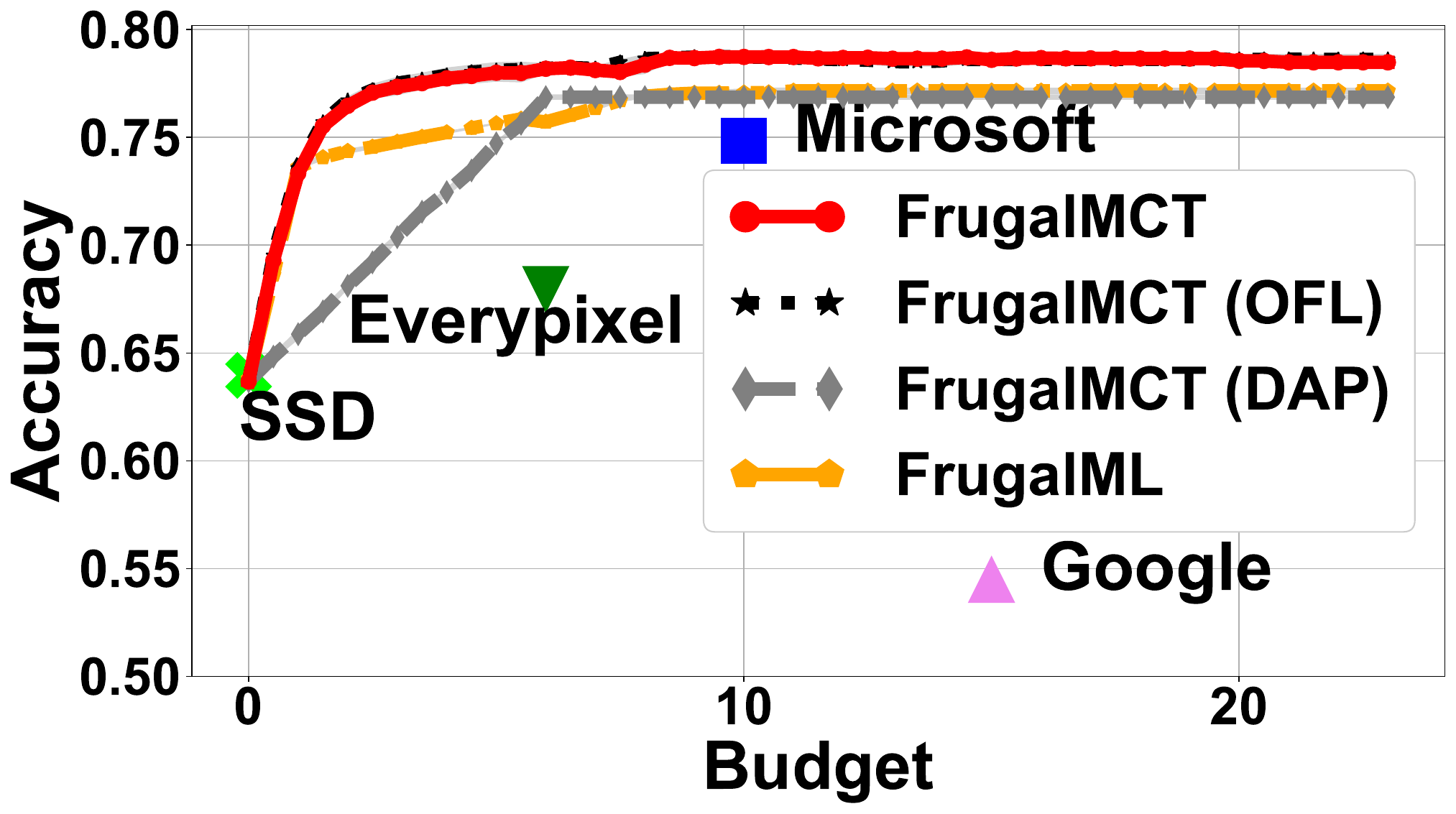}}
\end{subfigure}
\begin{subfigure}[MIR]{\label{fig:g}\includegraphics[width=0.32\linewidth]{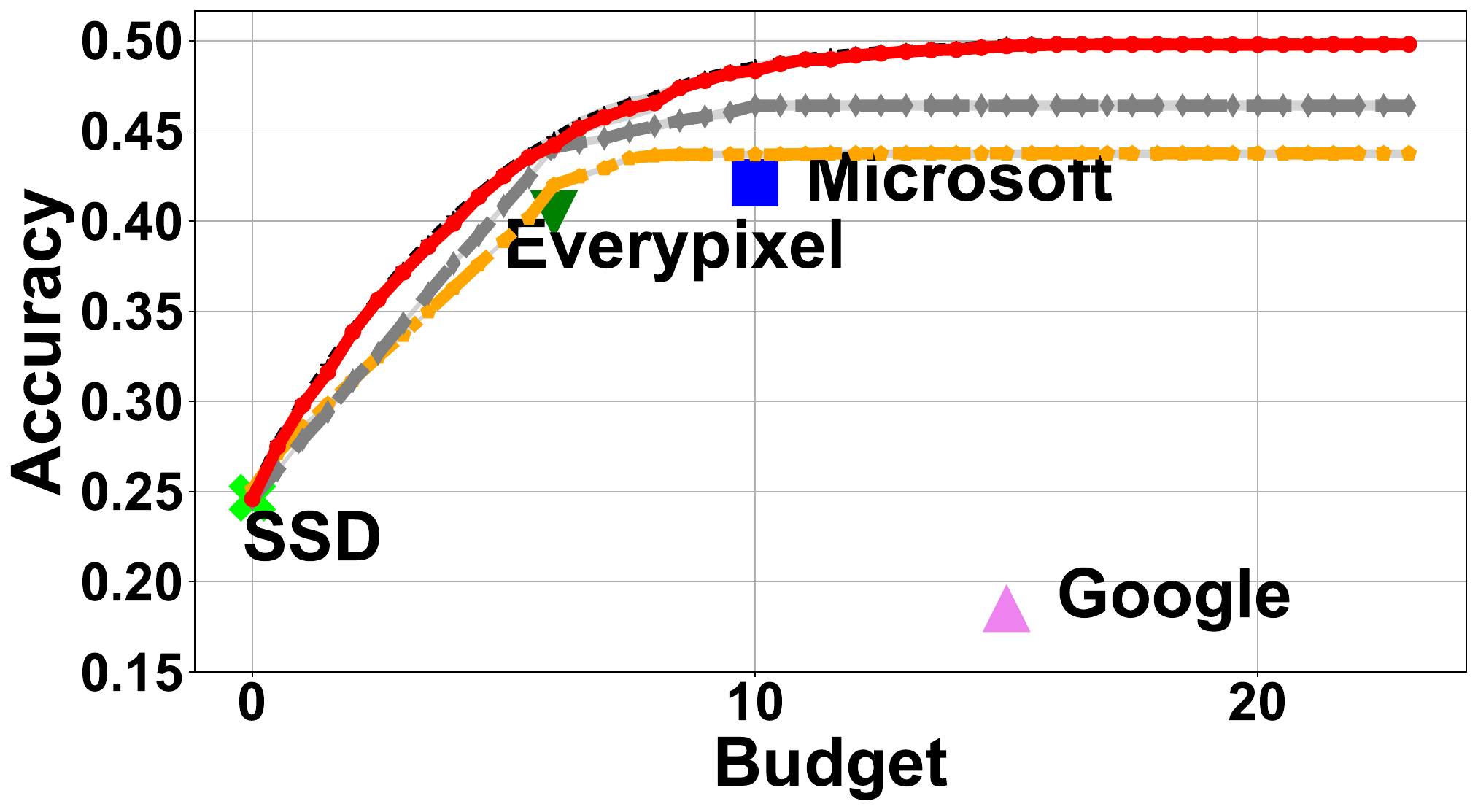}}
\end{subfigure}
\begin{subfigure}[COCO]{\label{fig:e}\includegraphics[width=0.32\linewidth]{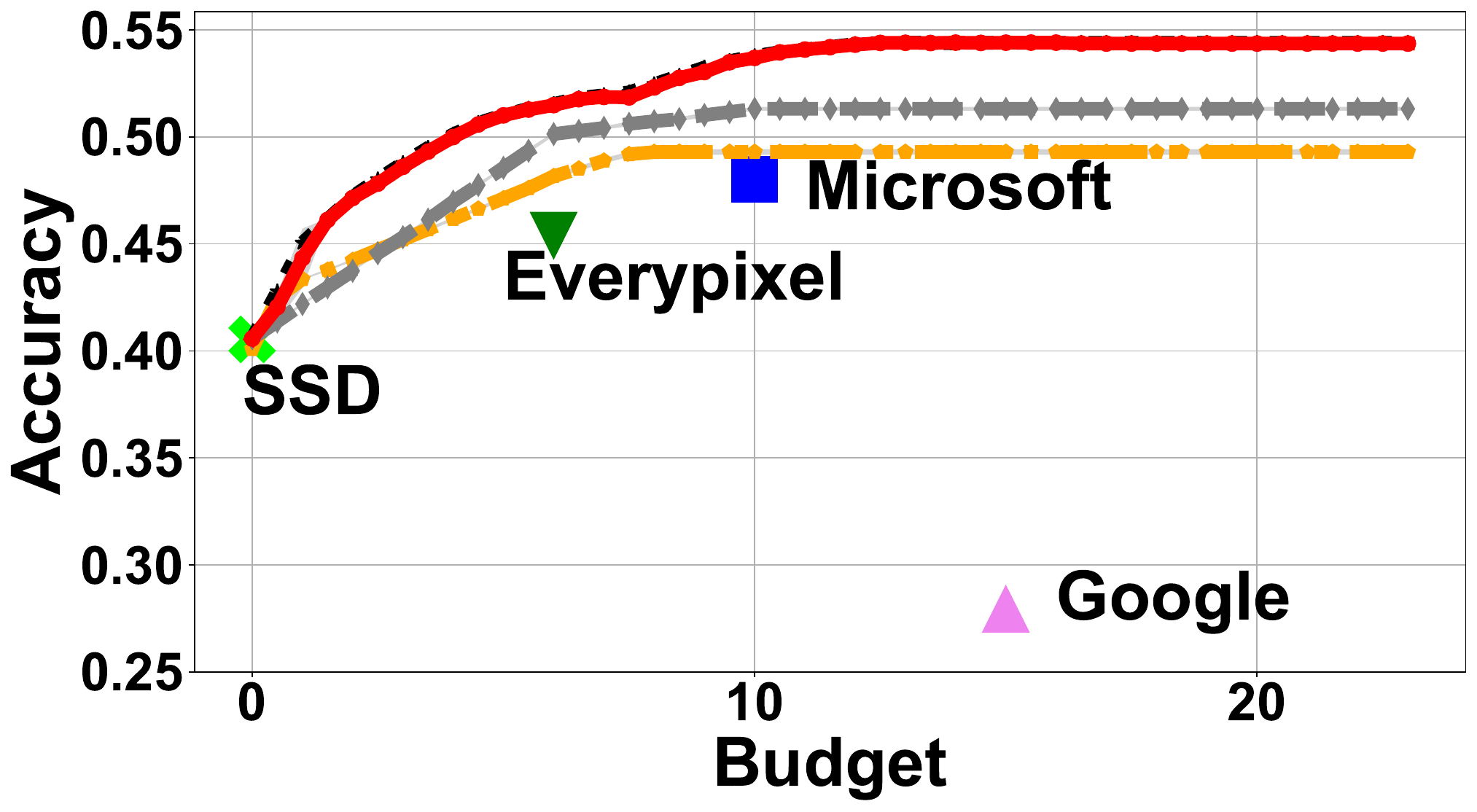}}
\end{subfigure}

\begin{subfigure}[MTWI]{\label{fig:d}\includegraphics[width=0.32\linewidth]{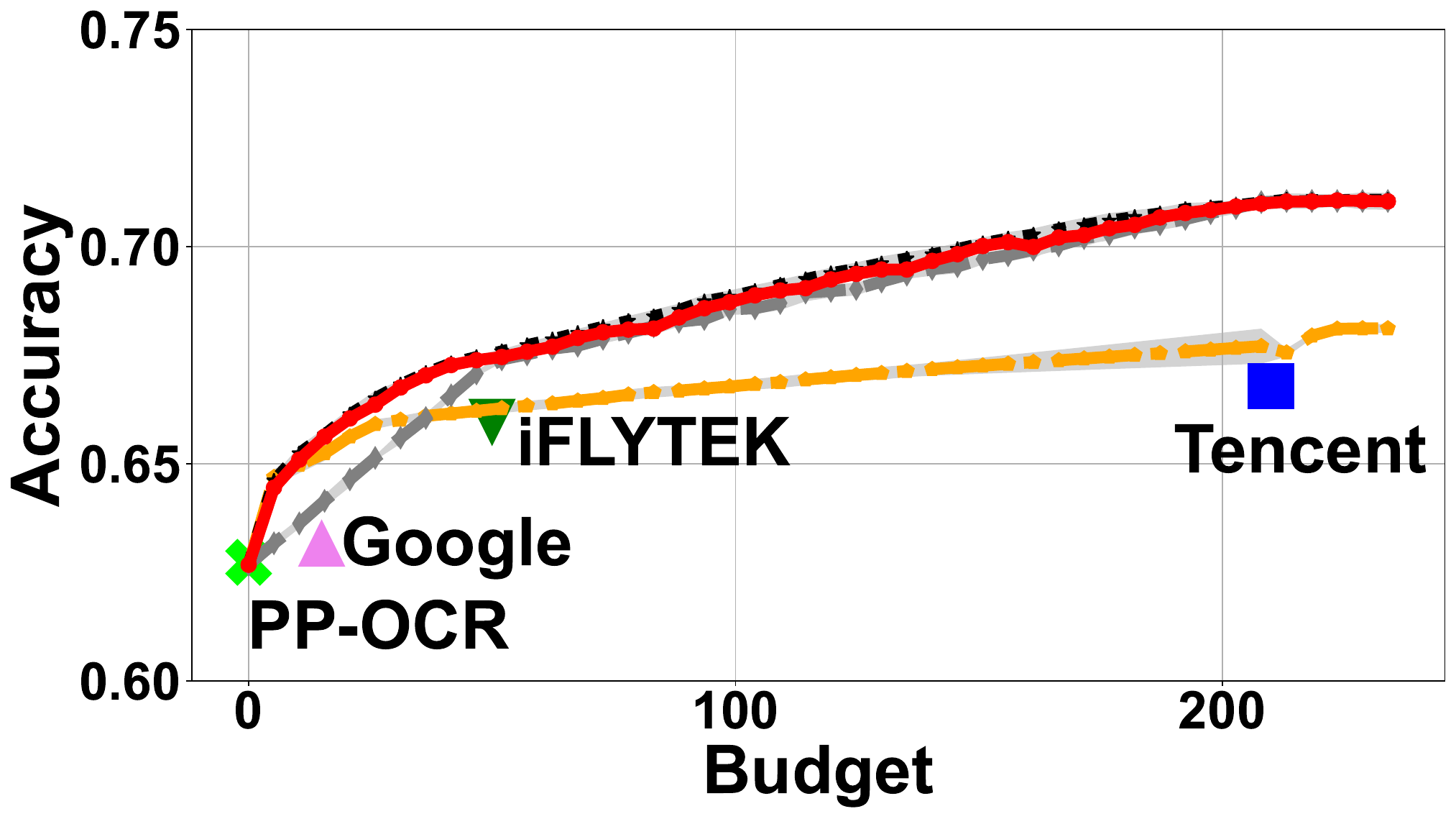}}
\end{subfigure}
\begin{subfigure}[ReCTS]{\label{fig:b}\includegraphics[width=0.32\linewidth]{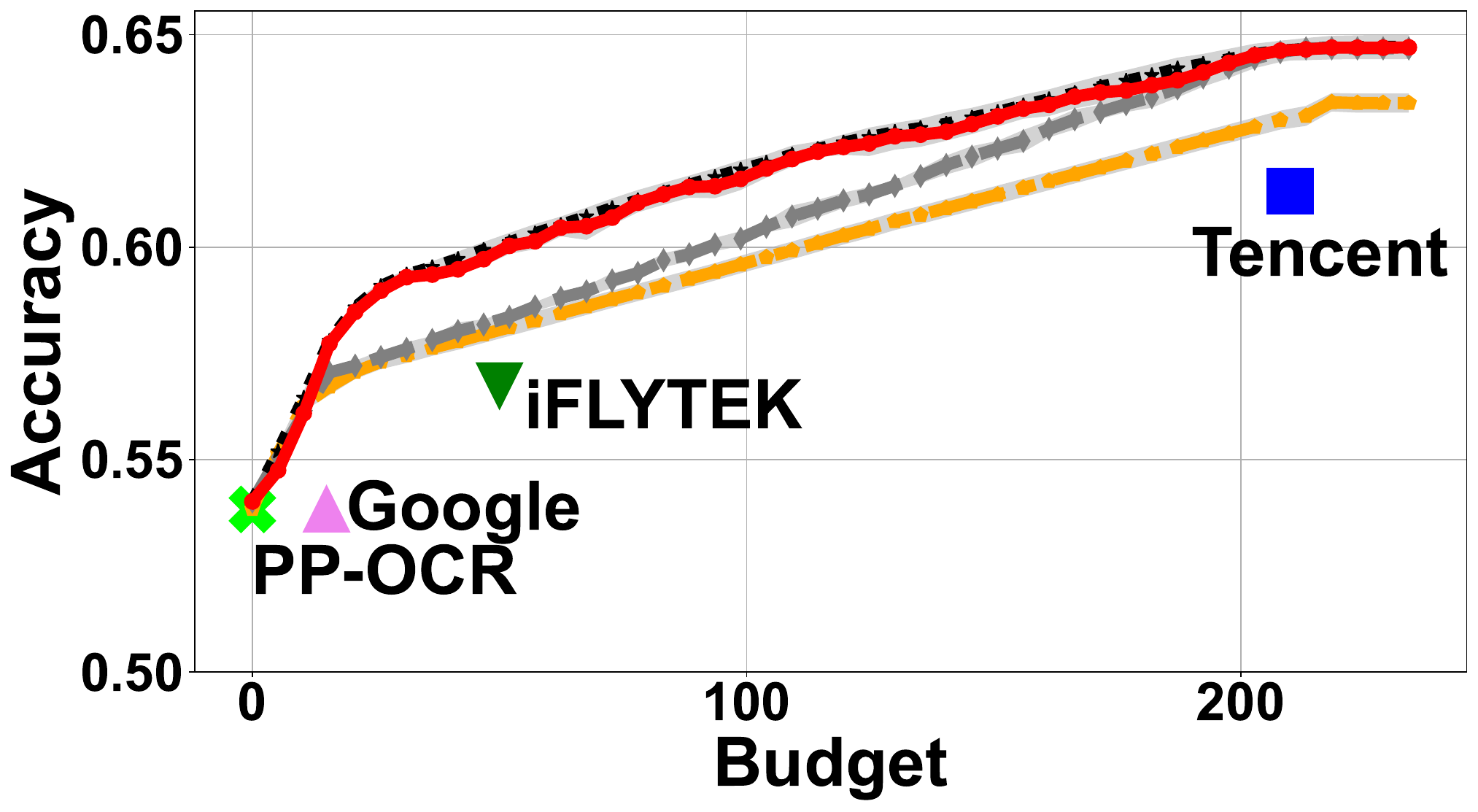}}
\end{subfigure}
\begin{subfigure}[LSVT]{\label{fig:a}\includegraphics[width=0.32\linewidth]{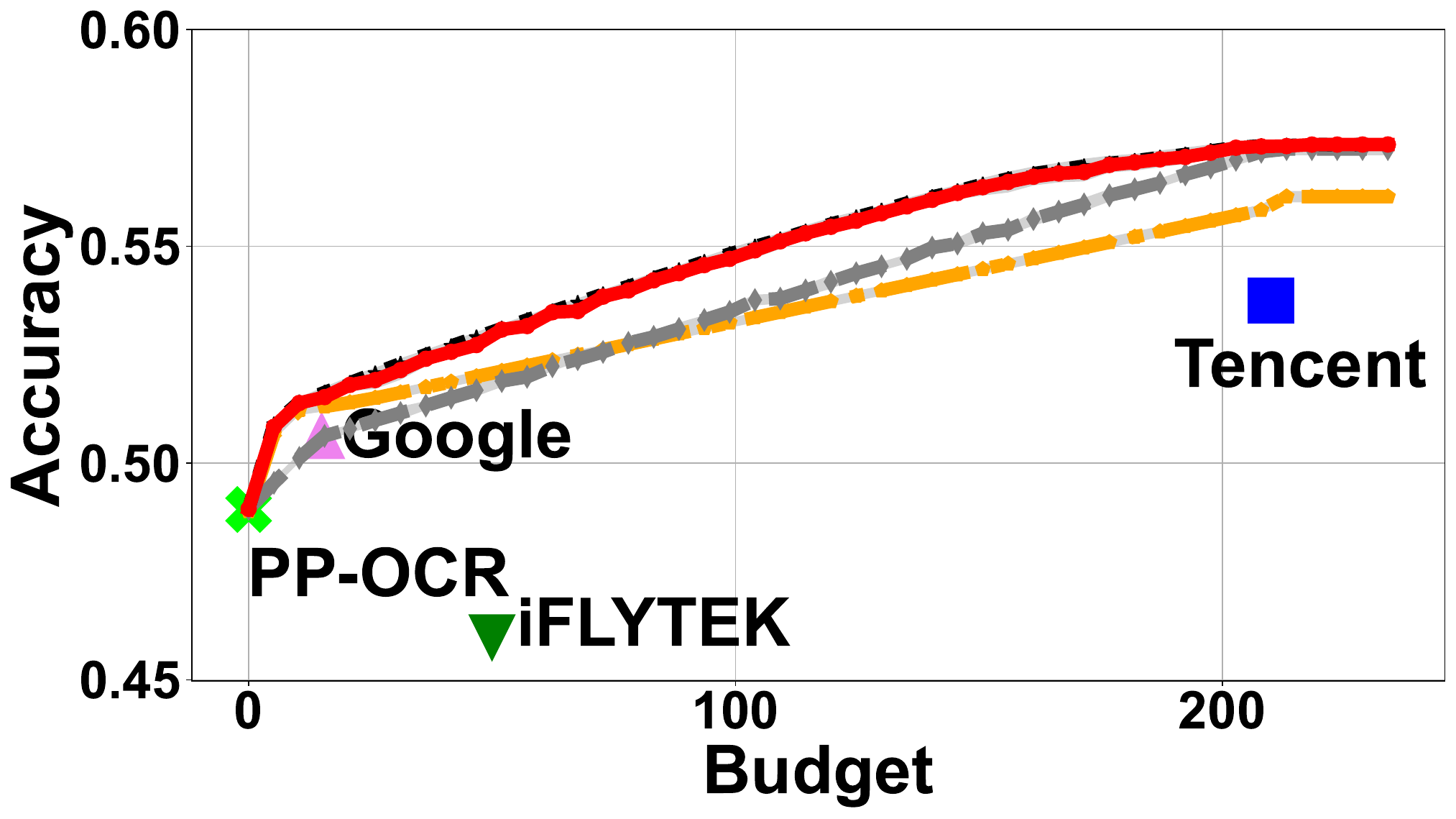}}
\end{subfigure}

\begin{subfigure}[CONLL]{\label{fig:j}\includegraphics[width=0.32\linewidth]{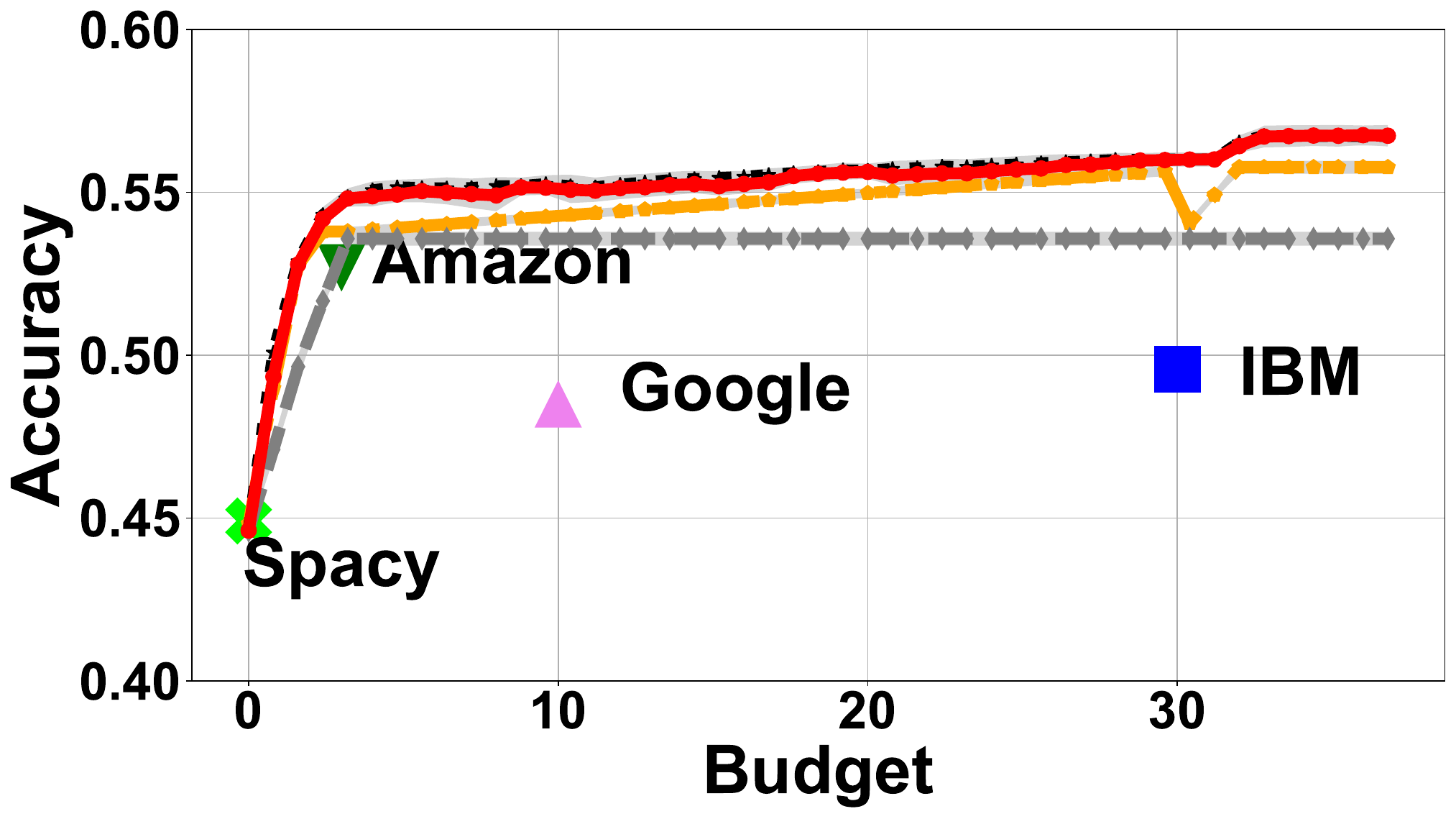}}
\end{subfigure}
\begin{subfigure}[ZHNER]{\label{fig:k}\includegraphics[width=0.32\linewidth]{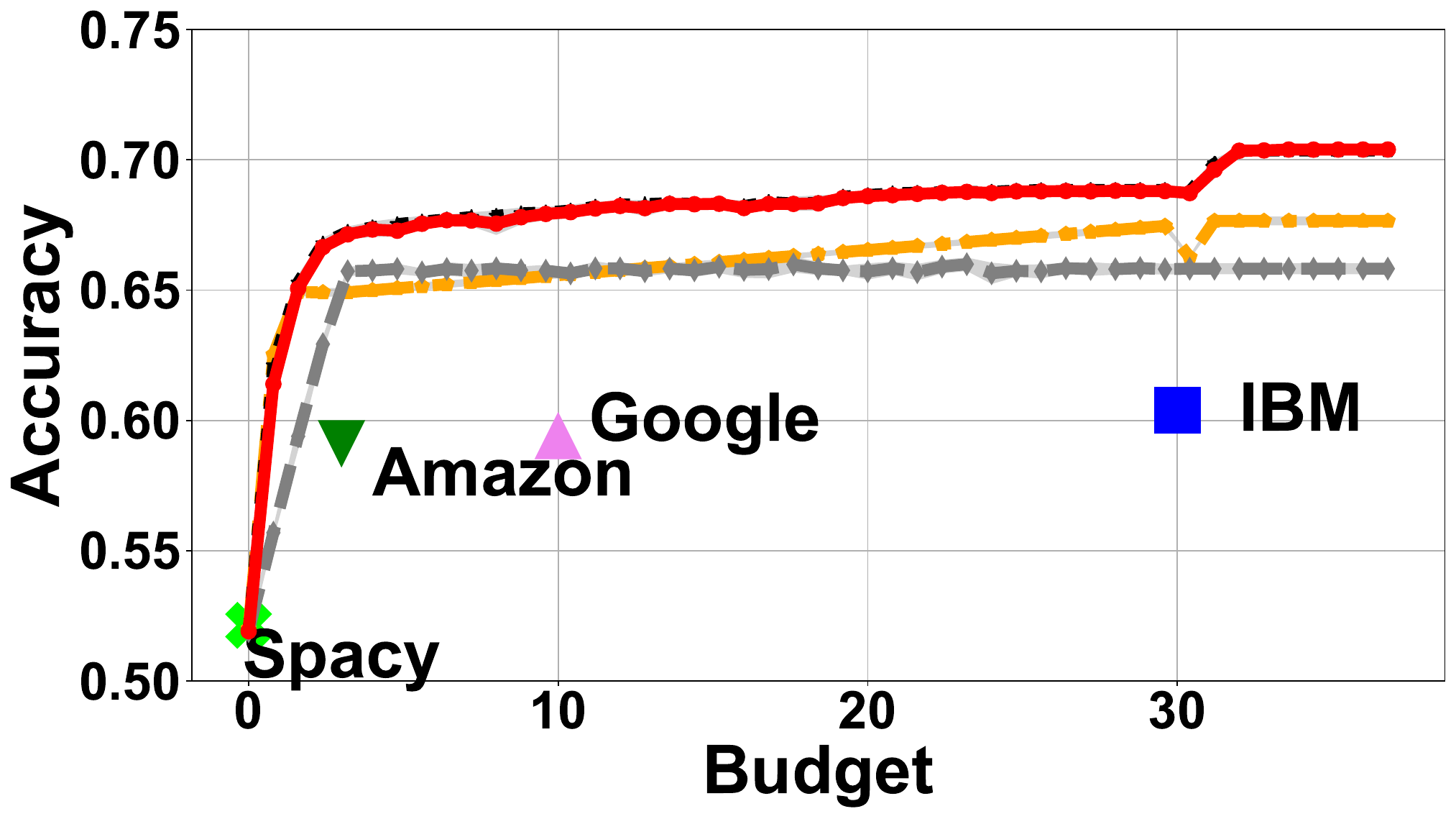}}
\end{subfigure}
\begin{subfigure}[GMB]{\label{fig:i}\includegraphics[width=0.32\linewidth]{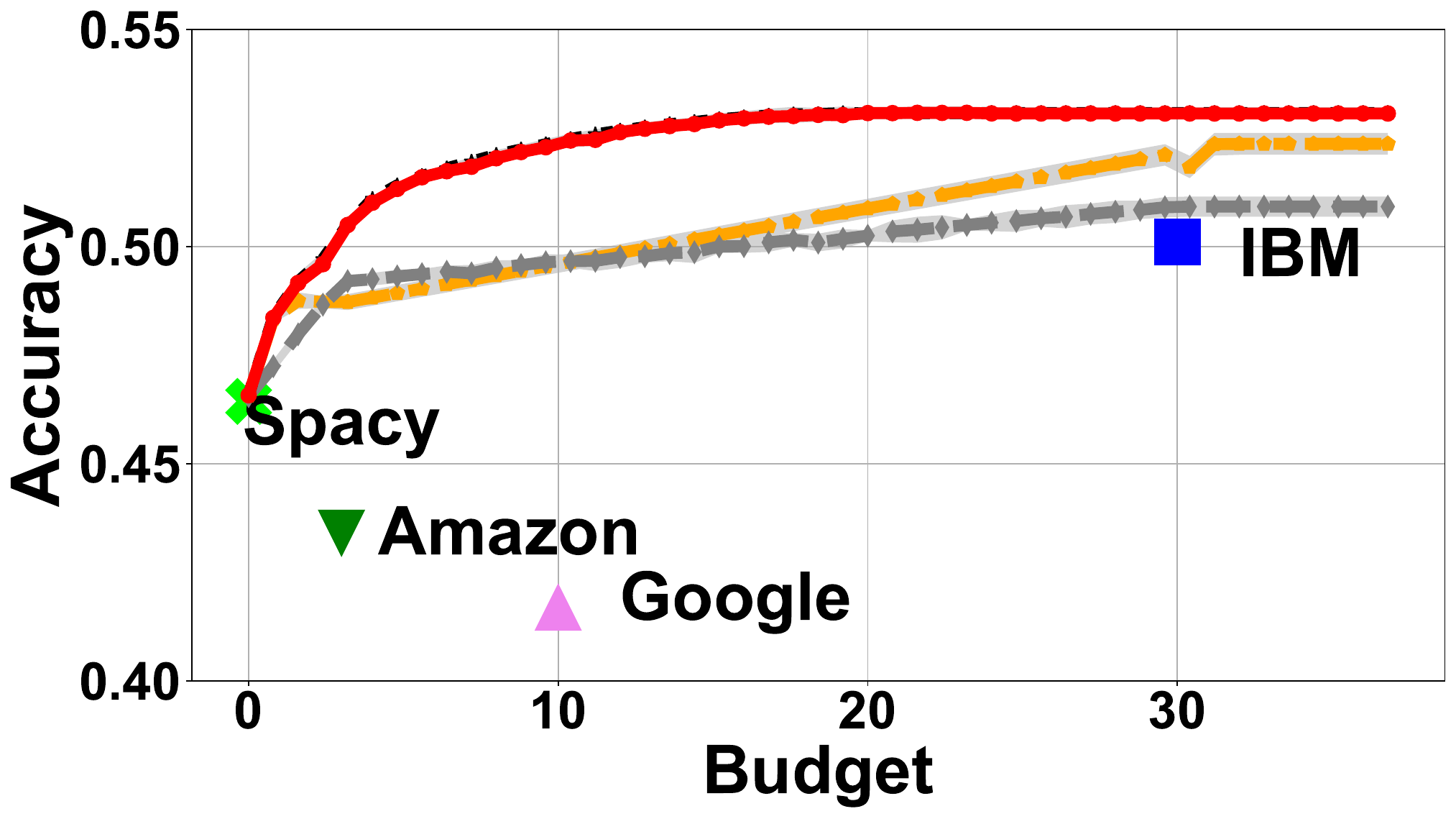}}
\end{subfigure}
\vspace{-0mm}
\caption{Accuracy cost trade-offs.	The offline \systemnamesecond{} (black) observes the full data and then make decisions. The online \systemnamesecond{} (red) matches the offline performance in all the experiments. DAP (grey) is an oblation of \systemnamesecond{} where a dummy accuracy predictor is used. 
\systemname{} (orange) is the previous state-of-the-art method. 
The task of row 1, 2, 3 is \textit{MIC}, \textit{STR}, and \textit{NER}.}\label{fig:SFAME:AccCostTradeoff}
\end{figure*}

\paragraph{Analysis of Cost Savings.} Next, we evaluate how much cost can be saved by \systemnamesecond{} to reach the highest accuracy produced by a single API on different tasks.
As shown in Table \ref{tab:SFAME:costsaving},
\systemnamesecond{} can typically save more than 60\% of the cost.
Interestingly, the cost saving can be up to 98\% on the dataset ZHNER. 
This is probably because (i) the accuracy estimator enables the API selector to identify when the base service's prediction is reliable and to avoid unnecessarily calling add-on services, and (ii) when add-on API is invoked,  the apt combination of the base and add-on services leads to a high accuracy improvement.

\paragraph{Accuracy and Cost Trade-offs.}
Now we dive deeply into the accuracy and cost trade-offs achieved by \systemnamesecond{}, shown in Figure \ref{fig:SFAME:AccCostTradeoff}.
We compare with two ablations: ``Offline'', where the full data is observed before making decision, ``DAP'', where a dummy accuracy predictor is used, which, for each API, always returns its mean accuracy on the training dataset. We also compared with  an adapted version of the previous state-of-the-art for single label task, \systemname{}. To adapt it to multi-label tasks, we use the average quality score over all predicted labels as a single score, and cluster all labels into a ``superclass''. 

\begin{figure*}[t] \centering
\begin{subfigure}[PASCAL]{\label{fig:sample_a}\includegraphics[width=0.33\linewidth]{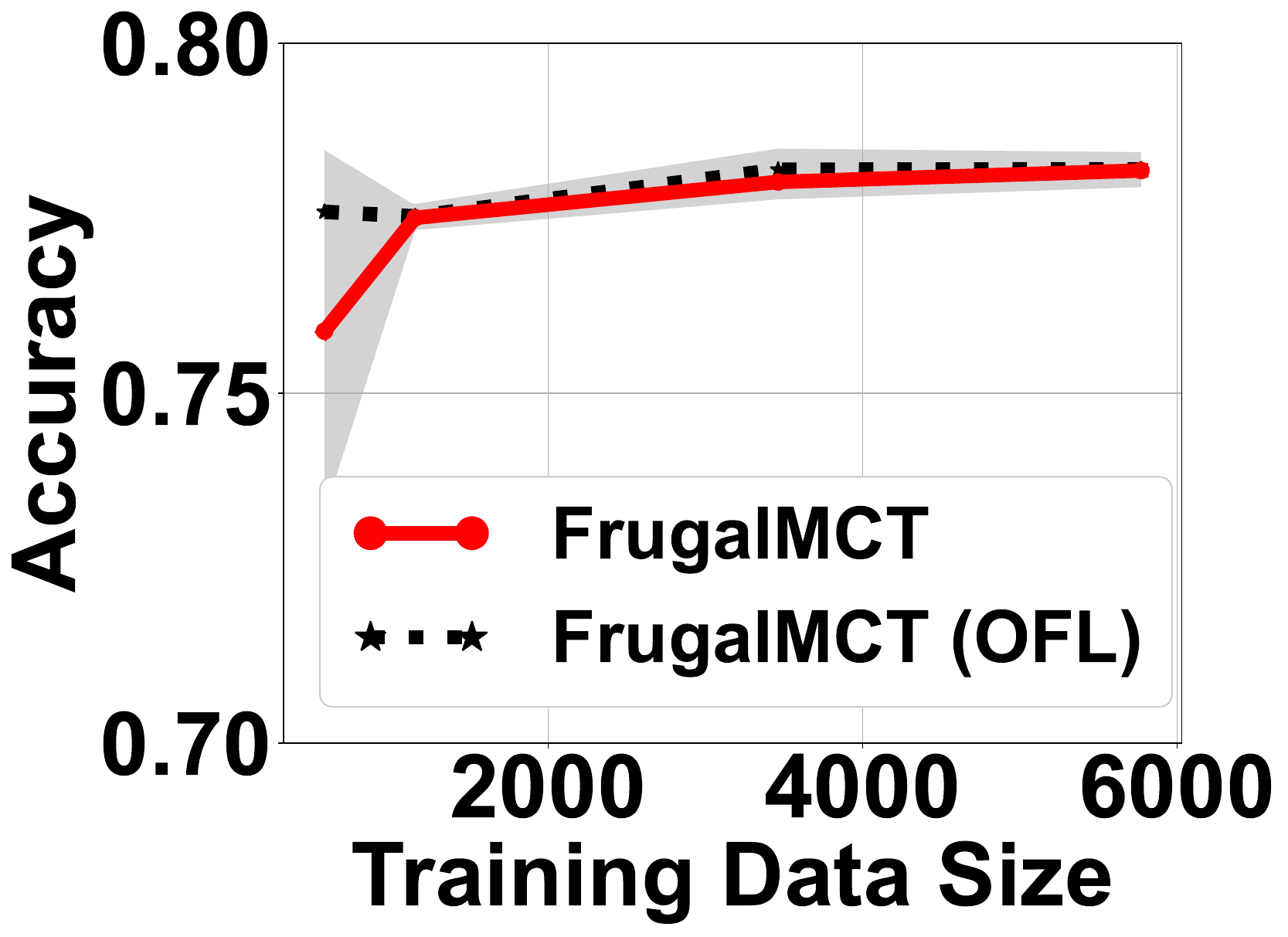}}
\end{subfigure}
\begin{subfigure}[MTWI]{\label{sample_b}\includegraphics[width=0.31\linewidth]{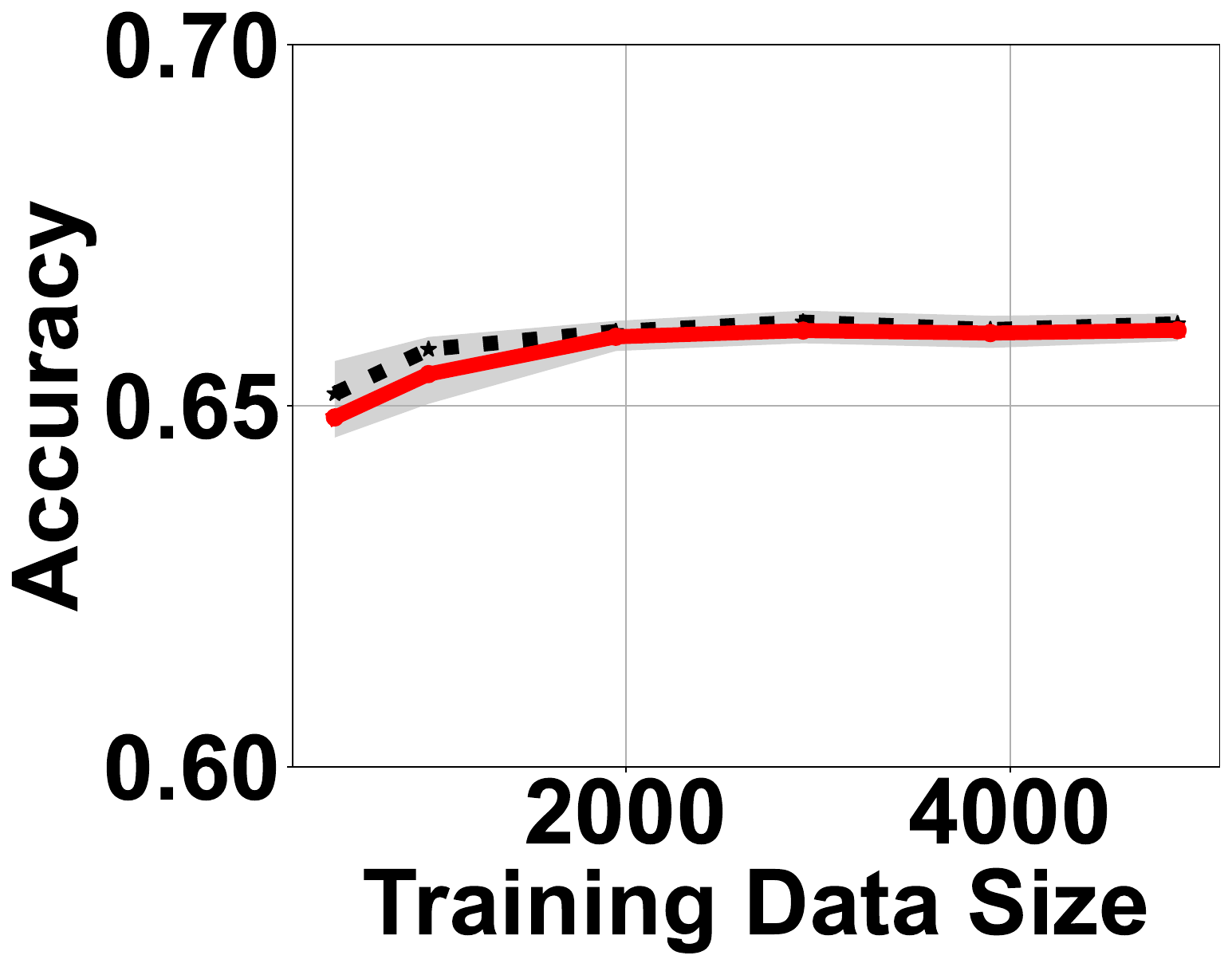}}
\end{subfigure}
\begin{subfigure}[CONLL]{\label{sample_c}\includegraphics[width=0.31\linewidth]{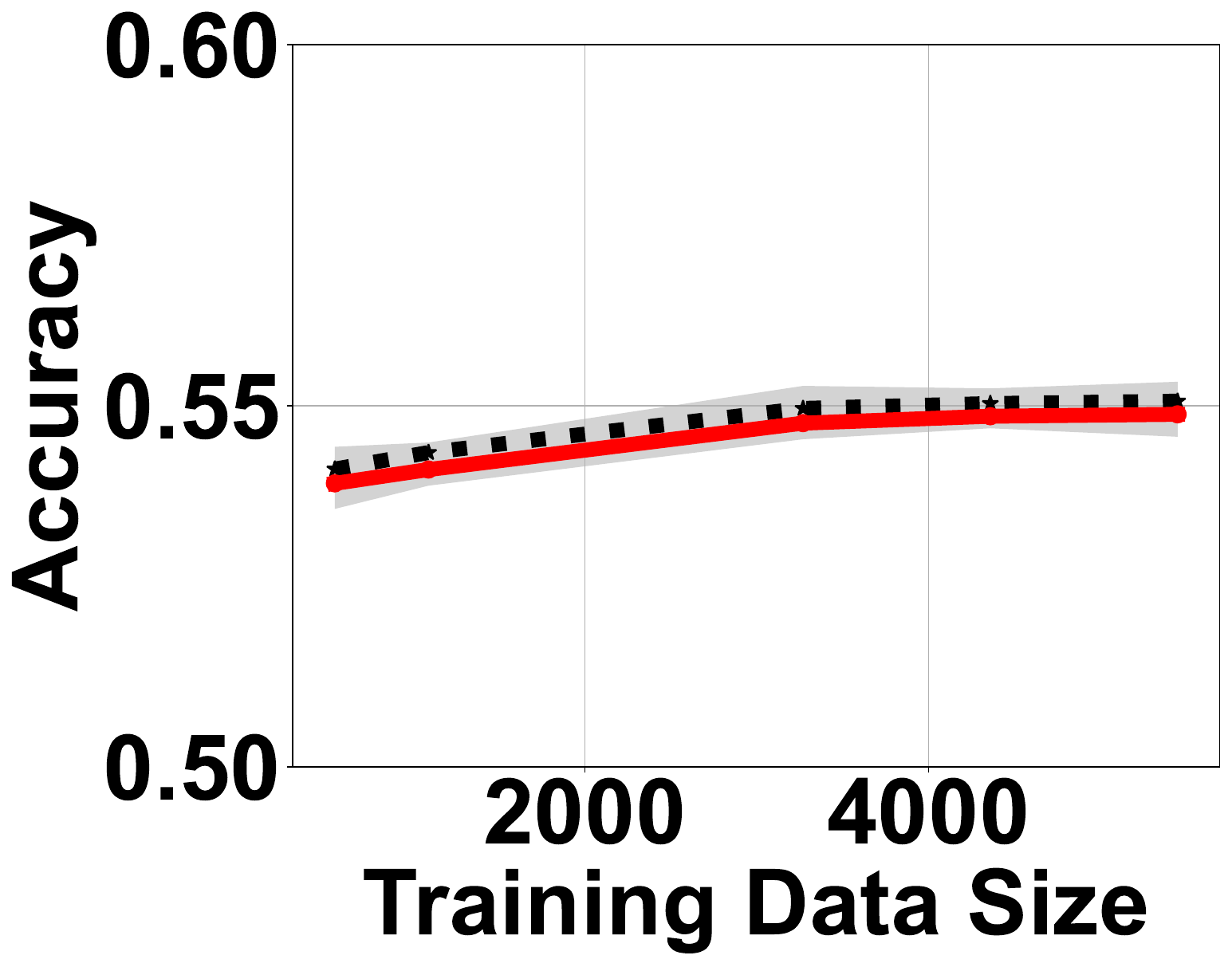}}
\end{subfigure}
\vspace{-0mm}
	\caption{{Testing accuracy v.s.training data size. The fixed budget is 6, 15, 3, respectively.}}\label{fig:SFAME:samplesize}
	\vspace{-0mm}
\end{figure*}

\eat{There are several interesting observations on the commercial APIs. 
First, Microsoft API consistently performs the best across different MIC datasets, while Tencent API consistently beats the other API's performance on the STR task. 
Though, for the NER datasets, the best API can be either IBM or Amazon API. 
Second, most expensive APIs are not always the most accurate. For example, Google API is the most expensive for MIC, but its accuracy is worse than the cheap GitHub API. 
Those suggest the necessity of using \systemnamesecond{} 
for different datasets. 
}
Compared to any single API,
\systemnamesecond{} allows users to pick any point in its trade-off curve and  offers substantial more flexibility.
In addition, \systemnamesecond{} often achieves higher accuracy than any ML services it calls.
For example, on COCO and ZHNER, more than 5\% accuracy improvement can be reached with the same cost of the best API.
Note that \systemnamesecond{} also outperforms  \systemname{} with the same budget.
This is primarily because  \systemnamesecond{} (i) utilizes a more principled way to use the features (learning an accuracy estimator) than FrugalML (directly using the label info), and (ii) adopts a label combiner designed for multi-label tasks. 
Ensemble methods such as majority votes (in the appendix \ref{sec:SFAME:experimentdetails}) produce accuracy similar to  \systemnamesecond{}, but their cost is much higher. 
Note that there is little performance difference between the online \systemnamesecond{} strategy and the offline approach, due to the carefully designed online algorithm. This directly supports our theory.

\begin{table}[htbp]
  \centering
  \small
  \caption{Performance of \systemnamesecond{}'s accuracy predictor. Root mean square error (RMSE) quantifies the standard deviation of the differences between the predicted and the true accuracy. }
    \begin{tabular}{|c|c|c|c|c|c|}
    \hline
    Data  & RMSE  & Data  & RMSE  & Data  & RMSE \bigstrut\\
    \hline
    \hline
    PASCAL & 0.28  & MIR   & 0.22  & COCO  & 0.24 \bigstrut\\
    \hline
    MTWI  & 0.17  & ReCTS & 0.22  & LSVT  & 0.19 \bigstrut\\
    \hline
    CONLL & 0.29  & ZHNER & 0.31  & GMB   & 0.28 \bigstrut\\
    \hline
    \end{tabular}%
  \label{tab:SFAME:AccPredMeasure}%
\end{table}%

\paragraph{Effects of Accuracy Predictors.} The accuracy predictors play an important role in \systemnamesecond{}'s performance.
As Table \ref{tab:SFAME:AccPredMeasure} shows, \systemnamesecond{} provides nontrivial accuracy estimates which enables its success. It's interesting to note that the accuracy predictor doesn't need to be perfect for \systemnamesecond{} to do well; for example, the root mean square error (RMSE) of the accuracy predictor is 0.28 on PASCAL (and 0.29 on CONLL), but \systemnamesecond{} still produces consistently better accuracy than FrugalML. 
We also evaluated \systemnamesecond{}'s performance when the accuracy predictors are obtained via two AutoML toolkits, auto-sklearn~\cite{AutoSKlearn_NIPS2015} and Auto-PyTorch~\cite{AutoPytorch2019} instead of random forest, and observe a similar performance. 

\paragraph{Effects of Training Sample Size.}

Finally we study how the training dataset size affects \systemnamesecond{}'s performance. As shown in Figure \ref{fig:SFAME:samplesize}, across different tasks, a few thousand training samples are typically sufficient to learn the optimal \systemnamesecond{} strategy. This is usually more efficient than training a customized ML model from scratch. 
It also only takes a few minutes to train those  \systemnamesecond{} strategies, which is much faster than training a model from scratch. 
This is useful in latency-critical applications.

\paragraph{Training cost of \systemnamesecond{}.} Both dollar cost and computation time of training are often much smaller than ML APIs’ inference cost.This is because (i) training is a one-time cost and (ii) \systemnamesecond{} requires a small number of label annotations (a few thousands see Figure \ref{fig:SFAME:samplesize}). Consider the image tagging task as an example:  the dollar cost of calling all APIs is $\$0.0006+\$0.001+\$0.0015=\$0.0031$ per image. Labeling for (say) five thousands images takes $\$0.0031\times 5000=\$15.5$. Training a FrugalMCT strategy on half of the COCO dataset takes $59.5$s on the experiment machine. 
This is much cheaper than calling the selected APIs after at large scale (e.g., millions of images).

\section{Conclusion}\label{Sec:SFAME:Conclusion}
In this paper, we presented \systemnamesecond{}, an algorithmic framework to adaptively select and combine ML APIs for multi-label classification tasks within a budget constraint. 
\systemnamesecond{} integrates forecasts of API's accuracy with online constrained optimization to create an end-to-end algorithm with strong empirical performance and theoretical guarantees.
How to efficiently use multi-label APIs is an important problem in practice for the large number of ML users who have chosen to rely on commercial prediction APIs, and has not been studied heavily in the ML literature.
This work helps MLaaS users improve  the overall accuracy and cost of their applications.
Extensive empirical evaluation using real commercial APIs shows that \systemnamesecond{} significantly improves both cost and accuracy. 

To encourage more research on MLaaS, we  also release the dataset used to develop \systemnamesecond{}, consisting of 295,212 samples annotated by commercial multi-label prediction APIs. 
The dataset and our code can be accessed from  \url{https://github.com/lchen001/FrugalMCT}.

\section*{Acknowledgement}
This work was supported in part by a Google PhD Fellowship, a Sloan Fellowship, NSF CCF 1763191, NSF CAREER AWARD
1651570 and 1942926, NIH P30AG059307, NIH U01MH098953, grants from the Chan-Zuckerberg
Initiative, Sutherland, and affiliate members and other supporters of the Stanford DAWN project, including Meta, Google,  and VMware. We also thank anonymous reviewers for helpful discussion and feedback.

\newpage
\newpage
{
\bibliography{MLService}
\bibliographystyle{icml2022}
}


\appendix
\newpage
\onecolumn
\paragraph{Outline.} The appendix is organized as follows. We present missing technical details in Section \ref{sec:SFAME:techdetails}. 
The proofs are provided in Section \ref{Sec:SFAME:proofs}.
Finally, Section \ref{sec:SFAME:experimentdetails} gives detailed experiment setups and additional empirical results.

\section{Technical Details}\label{sec:SFAME:techdetails}
Additional  technical details are presented here.

\paragraph{Label combiner parameter search.} 
Recall that the label combiner requires two parameters: the combining weight $w\in[0,1]$ and the quality score threshold $\theta\in[0,1]$.
We adopt a simple grid search approach to select $w$ and $\theta$.
More precisely, we first create a parameter candidate set $PCS\triangleq \{w_0,w_1,w_2,\cdots, w_M\}\times \{\theta_0, \theta_1,\theta_2,\cdots, \theta_M\}$, where $w_m=\frac{m}{M}$ and $\theta_i=\frac{m}{M}$.  
Next, for each $(w,\theta) \in PCS$, we  evaluate the performance of combining the base service and the $k$th service using $(w,\theta)$, and select the parameter that gives the highest accuracy. 
Note that this involves $M^2$ number of label combinations for each $k\in[K]$. 
In practice, we have found that $M=10$ is sufficient to obtain a good combiner.

\paragraph{$\delta$ selection in Algorithm \ref{Alg:SFAME:OnlineAlg}.}
A naive approach is to set a small constant value, say, $\delta=0.01$.
To obtain a more accurate strategy, we can adopt a search algorithm to select the best $\delta$ value based on the evaluation the performance on a validation dataset.
More precisely, we first create a constant set $\textit{CS}$. Then for each $\alpha \in \textit{CS}$, let $\delta=\alpha \frac{\log N}{N}$, and then solve Problem \ref{prob:SFAME:APISelectionTrainingDual} to obtain the parameter $\hat{p}$, evaluate the performance on a validation dataset.
Finally, we select the $\alpha \in \textit{CS}$ that achieves the highest accuracy on the validation dataset.
In practice, we have found that $\textit{CS}=\{-10,-9,-8,\cdots, 0, 1, 2,\cdots, 10 \}$ is sufficient to obtain a highly accurate solution.

\section{Proofs}\label{Sec:SFAME:proofs}
For ease of notations, let us introduce 
 $\hat{b} \triangleq b - \pmb c_{base}$
 and $\hat{\pmb c}_k \triangleq \pmb c_{k} \cdot \mathbbm{1}_{k\not=\textit{base}}$ first.
 Then we can rewrite the API selection problem (Problem \ref{prob:SFAME:optimaldefinition}) as
\begin{equation}\label{Prob:SFAME:ILPTransform}
    \begin{split}
        \max_{\pmb Z \in \R^{N\times K}:} \textit{ }&  \frac{1}{N} \sum_{n=1}^{N} \sum_{k=1}^{K} \pmb{Z}_{n,k} \pmb {\hat{a}}_{k}(x_n)\\
        s.t. & \frac{1}{N} \sum_{n=1}^{N} \sum_{k=1}^{K} \pmb Z_{n,k} \hat{\pmb c}_{k} \leq  \hat{b} \\
        & \sum_{k=1}^{K} \pmb{Z}_{n,k} = 1,  \forall n;      \pmb{Z}_{n,k} \in \{0,1\}, \forall n, k
    \end{split}
\end{equation} 
Its corresponding linear programming simply becomes
\begin{equation}\label{prob:SFAME:APISelectionLP}
    \begin{split}
        \max_{\pmb Z \in \R^{N\times K}:} \textit{ }&  \frac{1}{N} \sum_{n=1}^{N} \pmb{Z}_{n,k} \pmb {\hat{a}}_{k}(x_n)\\
        s.t. & \frac{1}{N} \sum_{n=1}^{N} \sum_{k=1}^{K} \pmb Z_{n,k} \hat{\pmb c}_{k} \leq  \hat{b} \\
        & \sum_{k=1}^{K} \pmb{Z}_{n,k} = 1, \forall n;       \pmb{Z}_{n,k} \in [0,1], \forall n, k
    \end{split}
\end{equation}
We will analyze some useful properties for those two problems first, and then prove the desired results for the original API selection problem on top of those properties.

\subsection{Helpful Lemmas}
Before proving the desired results, let us also provide a few generic lemmas.

\begin{lemma}\label{lemma:SFAME:ZeroNormBound}
    Let $\pmb A \in \R^{N_1\times N_2}$ be a fixed matrix and $\pmb \beta \in \R^{N_1}$ be a random vector.
    If $\pmb \beta$ is supported on $[0,1]^{N_1}$ with a continuous density function, then with probability 1, \begin{equation*}
        \min_{\pmb x} \| \pmb A \pmb x - \pmb \beta\|_0 \geq N_1 -  N_2.
    \end{equation*} 
    \end{lemma}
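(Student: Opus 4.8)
The plan is to reduce the statement to a finite union of measure-zero events. Observe first that $\min_{\pmb x}\|\pmb A\pmb x-\pmb\beta\|_0 < N_1-N_2$ holds if and only if there is some $\pmb x$ for which the residual $\pmb A\pmb x-\pmb\beta$ has at least $N_2+1$ zero coordinates; equivalently, if and only if there is an index set $S\subseteq[N_1]$ with $|S|=N_2+1$ and a point $\pmb x\in\R^{N_2}$ such that $\pmb A_{S,\cdot}\,\pmb x=\pmb\beta_S$. (If $N_2\geq N_1$ the claimed bound $N_1-N_2\le 0$ is vacuous, so I assume $N_2+1\leq N_1$ so that such sets $S$ exist at all.) Thus I would define, for each fixed $S$, the ``bad'' event $\mathcal E_S\triangleq\{\pmb\beta:\pmb\beta_S\in\operatorname{Im}(\pmb A_{S,\cdot})\}$, and note that the overall bad event is contained in $\bigcup_{|S|=N_2+1}\mathcal E_S$.

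Next I would bound the probability of $\mathcal E_S$ for a single $S$. The matrix $\pmb A_{S,\cdot}$ maps $\R^{N_2}$ into $\R^{N_2+1}$, so its image is a linear subspace of $\R^{N_2+1}$ of dimension at most $N_2$, hence a \emph{proper} subspace and therefore a Lebesgue-null set in $\R^{N_2+1}$. The event $\mathcal E_S=\{\pmb\beta:\pmb\beta_S\in\operatorname{Im}(\pmb A_{S,\cdot})\}$ is, after permuting coordinates, the product of this null set in the $S$-coordinates with all of $\R^{N_1-N_2-1}$ in the remaining coordinates, and is therefore Lebesgue-null in $\R^{N_1}$. Since $\pmb\beta$ is drawn from a distribution that is continuous (absolutely continuous with respect to Lebesgue measure, with support contained in $[0,1]^{N_1}$), it assigns zero mass to every Lebesgue-null set, so $\PP(\mathcal E_S)=0$.

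Finally I would conclude with a union bound: there are only $\binom{N_1}{N_2+1}$ choices of $S$, a finite number, so $\PP\!\big(\bigcup_S\mathcal E_S\big)\leq\sum_S\PP(\mathcal E_S)=0$. Hence with probability $1$ no such pair $(S,\pmb x)$ exists, i.e. every residual $\pmb A\pmb x-\pmb\beta$ has at most $N_2$ zero coordinates, which is exactly the claim $\min_{\pmb x}\|\pmb A\pmb x-\pmb\beta\|_0\geq N_1-N_2$. The step I expect to require the most care is the measure-theoretic claim of the second paragraph: one must justify rigorously that an absolutely continuous law on $\R^{N_1}$ gives zero probability to a set that is ``thin'' only along the $S$-coordinates. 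This follows from Fubini applied to the density together with the observation that a subspace of dimension at most $N_2$ carries $(N_2+1)$-dimensional Lebesgue measure zero; everything else is bookkeeping.
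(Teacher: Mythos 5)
Your proof is correct and follows essentially the same route as the paper's: both reduce the bad event to a finite union of events in which a sub-vector of $\pmb\beta$ lands in a proper linear subspace, observe that each such event has Lebesgue measure zero (hence probability zero under a continuous law), and finish with a union bound. Your version is organized a bit more cleanly --- you argue directly rather than by contradiction, index the union by the $\binom{N_1}{N_2+1}$ row-index sets $S$ instead of by ``subspaces formed by the columns,'' and make the Fubini step explicit --- but the underlying idea is identical.
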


\begin{proof}
If $N_1\leq N_2$ then the above inequality obviously holds. Suppose $N_1>N_2$.  We prove this by contradiction.
Assume the inequality does not hold.
Then there exists some $\pmb x'$, such that with probability larger than 0,
\begin{equation*}
    \| \pmb A \pmb x' - \pmb \beta \|_0 < N_1 - N_2.
\end{equation*}
That is to say, at least $N_1-(N_1-N_2)+1=N_2+1$ many equations in $\pmb A \pmb x' = \pmb \beta$ can be forced to $0$.
Let $U$ be the set of those $N_2+1$ indexes.
Then formally we have 
\begin{equation*}
    \pmb A_{U} \pmb x' = \pmb \beta_{U}.
\end{equation*}
That is to say, with probability larger than 0, $\pmb \beta_U$ is in the subspace formed by the columns of $\pmb A_U$.

On the other hand, we can show that for any set of indexes $V$ with $|V|=N_2+1$, $\pmb \beta_V$ lies in the subspace formed by the columns of $\pmb A_V$ with probability 0, which gives a contradiction. 
To see this, let us start by considering a fixed set of indexes $V$. 
Let $\pmb \Omega_{V}$ denote the subspace formed by $\pmb A_V$ and $p_{\pmb \beta_V}(\cdot)$ be the density function of $\pmb \beta_V$.
The density function of $\pmb \beta$ is continuous and thus $p_{\pmb \beta_V}(\cdot)$ is also continuous. 
The support of $\pmb \beta$ is in $[0,1]^{N_1}$, and thus the support of $\pmb\beta_V$ is in $[0,1]^{N_2+1}$ (since $|V|=N_2+1$ by definition).
That is to say, $p_{\pmb \beta_V}(\cdot)$ is a continuous function on a compact set.
Therefore, $p_{\pmb \beta_V}(\cdot)$ must be bounded, i.e., there exists a constant $p^{\sup}$ such that $p_{\pmb \beta_V}(\cdot) \leq p^{\sup}$.
Hence we have 
\begin{equation*}  \Pr[\pmb \beta_{V} \in  \pmb \Omega_{V}] =   \int_{\pmb x\in \pmb \Omega_V}  p_{\pmb \beta_V}(\pmb x)d \pmb x  \leq \int_{\pmb x_V \in \pmb \Omega_V} p^{\sup} d \pmb x = p^{\sup} \int_{\pmb x_V \in \pmb \Omega_V} 1 d \pmb x 
\end{equation*}
where the first equation is by definition of the random variable $\pmb \beta_V$, the inequality is by increasing the density function to its upper bound $p^{\sup}$, and the last equation simply moves the constant out of the integral.
In addition, $\pmb \Omega_V$ is a $N_2+1$ dimensional space spanned by $N_2$ vectors, which implies that its measure in  $\R^{N_2+1}$ is 0, i.e, 
$\int_{\pmb x_V \in \pmb \Omega_V} 1 d \pmb x = 0$.
Thus, we have just shown that 
\begin{equation*}  \Pr[\pmb \beta_{V} \in  \pmb \Omega_{V}] \leq p^{\sup} \int_{\pmb x_V \in \pmb \Omega_V} 1 d \pmb x = 0 
\end{equation*}
Probability is non-negative, and thus $ \Pr[\pmb \beta_{V} \in  \pmb \Omega_{V}]=0$ (for a fixed $V$). Note that the size of $V$ is $N_2+1$ and there are in total $N_1$ possible indexes. Thus, there are $N_1\choose{N_2+1}$ many possible choices of $V$.
Applying union bound, we have 
for any $V$, $ \Pr[\pmb \beta_{V} \in  \pmb \Omega_{V}]=0$. 
A contradiction. 
The assumption is incorrect, and thus we must have \begin{equation*}
        \min_{\pmb x} \| \pmb A \pmb x - \pmb \beta\|_0 \geq N_1 -  N_2.
    \end{equation*} 
\end{proof}

\begin{lemma}\label{lemma:SFAME:optimizationdiffdomain}
Let $f$ be a function defined on $\Omega_{\pmb{z}}$.
Assume there exists a set $\Omega_{\pmb z,1} \subseteq \Omega_{\pmb{z}}$, such that for any $\pmb z\in \Omega_{\pmb z}$, there exists $\pmb z' \in \Omega_{\pmb z, 1}$, such that $\|f(\pmb z) - f(\pmb z')\|\leq \Delta$.
Then we have
\begin{equation*}
\|    f(\pmb z^*) - f(\pmb z^*_1) \| \leq \Delta,
\end{equation*}
where $\pmb z^* = \arg \max_{\pmb z \in \Omega_{\pmb z}} f(\pmb z), \pmb z^*_1 = \arg \max_{\pmb z \in \Omega_{\pmb z,1}} f(\pmb z)$.
\end{lemma}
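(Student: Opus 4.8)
The plan is to prove the two-sided bound by a simple sandwiching argument that exploits the inclusion $\Omega_{\pmb z,1}\subseteq\Omega_{\pmb z}$ together with the approximation hypothesis. Throughout I would treat $f$ as real-valued, so that the two $\arg\max$ operations are well defined, and read $\|\cdot\|$ as the absolute value of a scalar.

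First I would establish the easy direction of the gap. Since $\Omega_{\pmb z,1}\subseteq\Omega_{\pmb z}$, maximizing $f$ over the larger set can only increase the value, so $f(\pmb z^*)\geq f(\pmb z^*_1)$ and hence $f(\pmb z^*)-f(\pmb z^*_1)\geq 0$. This removes the absolute value, and it then suffices to prove the matching upper bound $f(\pmb z^*)-f(\pmb z^*_1)\leq\Delta$.

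Next I would invoke the covering property at the single point $\pmb z=\pmb z^*\in\Omega_{\pmb z}$: there exists some $\pmb z'\in\Omega_{\pmb z,1}$ with $\|f(\pmb z^*)-f(\pmb z')\|\leq\Delta$, equivalently $f(\pmb z')\geq f(\pmb z^*)-\Delta$. Because $\pmb z'$ is a feasible point of the restricted maximization and $\pmb z^*_1$ is its maximizer, $f(\pmb z^*_1)\geq f(\pmb z')\geq f(\pmb z^*)-\Delta$. Rearranging gives $f(\pmb z^*)-f(\pmb z^*_1)\leq\Delta$, which combined with the previous step yields $0\leq f(\pmb z^*)-f(\pmb z^*_1)\leq\Delta$, that is, $\|f(\pmb z^*)-f(\pmb z^*_1)\|\leq\Delta$.

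There is no substantive obstacle; the only care needed is tracking the direction in which each hypothesis is used — domain monotonicity supplies nonnegativity of the gap, while the covering property applied specifically at $\pmb z^*$ supplies the upper bound. The one modeling point worth flagging is that the statement implicitly assumes $f$ is real-valued and that the maxima are attained; if attainment is not guaranteed, one would replace $\pmb z^*$ and $\pmb z^*_1$ by near-maximizers and pass to suprema, which leaves the final bound unchanged.
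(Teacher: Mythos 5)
Your proof is correct and follows essentially the same route as the paper's: you apply the covering hypothesis at $\pmb z^*$ to produce a feasible $\pmb z'\in\Omega_{\pmb z,1}$ with $f(\pmb z')\geq f(\pmb z^*)-\Delta$, use optimality of $\pmb z^*_1$ on the subset to get the lower bound, and use the inclusion $\Omega_{\pmb z,1}\subseteq\Omega_{\pmb z}$ for the upper bound. The remark about attainment of the maxima is a reasonable precaution but not something the paper addresses either.
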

\begin{proof}
By assumption, there exists a $\pmb z'\in \Omega_{\pmb z,1}$, such that 
\begin{equation*}
    \|f(\pmb z^*) - f(\pmb z')\| \leq \Delta
\end{equation*}
which implies
\begin{equation*}
    f(\pmb z') \geq f(\pmb z^*) - \Delta
\end{equation*}
Noting that $\pmb z^*_1$ is the optimal solution on $\Omega_{\pmb z,1}$ and $\pmb z'$ is a feasible solution, we have 
\begin{equation*}
    f(\pmb z^*_1) \geq f(\pmb z') 
\end{equation*}
Combining the above two inequalities, we have 
\begin{equation*}
    f(\pmb z^*_1) \geq f(\pmb z^*) - \Delta 
\end{equation*}
On the other hand, since $\Omega_{\pmb z,1} \subseteq \Omega_{\pmb z}$, $\pmb z^{*}_{1}$ is a feasible solution on $\Omega_{\pmb z}$, and thus we have
\begin{equation*}
    f(\pmb z^*_1) \leq f(\pmb z^*)  \leq f(\pmb z^*) +\Delta 
\end{equation*}
Combing those two inequalities we have 
\begin{equation*}
\|    f(\pmb z^*_1)- f(\pmb z^*)\|  \leq \Delta
\end{equation*}
which completes the proof.
\end{proof}
\begin{lemma}\label{Lemma:SFAME:traintestmatch}
Let $X_1, X_2,\cdots, X_{N_1}$ and $X_1', X_2',  \cdots X_{N_2}$ be two i.i.d. samples from the same distribution which lies in $[x_{\inf},x_{\sup}]$.
Then we have with probability $1-\epsilon$, 
\begin{equation*}
    \| \frac{1}{N_2}\sum_{n=1}^{N_2} X_{n}' - \frac{1}{N_1}\sum_{n=1}^{N_1} X_{n}\| \leq (x_{\sup}-x_{\inf})
\left[\sqrt{\frac{\log 4 - \log \epsilon }{2 N_2}} + \sqrt{\frac{\log 4 - \log \epsilon }{2 N_1}}\right].
\end{equation*}
\end{lemma}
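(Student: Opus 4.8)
The plan is to show that each empirical mean is close to the common population mean $\mu \eqdef \E[X]$ and then to combine the two deviation bounds through the triangle inequality. Because both samples are drawn from the same distribution supported on $[x_{\inf},x_{\sup}]$, Hoeffding's inequality controls the deviation of each sample average from $\mu$, and summing the two controlled deviations will produce exactly the stated bound.

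First I would apply Hoeffding's inequality separately to the two sample means. For the first sample, for any $t>0$,
\begin{equation*}
\Pr\left[\abs{\frac{1}{N_1}\sum_{n=1}^{N_1} X_n - \mu} \geq t\right] \leq 2\exp\left(-\frac{2N_1 t^2}{(x_{\sup}-x_{\inf})^2}\right),
\end{equation*}
and analogously for the second sample with $N_2$. To make the right-hand side equal to $\epsilon/2$, I would solve for $t$, which yields exactly $t = (x_{\sup}-x_{\inf})\sqrt{(\log 4 - \log \epsilon)/(2N_1)}$; here the constant $\log 4$ appears because the leading factor $2$ in the Hoeffding tail combines with the $\epsilon/2$ failure budget to give $\log(4/\epsilon) = \log 4 - \log\epsilon$. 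The identical computation produces the matching threshold with $N_2$ for the second sample.

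Next I would take a union bound over the two failure events, each of probability at most $\epsilon/2$, so that with probability at least $1-\epsilon$ both sample means simultaneously lie within their respective thresholds of $\mu$. On this event, inserting $\mu$ and applying the triangle inequality,
\begin{equation*}
\abs{\frac{1}{N_2}\sum_{n=1}^{N_2} X_n' - \frac{1}{N_1}\sum_{n=1}^{N_1} X_n} \leq \abs{\frac{1}{N_2}\sum_{n=1}^{N_2} X_n' - \mu} + \abs{\mu - \frac{1}{N_1}\sum_{n=1}^{N_1} X_n},
\end{equation*}
bounds the left-hand side by the sum of the two thresholds, which is precisely the claimed inequality.

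This argument is essentially a textbook concentration bound, so I do not anticipate a serious obstacle; the only point demanding care is the bookkeeping that splits the failure budget evenly as $\epsilon/2$ per sample and correctly inverts the Hoeffding tail, so that the constant inside the square root comes out as $\log 4 - \log\epsilon$ rather than $\log 2 - \log\epsilon$. I would also note the harmless implicit assumption that $\mu$ is finite, which is immediate from the boundedness of the support $[x_{\inf},x_{\sup}]$.
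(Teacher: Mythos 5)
Your proof is correct and follows essentially the same route as the paper's: Hoeffding's inequality applied to each sample mean with a failure budget of $\epsilon/2$ apiece (which is where the $\log 4 - \log\epsilon$ constant comes from), a union bound over the two events, and the triangle inequality through the common mean $\mu$. Your bookkeeping of the $\epsilon/2$ split is in fact slightly more transparent than the paper's presentation, which first states each bound at level $\epsilon$ and then restates both at level $\epsilon/2$ after the union bound.
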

\begin{proof}
We can apply the Hoeffding's inequality for both sequences separately, and we can obtain with probability $1-\epsilon$,
\begin{equation*}
    \| \frac{1}{N_1}\sum_{n=1}^{N_1} X_{n} - \Exp[X_1]\| \leq (x_{\sup}-x_{\inf})
\sqrt{\frac{\log 2 - \log \epsilon }{2 N_1}}
\end{equation*}
and with probability $1-\epsilon$
\begin{equation*}
    \| \frac{1}{N_2}\sum_{n=1}^{N_2} X_{n}' - \Exp[X_1]\| \leq (x_{\sup}-x_{\inf})
\sqrt{\frac{\log 2 - \log \epsilon }{2 N_2}}
\end{equation*}
Now applying union bound, we have 
with probability $1-\epsilon$,
\begin{equation*}
    \| \frac{1}{N_1}\sum_{n=1}^{N_1} X_{n} - \Exp[X_1]\| \leq (x_{\sup}-x_{\inf})
\sqrt{\frac{\log 4 - \log \epsilon }{2 N_1}}
\end{equation*}
and  $1-\epsilon$
\begin{equation*}
    \| \frac{1}{N_2}\sum_{n=1}^{N_2} X_{n}' - \Exp[X_1]\| \leq (x_{\sup}-x_{\inf})
\sqrt{\frac{\log 4 - \log \epsilon }{2 N_2}}
\end{equation*}
Now applying the triangle inequality, we have
\begin{equation*}
    \| \frac{1}{N_2}\sum_{n=1}^{N_2} X_{n}' - \frac{1}{N_1}\sum_{n=1}^{N_1} X_{n}\| \leq (x_{\sup}-x_{\inf})
\left[\sqrt{\frac{\log 4 - \log \epsilon }{2 N_2}} + \sqrt{\frac{\log 4 - \log \epsilon }{2 N_1}}\right]
\end{equation*}
which completes the proof.
\end{proof}

\begin{lemma}\label{Lemma:SFAME:OptimiziationDiff}
Let $f_1, f_2, g_1, g_2$ be functions defined on $\Omega_{\pmb{z}}$, such that $\max_{\pmb{z}\in \Omega_{\pmb{z}}}|(f_1\pmb{z}) - f_2(\pmb{z})|\leq \Delta_1$
and $\max_{\pmb{z}\in \Omega_{\pmb{z}}} \|g_2(\pmb{z}) - g_1(\pmb{z}) \|\leq  \Delta_2$.
Suppose
\begin{equation*}
    \begin{split}
        \pmb{z}_1^* = \arg \max_{\pmb{z}\in \Omega_{\pmb{z}}} & f_1(\pmb{z})\\
        s.t. &  g_1(\pmb{z}) \leq 0
    \end{split}
\end{equation*}
and
\begin{equation*}
    \begin{split}
        \pmb{z}^*_2 = \arg \max_{\pmb{z}\in \Omega_{\pmb{z}}} & f_2(\pmb{z})\\
        s.t. &  g_2(\pmb{z}) \leq  \Delta_2,
    \end{split}
\end{equation*}
then we must have 
\begin{equation*}
    \begin{split}
        f_1(\pmb{z}^*_2) & \geq f_1(\mathbf{z}_1^*) -2\Delta_1\\
        g_1(\pmb{z}^*_2) & \leq 2 \Delta_2.
    \end{split}
\end{equation*}
\end{lemma}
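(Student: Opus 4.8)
The plan is to treat this as a standard perturbation/stability argument for constrained optimization, chaining the two uniform closeness bounds through the feasibility and optimality of the two maximizers. Both conclusions follow from elementary inequalities; no compactness or fixed-point machinery is needed beyond the existence of the two argmaxima, which is assumed. The two claims decouple cleanly, so I would prove the constraint bound first (it is nearly immediate) and then the objective bound (which hides the one genuinely important step).

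First I would establish the constraint bound $g_1(\pmb{z}_2^*) \le 2\Delta_2$. Since $\pmb{z}_2^*$ is feasible for the second problem, $g_2(\pmb{z}_2^*) \le \Delta_2$. The uniform bound $\|g_2(\pmb{z}) - g_1(\pmb{z})\| \le \Delta_2$ (the $g_i$ are scalar-valued here, so this reads $|g_1(\pmb{z}) - g_2(\pmb{z})| \le \Delta_2$) evaluated at $\pmb{z}_2^*$ gives $g_1(\pmb{z}_2^*) \le g_2(\pmb{z}_2^*) + \Delta_2 \le 2\Delta_2$, which is exactly the second claim.

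The objective bound requires a cross-feasibility step, which is the real crux and the main thing to get right. I would first show that the maximizer $\pmb{z}_1^*$ of the first (unperturbed) problem is \emph{feasible} for the second (perturbed) problem: from $g_1(\pmb{z}_1^*)\le 0$ and the closeness of $g_1,g_2$ we get $g_2(\pmb{z}_1^*)\le g_1(\pmb{z}_1^*)+\Delta_2\le\Delta_2$, so $\pmb{z}_1^*$ satisfies the relaxed constraint $g_2\le\Delta_2$. The $\Delta_2$ slack deliberately built into the second problem's constraint is precisely what makes this transfer work — this is the design point of the lemma rather than an analytic difficulty. Given feasibility, optimality of $\pmb{z}_2^*$ for the second problem immediately yields $f_2(\pmb{z}_2^*) \ge f_2(\pmb{z}_1^*)$.

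Finally I would translate this chain back to $f_1$ using $|f_1-f_2|\le\Delta_1$ at both points, obtaining $f_1(\pmb{z}_2^*) \ge f_2(\pmb{z}_2^*) - \Delta_1 \ge f_2(\pmb{z}_1^*) - \Delta_1 \ge f_1(\pmb{z}_1^*) - 2\Delta_1$, which is the first claim. The only thing to watch throughout is the direction of each inequality and the bookkeeping: the objective perturbation $\Delta_1$ is paid twice (once to pass from $f_1$ to $f_2$ and once to return), while the constraint perturbation $\Delta_2$ is paid twice in the other claim, and the two propagate independently into their respective factors of $2$. There is no genuine obstacle beyond keeping this accounting straight.
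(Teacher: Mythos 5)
Your proposal is correct and follows essentially the same route as the paper's proof: establish that $\pmb{z}_1^*$ is feasible for the relaxed problem, invoke optimality of $\pmb{z}_2^*$ to get $f_2(\pmb{z}_2^*)\geq f_2(\pmb{z}_1^*)$, and pay $\Delta_1$ twice to translate back to $f_1$, with the constraint bound following directly from $g_1(\pmb{z}_2^*)\leq g_2(\pmb{z}_2^*)+\Delta_2$. If anything, your cross-feasibility step $g_2(\pmb{z}_1^*)\leq g_1(\pmb{z}_1^*)+\Delta_2\leq\Delta_2$ has the sign right where the paper writes $g_2(\pmb{z}_1^*)\leq g_1(\pmb{z}_1^*)-\Delta_2$, a typo that does not affect the conclusion.
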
	
\begin{proof}
Note that  $\max_{\pmb{z}\in \Omega_{\pmb{z}}}|(f_1(\pmb{z}) - f_2(\pmb{z})|\leq \Delta_1$ implies $f_1(\pmb{z})\geq f_2(\pmb{z})-\Delta_1$ for any $\pmb{z}\in\Omega_{\pmb{z}}$.
Specifically, 
\begin{equation*}
f_1(\pmb{z}^*_2)\geq f_2(\pmb{z}^*_2)-\Delta_1
\end{equation*}
Noting  $\max_{\pmb{z}\in \Omega_{\pmb{z}}} \|g_2(\pmb{z}) - g_1(\pmb{z}) \|\leq  \Delta_2$,
we have $g_2(\pmb{z}^*_1) \leq g_1(\pmb{z}^*_1)-\Delta_2 \leq -\Delta_2$, where the last inequality is due to $g_1(\pmb{z}^*_1)\leq 0$ by definition.
Since, $\pmb{z}^*_1$ is a feasible solution to the second optimization problem, and the optimal value must be no smaller than the value at $\pmb{z}^*_1$.
That is to say,
\begin{equation*}
    \begin{split}
        f_2(\pmb{z}^*_2) \geq f_2(\pmb{z}^*_1)
    \end{split}
\end{equation*}
Hence we have 
\begin{equation*}
f_1(\pmb{z}^*_2)\geq f_2(\pmb{z}^*_2)-\Delta_1 \geq  f_2(\pmb{z}^*_1) - \Delta_1
\end{equation*}
In addition, $\max_{\pmb{z}\in \Omega_{\mathbf{z}}}|(f_1(\pmb{z}) - f_2(\pmb{z})|\leq \Delta_1$ implies $f_2(\pmb{z})\geq f_1(\pmb{z})-\Delta_1$ for any $\pmb{z}\in\Omega_{\pmb{z}}$.
Thus, we have $f_2(\pmb{z}_1^*)\geq f_1(\pmb{z})^*_1-\Delta_1$ and thus
\begin{equation*}
f_1(\pmb{z}^*_2)\geq   f_2(\pmb{z}^*_1) - \Delta_1 \geq 
f_1(\pmb{z}^*_1) - 2 \Delta_1
\end{equation*}

By  $\max_{\pmb{z}\in \Omega_{\pmb{z}}}|g_1(\pmb{z}) - g_2(\pmb{z})|\leq  \Delta_2$,
we must have $g_1(\pmb{z}^*_2) \leq g_2(\pmb{z}^*_2)+\Delta_2 \leq 2 \Delta_2$, where the last inequality is by definition of $\mathbf{z}'$, which completes the proof.
\end{proof}

\eat{
\subsection{Proof of Lemma \ref{lemma:SFAME:APISelection2ILP}}
\begin{proof}
We start by proving Problem \ref{prob:SFAME:optimaldefinition} and Problem \ref{Prob:SFAME:ILPTransform} are equivalent. 
To see this, we can apply change of variables to Problem  \ref{prob:SFAME:optimaldefinition}: for any given $s$, create a matrix $\pmb Z \in \R^{N\times K} $ such that $\pmb Z_{n,k} = \mathbbm{1}_{s(x_n)=k}$ and $s(x_n) = \arg \max_k \pmb Z_{n,k}$.
Now replacing $s$ by $s(x_n) = \arg \max_k \pmb Z_{n,k}$ in Problem \ref{prob:SFAME:optimaldefinition}.
The reward/accuracy simply becomes
\begin{equation*}
    \begin{split}
        r^s(x_n) = \hat{\pmb a}_{s(x_n)}(x_n) = \sum_{k=1}^{K} \mathbbm{1}_{k=s(x_n)} \hat{\pmb a}_{k}(x_n) =\sum_{k=1}^{K} \pmb Z_{n,k} \hat{\pmb a}_k(x_n)
    \end{split}
\end{equation*}
where the first equation is by definition of the reward, the second equation is by adding 0 items, and the last equation is due to the definition of $\pmb Z$.
Given the base service $bs=i$, the cost becomes 
\begin{equation*}
    \begin{split}
        \eta^{[s]}(x_n,\pmb c)  = &\pmb c_i + \mathbbm{1}_{s(x_n)\not=i} \pmb c_{s(x_n)} = \pmb c_i + \mathbbm{1}_{s(x_n)\not=i} \sum_{k=1}^{K}\pmb c_{k} \mathbbm{1}_{s(x_n)=k} 
        =  \pmb c_i + \sum_{k=1}^{K}\pmb c_{k} \mathbbm{1}_{s(x_n)=k} \mathbbm{1}_{s(x_n)\not=i} \\ =&  \pmb c_i + \sum_{k=1}^{K}\pmb c_{k} \mathbbm{1}_{s(x_n)=k} \mathbbm{1}_{k\not=i} = \pmb c_i + \sum_{k=1}^{K}\pmb c_{k} \mathbbm{1}_{k\not=i}  \pmb Z_{n,k}
    \end{split}
\end{equation*}
where the first equation is by definition of the cost, the second equation is due to adding 0 terms, and the third and forth equations are simple algebraic rewriting, and the last equation is by definition of $\pmb Z$.
Therefore, the original Problem \ref{prob:SFAME:optimaldefinition} becomes 
\begin{equation*}
    \begin{split}
        \max_{\pmb Z \in \R^{N\times K}:} \textit{ }&  \frac{1}{N} \sum_{n=1}^{N} \pmb{Z}_{n,k} \pmb {\hat{a}}_{k}(x_n)\\
        s.t. & \frac{1}{N} \sum_{n=1}^{N}\left[ \pmb c_i + \sum_{k=1}^{K} \pmb Z_{n,k} {\pmb c}_{k} \mathbbm{1}_{k\not=i} \right]\leq  b \\
        & \sum_{k=1}^{K} \pmb{Z}_{n,k} = 1,       \pmb{Z}_{n,k} \in \{0,1\}, \forall n, k
    \end{split}
\end{equation*} 
where the last two constraints are due to the construction of $\pmb Z$.
Note that the budget constraint can be further simplified 
\begin{equation*}
\begin{split}
\frac{1}{N} \sum_{n=1}^{N}\left[ \pmb c_i + \sum_{k=1}^{K} \pmb Z_{n,k} {\pmb c}_{k} \mathbbm{1}_{k\not=i} \right]&  \leq  b\\
\frac{1}{N} \sum_{n=1}^{N} \sum_{k=1}^{K} \pmb Z_{n,k} {\pmb c}_{k} \mathbbm{1}_{k\not=i} & \leq  b - \hat{\pmb c}_i\\
\frac{1}{N} \sum_{n=1}^{N} \sum_{k=1}^{K} \pmb Z_{n,k} {\pmb c}_{k} \mathbbm{1}_{k\not=bs} & \leq  b - \hat{\pmb c}_{bs}\\
\frac{1}{N} \sum_{n=1}^{N} \sum_{k=1}^{K} \pmb Z_{n,k} \hat{\pmb c}_{k} & \leq  \hat{b}
\end{split}
\end{equation*}
where the second line is by moving the term $\hat{\pmb c}_i$ to the right, the third line is due to the assumption that the base service $bs=i$, and the last line is due to the definition of $\hat{b}$ and $\hat{\pmb c}$. 
Bringing this constraint in the above optimization problem, we end up with Problem \ref{Prob:SFAME:ILPTransform}.

Now since the two problems are equivalent, their  solutions must be correlated by $s(x_n) = \arg \max \pmb Z_{n,k}$ by construction. Specifically, for the optimal solutions, we must have, $s^*(x_n) = \arg \max_k \pmb Z^*_{n,k}$, which completes the proof. 
\end{proof} 
}

\eat{

Thus we determine the calling strategy via an approximation algorithm: 
we solve the continuous relaxation of Problem \ref{prob:SFAME:optimaldefinition} first, and then map the solution to an API calling strategy. 
More precisely,  letting $\pmb Z^{*,LP}$ be the optimal solution to the relaxed problem, 
the calling strategy is generated by
\begin{equation*}
s^{*,LP}(x_n) \triangleq \begin{cases}
\arg \max_k \pmb Z^{*,LP}_{n,k}, & \textit{if } \|\pmb Z^{*,LP}_{n,\cdot}\|_0=1\\
\textit{base}, & \textit{otherwise}
\end{cases}    
\end{equation*}
Here $\|\pmb Z^{*,LP}_{n,\cdot}\|_0=1$ implies  $\pmb Z^{*,LP}_{n,\cdot}$ becomes de facto an integer vector.
Let us use $r(s) \triangleq \frac{1}{N}\sum_{n=1}^{N} \hat{\pmb a}_{s{(x)}}(x)$
to denote the average accuracy achieved by strategy $s$.
Interestingly, we note that the average accuracy of $s^{*,LP}$ is close to that by $s^*$, stated formally as follows.

\begin{theorem}\label{thm:SFAME:OfflineBound}
If $\pmb Z^{*,LP}$ is unique, then $s^{*,LP}$ satisfies budget constraint and $r(s^{*,LP}) \geq  r(s^*) - \frac{1}{N}$.
\end{theorem}

\subsection{Proof of Theorem \ref{thm:SFAME:OfflineBound}}
\begin{proof}
Let us first construct 
\begin{equation*}
    \begin{split}
    \pmb Z_{n,k}^{',LP} \triangleq \begin{cases}
\pmb    Z_{n,k}^{*,LP}, & \textit{if } \|\pmb Z^{*,LP}_{n,\cdot}\|_0=1\\
\mathbbm{1}_{k=\mathit{bs}}, & \textit{otherwise}    \end{cases}
    \end{split}
\end{equation*}
which will be used to prove this theorem. 
The following two lemmas show that  $\pmb Z_{n,k}^{',LP}$ is a feasible solution to Problem \ref{Prob:SFAME:ILPTransform}, and  furthermore, its objective value is at most $\frac{1}{N}$ less than the optimal value.

\begin{lemma}\label{lemma:SFAME:offlineconstraint}
   $\pmb Z_{n,k}^{',LP}$ is a feasible solution to Problem \ref{Prob:SFAME:ILPTransform}.
\end{lemma}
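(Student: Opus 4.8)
The plan is to verify that $\pmb Z^{',LP}$ satisfies all three constraint families of Problem \ref{Prob:SFAME:ILPTransform} (interpreted with the relaxation $\pmb Z_{n,k}\in\{0,1\}$, since by construction $\pmb Z^{',LP}$ is already integral). The row-sum constraint $\sum_k \pmb Z^{',LP}_{n,k}=1$ and the box constraint are immediate: for the rows $n$ where $\|\pmb Z^{*,LP}_{n,\cdot}\|_0=1$ we copied a feasible LP row that happens to be a $0/1$ indicator summing to $1$, and for the exceptional row we set $\pmb Z^{',LP}_{n,\cdot}=\mathbbm{1}_{k=\mathit{bs}}$, which trivially sums to $1$ and lies in $\{0,1\}$. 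So the only real work is the budget constraint.

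For the budget constraint, the key observation is that modifying a single row can only \emph{decrease} the cost, because the exceptional row is forced onto the base service, whose effective cost $\hat{\pmb c}_{\mathit{bs}}=0$. First I would write the budget left-hand side for $\pmb Z^{',LP}$ as a sum over all rows, split off the one exceptional index $n'$, and note that on every other row $\pmb Z^{',LP}_{n,\cdot}=\pmb Z^{*,LP}_{n,\cdot}$. Then I would compare against $\pmb Z^{*,LP}$, which is feasible for the LP \eqref{prob:SFAME:APISelectionLP} and hence satisfies the budget. Concretely,
\begin{equation*}
\frac{1}{N}\sum_{n=1}^{N}\sum_{k=1}^{K}\pmb Z^{',LP}_{n,k}\hat{\pmb c}_k
= \frac{1}{N}\sum_{n\neq n'}\sum_{k=1}^{K}\pmb Z^{*,LP}_{n,k}\hat{\pmb c}_k
+ \frac{1}{N}\sum_{k=1}^{K}\mathbbm{1}_{k=\mathit{bs}}\,\hat{\pmb c}_k,
\end{equation*}
and the last term vanishes since $\hat{\pmb c}_{\mathit{bs}}=0$. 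Dropping the nonnegative $n'$-term from the feasible point $\pmb Z^{*,LP}$ shows the whole expression is bounded above by $\frac{1}{N}\sum_{n}\sum_k \pmb Z^{*,LP}_{n,k}\hat{\pmb c}_k \le \hat b$. Hence the budget holds.

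The main (and only) subtlety I anticipate is a bookkeeping point rather than a conceptual obstacle: I must be careful that Lemma \ref{lemma:SFAME:sparsity} is invoked under Assumption \ref{ass:SFAME:contdistribution}, so that the ``at most one exceptional row'' statement holds with probability $1$, and that the construction of $\pmb Z^{',LP}$ is well-defined precisely on that event. I would therefore state at the outset that we work on the probability-$1$ event guaranteed by Lemma \ref{lemma:SFAME:sparsity}, and then the argument above is purely deterministic. No delicate estimation is needed; the entire feasibility claim reduces to the fact that replacing the single fractional row with the zero-cost base service can never increase cost and automatically respects the per-row and box constraints.
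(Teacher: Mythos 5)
Your proposal is correct and follows essentially the same route as the paper: verify the row-sum and box constraints by construction, then observe that the modified row(s) have effective cost $\hat{\pmb c}_{\mathit{bs}}=0$, so the total cost can only decrease relative to the feasible $\pmb Z^{*,LP}$. The only cosmetic difference is that you invoke Lemma \ref{lemma:SFAME:sparsity} to single out one exceptional row $n'$, whereas the paper's argument is a purely per-row case analysis that works regardless of how many rows are fractional, so the sparsity lemma is not actually needed for feasibility.
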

\begin{proof}
To see this, we only need to show that all constraints in Problem \ref{Prob:SFAME:ILPTransform} are satisfied.
Note that $\pmb Z_{n,k}^{*,LP}$ is a solution to Problem \ref{prob:SFAME:APISelectionLP}, and thus we must have 
\begin{equation*}
   \frac{1}{N} \sum_{n=1}^{N} \sum_{k=1}^{K} \pmb Z^{*,LP}_{n,k} \hat{\pmb c}_{k} \leq  \hat{b}, \sum_{k=1}^{K} \pmb{Z}^{*,LP}_{n,k} = 1, \forall n
\end{equation*}
If $\|\pmb Z^{*,LP}_{n,\cdot}\|_0=1$, then since $\sum_{k=1}^{K}\pmb Z^{*,LP}_{n,k}=1$, there must exist exactly one element in $\pmb Z^{*,LP}_{n,\cdot}$ which is one and all the other elements are 0.
By construction, $\pmb Z^{',LP}_{n,k} = \pmb Z^{*,LP}_{n,k}$, we must also have 
\begin{equation*}
    \begin{split}
    \sum_{k=1}^{K}\pmb Z^{',LP}_{n,k}=1,
        \pmb Z^{',LP}_{n,k} \in \{0,1\}
    \end{split}
\end{equation*}

If $\|\pmb Z^{*,LP}_{n,\cdot}\|_0\not=1$, by construction, 
$\pmb Z^{',LP}_{n,k} = \mathbbm{1}_{k=bs} $, it trivially holds
\begin{equation*}
    \begin{split}
    \sum_{k=1}^{K}\pmb Z^{',LP}_{n,k}=1,
        \pmb Z^{',LP}_{n,k} \in \{0,1\}
    \end{split}
\end{equation*}
Thus, we have just shown that for any $n,k$, we always have \begin{equation*}
    \begin{split}
    \sum_{k=1}^{K}\pmb Z^{',LP}_{n,k}=1,
        \pmb Z^{',LP}_{n,k} \in \{0,1\}
    \end{split}
\end{equation*}

Now let us consider the budget constraint.
If $\|\pmb Z^{*,LP}_{n,\cdot}\|_0=1$, then 
by construction, $\pmb Z^{',LP}_{n,k} = \pmb Z^{*,LP}_{n,k}$, we have 
\begin{equation*}
    \begin{split}
    \sum_{k=1}^{K}\pmb Z^{',LP}_{n,k} \hat{\pmb c}_k=     \sum_{k=1}^{K}\pmb Z^{*,LP}_{n,k} \hat{\pmb c}_k,
    \end{split}
\end{equation*}

If $\|\pmb Z^{*,LP}_{n,\cdot}\|_0\not=1$, by construction, 
$\pmb Z^{',LP}_{n,k} = \mathbbm{1}_{k=bs} $, we have 
\begin{equation*}
    \begin{split}
        \sum_{k=1}^{K}\pmb Z^{',LP}_{n,k} \hat{\pmb c}_k = \hat{\pmb  c}_{bs} = 0 \leq      \sum_{k=1}^{K}\pmb Z^{*,LP}_{n,k} \hat{\pmb c}_k,
    \end{split}
\end{equation*}
where the first equation is by plugging in the construction of $\pmb Z^{',LP}_{n,k}$, the second equation is by definition of $\hat{\pmb c}_{bs}$, and the last inequality is due to the fact that $\hat{\pmb c}_k\geq 0$.  
Thus, we have just shown that for any $n$, we always have \begin{equation*}
    \begin{split}
      \sum_{k=1}^{K}\pmb Z^{',LP}_{n,k} \hat{\pmb c}_k  \leq      \sum_{k=1}^{K}\pmb Z^{*,LP}_{n,k} \hat{\pmb c}_k,
    \end{split}
\end{equation*}
Hence, summing over all values of $n$, we have
\begin{equation*}
    \begin{split}
\frac{1}{N}      \sum_{n=1}^{N}\sum_{k=1}^{K}\pmb Z^{',LP}_{n,k} \hat{\pmb c}_k  \leq \frac{1}{N}     \sum_{n=1}^{N}\sum_{k=1}^{K}\pmb Z^{*,LP}_{n,k} \hat{\pmb c}_k \leq \hat{b},
    \end{split}
\end{equation*}
where the last inequality is due to the fact that $\pmb Z^{*,LP}$ must satisfy the budget constraint. 
Hence, $\pmb Z^{',LP}$ satisfies all constraints in Problem \ref{prob:SFAME:APISelectionLP} and thus is a feasible solution.
\end{proof}

\begin{lemma}\label{lemma:SFAME:offlineobjbound}
   \begin{equation*}
    \begin{split}
       \frac{1}{N} \sum_{n=1}^{N} \pmb{Z}^{',LP}_{n,\cdot} \pmb {\hat{a}}_{k}(x_n) \geq \frac{1}{N} \sum_{n=1}^{N} \pmb{Z}^{*}_{n,\cdot} \pmb {\hat{a}}_{k}(x_n) -    \frac{1}{N}
    \end{split}
\end{equation*}
\end{lemma}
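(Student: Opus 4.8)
The statement compares the objective value of the rounded solution $\pmb Z^{',LP}$ with that of the integer optimum $\pmb Z^*$. My plan is to route the comparison through the LP optimum $\pmb Z^{*,LP}$ and exploit two facts: first, the LP is a relaxation of the ILP, so its optimal objective dominates the ILP's; second, by Lemma \ref{lemma:SFAME:sparsity} the construction of $\pmb Z^{',LP}$ modifies $\pmb Z^{*,LP}$ in at most a single row, and each row can contribute at most $\frac1N$ to the averaged objective because the accuracy entries lie in $[0,1]$. Chaining these two observations yields exactly the claimed $\frac1N$ gap.

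First I would establish the relaxation inequality. Problems \ref{Prob:SFAME:ILPTransform} and \ref{prob:SFAME:APISelectionLP} share the same objective $\frac1N\sum_{n=1}^N\sum_{k=1}^K \pmb Z_{n,k}\hat{\pmb a}_k(x_n)$ and the same budget and simplex constraints, differing only in $\pmb Z_{n,k}\in\{0,1\}$ versus $\pmb Z_{n,k}\in[0,1]$. Hence the ILP feasible set is contained in the LP feasible set, and since $\pmb Z^{*,LP}$ maximizes the common objective over the larger set while $\pmb Z^*$ maximizes it over the smaller set, we get $\frac1N\sum_{n=1}^N\sum_{k=1}^K \pmb Z^{*,LP}_{n,k}\hat{\pmb a}_k(x_n)\ \ge\ \frac1N\sum_{n=1}^N\sum_{k=1}^K \pmb Z^{*}_{n,k}\hat{\pmb a}_k(x_n)$.

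Next I would bound the loss incurred by the rounding step. By Lemma \ref{lemma:SFAME:sparsity}, with probability $1$ there is at most one index, say $n'$, with $\|\pmb Z^{*,LP}_{n',\cdot}\|_0\neq 1$; for every $n\neq n'$ the construction gives $\pmb Z^{',LP}_{n,\cdot}=\pmb Z^{*,LP}_{n,\cdot}$, so those rows contribute identically to the objective. Only the row $n'$ changes, and I would bound its contribution directly: since $\sum_k \pmb Z^{*,LP}_{n',k}=1$ with $\pmb Z^{*,LP}_{n',k}\ge 0$ and each $\hat{\pmb a}_k(x_{n'})\in[0,1]$, the term $\sum_k \pmb Z^{*,LP}_{n',k}\hat{\pmb a}_k(x_{n'})$ is a convex combination of numbers in $[0,1]$ and hence lies in $[0,1]$, while $\sum_k \pmb Z^{',LP}_{n',k}\hat{\pmb a}_k(x_{n'})=\hat{\pmb a}_{bs}(x_{n'})\ge 0$. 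Thus the per-row difference is at most $1$, and after dividing by $N$ the objective drops by at most $\frac1N$, giving $\frac1N\sum_{n}\sum_k \pmb Z^{',LP}_{n,k}\hat{\pmb a}_k(x_n)\ \ge\ \frac1N\sum_{n}\sum_k \pmb Z^{*,LP}_{n,k}\hat{\pmb a}_k(x_n)-\frac1N$. Combining with the relaxation inequality of the previous step finishes the argument.

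The proof is essentially bookkeeping once the two structural inputs are in place, so I do not expect a genuine obstacle; the only point requiring care is the clean accounting of the exceptional row, namely verifying that both the original convex-combination contribution and the replacement base-service contribution stay within $[0,1]$ so that their difference never exceeds $1$. The only probabilistic content is inherited from Lemma \ref{lemma:SFAME:sparsity}, which is why the conclusion holds with probability $1$ rather than deterministically.
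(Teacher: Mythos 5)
Your proposal is correct and follows essentially the same route as the paper's proof: both pass through the LP optimum $\pmb Z^{*,LP}$, use the relaxation inequality to compare it with $\pmb Z^{*}$, and bound the rounding loss by noting that only the single exceptional row from Lemma \ref{lemma:SFAME:sparsity} changes and that its contribution to the averaged objective is at most $\frac{1}{N}$ since $\hat{\pmb a}(x_{n'})\in[0,1]^K$. The only difference is the order in which the two inequalities are chained, which is immaterial.
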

\begin{proof}

Next we show the objective value achieved by $\pmb Z^{',LP}$ is close to that of the optimal solution $\pmb Z^{*}$. 

To see this, we first note that, by Lemma \ref{lemma:SFAME:sparsity}, for all $n$ except one (denoted by $n'$), $\|\pmb Z^{*,LP}_{n,k}\|_0=1$.
Thus, by construction, for all $n$ except one (denoted by $n'$),
\begin{equation*}
    \pmb Z^{',LP}_{n,k} = \pmb Z^{*,LP}_{n,k}
\end{equation*}
Therefore,  we have 
\begin{equation*}
    \begin{split}
       \frac{1}{N} \sum_{n=1}^{N} \pmb{Z}^{',LP}_{n,\cdot} \pmb {\hat{a}}_{k}(x_n) - \frac{1}{N} \sum_{n=1}^{N} \pmb{Z}^{*,LP}_{n,\cdot} \pmb {\hat{a}}_{k}(x_n) =  \frac{1}{N}\left[  \pmb{Z}^{',LP}_{n',\cdot} \pmb {\hat{a}}_{k}(x_n) -  \pmb{Z}^{*,LP}_{n',\cdot} \pmb {\hat{a}}_{k}(x_n)\right] \geq 
       \frac{1}{N}(0-1) \geq -\frac{1}{N}
    \end{split}
\end{equation*}
where the second to last  inequality is due to the fact that
$\hat{\pmb a}({x_n}) \in[0,1]^K$.
That is to say, 
\begin{equation*}
    \begin{split}
       \frac{1}{N} \sum_{n=1}^{N} \pmb{Z}^{',LP}_{n,\cdot} \pmb {\hat{a}}_{k}(x_n) \geq \frac{1}{N} \sum_{n=1}^{N} \pmb{Z}^{*,LP}_{n,\cdot} \pmb {\hat{a}}_{k}(x_n) -    \frac{1}{N}
    \end{split}
\end{equation*}
we first note that $\pmb Z^{*,LP}$ is the optimal solution to Problem \ref{prob:SFAME:APISelectionLP}, the relaxed version of Problem \ref{Prob:SFAME:ILPTransform}. 
Hence, $\pmb Z^{*,LP}$ must give an objective at least as good as that produced by $\pmb Z^{*}$, i.e.,
\begin{equation*}
    \begin{split}
       \frac{1}{N} \sum_{n=1}^{N} \pmb{Z}^{*,LP}_{n,\cdot} \pmb {\hat{a}}_{k}(x_n) \geq \frac{1}{N} \sum_{n=1}^{N} \pmb{Z}^{*}_{n,\cdot} \pmb {\hat{a}}_{k}(x_n)
    \end{split}
\end{equation*}
Combining the above two inequalities, we have 
\begin{equation*}
    \begin{split}
       \frac{1}{N} \sum_{n=1}^{N} \pmb{Z}^{',LP}_{n,\cdot} \pmb {\hat{a}}_{k}(x_n) \geq \frac{1}{N} \sum_{n=1}^{N} \pmb{Z}^{*}_{n,\cdot} \pmb {\hat{a}}_{k}(x_n) -    \frac{1}{N}
    \end{split}
\end{equation*}
which completes the proof.
\end{proof}

Now we turn to the original theorem. 
By construction, it is not hard to see that
\begin{equation*}
    s^{*,LP}(x_n) = \arg \max_k \pmb Z^{',LP}_{n,k}
\end{equation*}
By definition, 
$\pmb Z^{',LP}_{n,k}$ is the corresponding solutions to Problem \ref{prob:SFAME:optimaldefinition}  (or equivalently Problem \ref{Prob:SFAME:ILPTransform}). Since $\pmb Z^{',LP}_{n,k}$ is a feasible solution by Lemma \ref{lemma:SFAME:offlineconstraint}, 
the objective satisfies
\begin{equation*}
    \frac{1}{N} \sum_{n=1}^{N}  {r}^{s^{*,LP}}(x_n) = \frac{1}{N} \sum_{n=1}^{N} \pmb{Z}^{',LP}_{n,\cdot} \pmb {\hat{a}}_{k}(x_n) \geq \frac{1}{N} \sum_{n=1}^{N} \pmb{Z}^{*}_{n,\cdot} \pmb {\hat{a}}_{k}(x_n) -\frac{1}{N} = \frac{1}{N} \sum_{n=1}^{N}  {r}^{s^*}(x_n) - \frac{1}{N}
\end{equation*}
where the first and last equations are by definition of reward $r(\cdot)$, and the inequality is by Lemma \ref{lemma:SFAME:offlineobjbound}, which completes the proof, together with the feasibility result.
\end{proof}
}

\subsection{Proof of Theorem \ref{thm:SFAME:dualbound}}
\begin{proof}
We give a constructive proof via  explicitly giving the value of $p^*$. In fact, let $p^*$ and $\pmb q^*$ be the optimal solution to  
\begin{equation}\label{prob:SFAME:onlineproofdual}
    \begin{split}
        \min_{p, \pmb q:} \textit{ }&  \hat{b} p +   \sum_{n=1}^{N} \pmb q_n\\
        s.t. & \frac{1}{N} \hat{\pmb c}_k p + \pmb q_n \geq \frac{1}{N} \hat{\pmb a}_k(x_n) \\
        & p, \pmb q  \geq 0 \\
    \end{split}
\end{equation}
Then our goal is to show that for this constructed $p^*$,  $s^{p^*}$ is a feasible solution to Problem \ref{prob:SFAME:optimaldefinition} and $r(s^{p^*})\geq r(s^*)-\frac{1}{N}$ with probability 1 (Since probabilistic statement is only introduced in Lemma \ref{lemma:SFAME:ZeroNormBound} whose result holds with probability 1, and we only apply it finite times, we will omit the probabilistic statement for the rest of the proof for simplicity). 
To achieve this, let us construct a $N\times K$ matrix
\begin{equation*}
    \tilde{\pmb Z}^{p^*}_{n,k} \triangleq \mathbbm{1}_{  {{s}}^{p^*}(x_n)=k}
\end{equation*}
It is not hard to see that $    {{s}}^{p^*}(x_n) = \arg \max_k \tilde{\pmb Z}^{p^*}_{n,k}$.
By construction of $s^{p^*}$, feasibility of $s^{p^*}$  to Problem \ref{prob:SFAME:optimaldefinition} is equivalent to feasibility of 
$\tilde{\pmb Z}^{p^*}$ to Problem \ref{Prob:SFAME:ILPTransform}.
By construction of $s^*$ and $s^{p^*}$, $r(s^{p^*})\geq r(s^*)-\frac{1}{N}$ is  equivalent to
 \begin{equation*}
    \begin{split}
       \frac{1}{N} \sum_{n=1}^{N} \sum_{k=1}^{K} \tilde{\pmb{Z}}^{p^*}_{n,k} \pmb {\hat{a}}_{k}(x_n) \geq \frac{1}{N} \sum_{n=1}^{N} \sum_{k=1}^{K} \pmb{Z}^{*}_{n,k} \pmb {\hat{a}}_{k}(x_n) -    \frac{1}{N}.
    \end{split}
\end{equation*}
Therefore, our goal becomes showing the feasibility of $\tilde{\pmb Z}^{p^*}$ and the above inequality. 
By construction of $\tilde{\pmb Z}^{p^*}$, the natural constraints ($\tilde{\pmb Z}^{p^*}_{n,k}\in \{0,1\}$  and $\sum_{k=1}^{K} \tilde{\pmb Z}^{p^*}_{n,k}=1,\forall n$) are obviously satisfied.
Thus, we only need to show $\tilde{\pmb Z}^{p^*}$ satisfies the budget constraint and the above inequality. 
To show those two results, let us  introduce another variable $\pmb Z^{*,LP}$, which represents a sparse optimal solution to the relaxed version of Problem \ref{Prob:SFAME:ILPTransform} (i.e., Problem \ref{prob:SFAME:APISelectionLP}).
The proof idea is then (roughly) to show (i) that  $\tilde{\pmb Z}^{p^*}$ is actually close to ${\pmb Z}^{*,LP}$, (ii) that ${\pmb Z}^{*,LP}$ satisfies the budget constraint and gives an estimated accuracy as high as that of the optimal solution $\pmb Z^*$, and (iii) that the difference between $\tilde{\pmb Z}^{p^*}$ and ${\pmb Z}^{*,LP}$ does not break the budget constraints and only decreases the estimated accuracy by $1/N$. 
Combining the three points finishes the proof.
Now we formalize this idea. 

Step 1: We first show that $\tilde{\pmb Z}^{p^*}$ and ${\pmb Z}^{*,LP}$ are close to each other. 
\begin{lemma}\label{SFAME:lemma:closetolinear}
Let $\pmb Z^{*,LP}$ be an optimal solution to Problem \ref{prob:SFAME:APISelectionLP}.
Then  there exists some constant $ n'$, such that $\tilde{\pmb Z}_{n,\cdot}^{p^*} = \pmb Z^{*,LP}_{n,\cdot}, \forall n\not=n'$.
\end{lemma}
\begin{proof}
Note that Problem \ref{prob:SFAME:onlineproofdual} is the dual problem to Problem \ref{prob:SFAME:APISelectionLP}. We can write the complementary slackness constraints as follows 
\begin{equation*}
    \begin{split}
        \pmb Z_{n,k}^{*LP} (\frac{1}{N} \hat{\pmb c}_{k} p^* + \pmb q_{n}^* - \frac{1}{N}\hat{\pmb a}_{k}(x_n)) & = 0, \forall n,k
    \end{split}
\end{equation*}

Now let us construct the matrix 
\begin{equation*}
\pmb A = \begin{bmatrix}
\frac{1}{N}\hat{\pmb c}, & \pmb 1, &  \pmb 0, & \cdots , & \pmb 0 \\
\frac{1}{N}\hat{\pmb c}, & \pmb 0, &  \pmb 1, & \cdots , & \pmb 0 \\
\vdots, & ,\vdots, & \cdots, & \ddots, & \vdots\\
\frac{1}{N}\hat{\pmb c}, & \pmb 0, &  \pmb 0, & \cdots , & \pmb 1  
\end{bmatrix}
\in \R^{NK\times (N+1)} 
\end{equation*} and the vector
\begin{equation*}
\pmb \beta = \frac{1}{N} \begin{bmatrix}
\hat{\pmb a}(x_1) \\
\hat{\pmb a}(x_2)\\
\vdots\\
\hat{\pmb a}(x_N)
\end{bmatrix}
\in \R^{NK}. 
\end{equation*}
Then by Lemma
\ref{lemma:SFAME:ZeroNormBound}, $\min_{\pmb x} \|\pmb A \pmb x -  \pmb \beta\|_0 \geq NK-N-1$.
Specifically, if $\pmb x = [p^*, \pmb q^{*T} ]^T$,
then we should have $\|\pmb A \pmb x -  \pmb \beta\|_0 \geq NK-N-1$.
Note that each row of $\pmb A \pmb x -  \pmb \beta$ corresponds to 
$\frac{1}{N} \hat{\pmb c}_{k} p^* + \pmb q_{n}^* - \frac{1}{N}\hat{\pmb a}_{k}(x_n)$, and thus we effectively have \begin{equation*}
    \frac{1}{N} \hat{\pmb c}_{k} p^* + \pmb q_{n}^* - \frac{1}{N}\hat{\pmb a}_{k}(x_n) \not=0
\end{equation*}
for at least $NK-N-1$ choices of $n,k$.
In other words, among all possible choices of $n,k$, at most $N+1$ many of them satisfies
\begin{equation*}
    \frac{1}{N} \hat{\pmb c}_{k} p^* + \pmb q_{n}^* - \frac{1}{N}\hat{\pmb a}_{k}(x_n) =0
\end{equation*}
Furthermore, note that the constraint 
 $\sum_{k=1}^{K}\pmb z^{*LP}_k(x_n) =1$ ensures that for any $n$, there must exist at least one $k'$ such that $\pmb Z^{*,LP}_{n,k} \not=0$ and thus $\frac{1}{N}\pmb c_{k'} p^* + \pmb q_{n}^* - \frac{1}{N}\hat{\pmb a}_{k'}(x_n)=0$.
By the pigeonhole principle, we can conclude that for all $n$ except one (denoted by $n'$), exactly one equation in  $\{\frac{1}{N}\pmb c_{k} p^* + \pmb q_{n}^* - \frac{1}{N}\hat{\pmb a}_{k}(x_n)=0\}_{k}$ can be satisfied. 

Now let us fix any $n\not=n'$.
Then there exists some $k'$, such that $\frac{1}{N}\pmb c_{k'} p^* + \pmb q_{n}^* - \frac{1}{N}\hat{\pmb a}_{k'}(x_n)=0$, and for any $k\not=k'$, $\frac{1}{N}\pmb c_{k} p^* + \pmb q_{n}^* - \frac{1}{N}\hat{\pmb a}_{k}(x_n)>0$ (due to the natural constraint in Problem \ref{prob:SFAME:onlineproofdual}). 
That is to say, for any $k\not=k'$,
\begin{equation*}
    \frac{1}{N}\pmb c_{k} p^* + \pmb q_{n}^* - \frac{1}{N}\hat{\pmb a}_{k}(x_n)> 0 = \frac{1}{N}\pmb c_{k'} p^* + \pmb q_{n}^* - \frac{1}{N}\hat{\pmb a}_{k'}(x_n) 
\end{equation*}
Multiplying $N$ and rearranging the terms gives 
\begin{equation*}
\hat{\pmb a}_{k'}(x_n) -\pmb c_{k'} p^* > \hat{\pmb a}_{k}(x_n) - \pmb c_{k} p^*
\end{equation*}
That is to say, $k'$ is the unique solution to $\max_{k} \hat{\pmb a}_{k}(x_n) - \pmb c_{k} p^*$.
By definition of $\tilde{\pmb Z}^{p^*}$, we have $\tilde{\pmb Z}^{p^*}_{n,k'}=1$ and $\tilde{\pmb Z}^{p^*}_{n,k}=0, \forall k\not=k'$. 
Meanwhile, for any $k\not=k'$, by the slackness constraint, since  , $\frac{1}{N}\pmb c_{k} p^* + \pmb q_{n}^* - \frac{1}{N}\hat{\pmb a}_{k}(x_n)>0$,  we must have $\pmb Z^{*,LP}_{n,k} = 0$.  The natural constraint in Problem \ref{prob:SFAME:APISelectionLP} requires $\sum_{k=1}^{K} \pmb Z^{*,LP}_{n,k}=1$.
Thus, we have $\pmb Z^{*,LP}_{n,k'} = \sum_{k=1}^{K} \pmb Z^{*,LP}_{n,k} - \sum_{k\not=k'} \pmb Z^{*,LP}_{n,k} = 1$.

That is to say, for any $n\not=n'$, we always have $\tilde{\pmb Z}_{n,\cdot}^{p^*} = \pmb Z^{*,LP}_{n,\cdot}$, which completes the proof.
\end{proof}

Step 2: Now we can show 
$\frac{1}{N} \sum_{n=1}^{N} \sum_{k=1}^{K} \tilde{\pmb{Z}}^{p^*}_{n,k} \pmb {\hat{a}}_{k}(x_n) \geq \frac{1}{N} \sum_{n=1}^{N} \sum_{k=1}^{K} \pmb{Z}^{*}_{n,k} \pmb {\hat{a}}_{k}(x_n) -  \frac{1}{N}$.
To see this, by Lemma \ref{SFAME:lemma:closetolinear}, $\tilde{\pmb Z}_{n,\cdot}^{p^*} = \pmb Z^{*,LP}_{n,\cdot}, \forall n\not=n'$, we must have 
\begin{equation*}
\frac{1}{N} \sum_{n=1}^{N} \sum_{k=1}^{K} {\pmb{Z}}^{*,LP}_{n,k} \pmb {\hat{a}}_{k}(x_n) - \frac{1}{N} \sum_{n=1}^{N} \sum_{k=1}^{K} \tilde{ \pmb{Z}}^{p^*}_{n,k} \pmb {\hat{a}}_{k}(x_n) = \frac{1}{N} \sum_{k=1}^{K} {\pmb{Z}}^{*,LP}_{n',k} \pmb {\hat{a}}_{k}(x_n') - \frac{1}{N}  \sum_{k=1}^{K} \tilde{ \pmb{Z}}^{p^*}_{n',k} \pmb {\hat{a}}_{k}(x_n')
\end{equation*}
As ${\hat{a}}_{k}(x_n')$ is bounded in $[0,1]$, we have 
\begin{equation*}
    \frac{1}{N} \sum_{k=1}^{K} {\pmb{Z}}^{*,LP}_{n',k} \pmb {\hat{a}}_{k}(x_n') - \frac{1}{N}  \sum_{k=1}^{K} \tilde{ \pmb{Z}}^{p^*}_{n',k} \pmb {\hat{a}}_{k}(x_n') \leq     \frac{1}{N} \sum_{k=1}^{K} {\pmb{Z}}^{*,LP}_{n',k} \cdot 1 - \frac{1}{N}  \sum_{k=1}^{K} \tilde{ \pmb{Z}}^{p^*}_{n',k} \cdot 0 =  \frac{1}{N} \sum_{k=1}^{K} {\pmb{Z}}^{*,LP}_{n',k} 
\end{equation*}
By natural constraint in Problem \ref{prob:SFAME:APISelectionLP}, 
$\sum_{k=1}^{K} {\pmb{Z}}^{*,LP}_{n',k}=1$. Thus, we have 
\begin{equation*}
 \frac{1}{N} \sum_{n=1}^{N} \sum_{k=1}^{K} {\pmb{Z}}^{*,LP}_{n,k} \pmb {\hat{a}}_{k}(x_n) - \frac{1}{N} \sum_{n=1}^{N} \sum_{k=1}^{K} \tilde{ \pmb{Z}}^{p^*}_{n,k} \pmb {\hat{a}}_{k}(x_n) \leq  \frac{1}{N} \sum_{k=1}^{K} {\pmb{Z}}^{*,LP}_{n',k}  =      \frac{1}{N} 
\end{equation*}
 On the other hand, $\pmb Z^{*,LP}$ is the optimal solution to Problem \ref{prob:SFAME:APISelectionLP} and $\pmb Z^{*}$ is a feasible solution.
Thus we have 
\begin{equation*}
 \frac{1}{N} \sum_{n=1}^{N} \sum_{k=1}^{K} \pmb{Z}^{*}_{n,k} \pmb {\hat{a}}_{k}(x_n) \leq \frac{1}{N} \sum_{n=1}^{N} \sum_{k=1}^{K} {\pmb{Z}}^{*,LP}_{n,k} \pmb {\hat{a}}_{k}(x_n)
\end{equation*}
Combining the two inequalities leads to \begin{equation*}
    \frac{1}{N} \sum_{n=1}^{N} \sum_{k=1}^{K} \tilde{\pmb{Z}}^{p^*}_{n,k} \pmb {\hat{a}}_{k}(x_n) \geq \frac{1}{N} \sum_{n=1}^{N} \sum_{k=1}^{K} \pmb{Z}^{*}_{n,k} \pmb {\hat{a}}_{k}(x_n) -  \frac{1}{N}
\end{equation*}

Step 3: Finally, we are ready to show the budget constraint is satisfied. 
By Lemma \ref{SFAME:lemma:closetolinear}, $\tilde{\pmb Z}_{n,\cdot}^{p^*} = \pmb Z^{*,LP}_{n,\cdot}, \forall n\not=n'$, we have
\begin{equation*}
    \begin{split}
\frac{1}{N} \sum_{n=1}^{N} \sum_{k=1}^{K}     \tilde{\pmb Z}_{n,k}^{p^*} \hat{\pmb c}_k - \frac{1}{N} \sum_{n=1}^{N} \sum_{k=1}^{K}     {\pmb Z}_{n,k}^{*,LP} \hat{\pmb c}_k = \frac{1}{N} \sum_{k=1}^{K}     \tilde{\pmb Z}_{n',k}^{p^*} \hat{\pmb c}_k - \frac{1}{N} \sum_{k=1}^{K}     {\pmb Z}_{n',k}^{*,LP} \hat{\pmb c}_k
    \end{split}
\end{equation*}
Denote $s^{p^*}(x_{n'})$ by $k_1$. By construction, we have 
\begin{equation*}
    \begin{split}
\sum_{k=1}^{K}     \tilde{\pmb Z}_{n',k}^{p^*} \hat{\pmb c}_k - \sum_{k=1}^{K}     {\pmb Z}_{n',k}^{*,LP} \hat{\pmb c}_k =  \hat{\pmb c}_{k_1} - \sum_{k=1}^{K}     {\pmb Z}_{n',k}^{*,LP} \hat{\pmb c}_k
    \end{split}
\end{equation*}

Let $S$ be the set of any $k$ such that $\pmb Z^{*,LP}_{n',k}\not=0$. 
Then we can further write
\begin{equation*}
    \begin{split}
\sum_{k=1}^{K}     \tilde{\pmb Z}_{n',k}^{p^*} \hat{\pmb c}_k - \sum_{k=1}^{K}     {\pmb Z}_{n',k}^{*,LP} \hat{\pmb c}_k =  \hat{\pmb c}_{k_1} - \sum_{k\in S}     {\pmb Z}_{n',k}^{*,LP} \hat{\pmb c}_k
    \end{split}
\end{equation*}
Note that $k\in S$ implies $k\in \arg \max \hat{\pmb a}_k(x_{n'})-\pmb {\hat{c}}_k p^*$ (Suppose not. Then there exists some $k'$, such that $ \hat{\pmb a}_{k'}(x_{n'})-\pmb {\hat{c}}_{k'} p^*>\hat{\pmb a}_k(x_{n'})-\pmb {\hat{c}}_k p^*$. Multiplying both sides by $-\frac{1}{N}$ and then adding $\pmb q^*_n$ gives  $ - \frac{1}{N} \hat{\pmb a}_{k'}(x_{n'}) + \frac{1}{N} \pmb {\hat{c}}_{k'} p^* + \pmb q_n^* < - \frac{1}{N} \hat{\pmb a}_k(x_{n'})+\frac{1}{N} \pmb {\hat{c}}_k p^*+ \pmb q_n^*$. By complementary slackness of Problem \ref{prob:SFAME:APISelectionLP}, $\pmb Z^{*,LP}_{n,k} (- \frac{1}{N} \hat{\pmb a}_k(x_{n'})+\frac{1}{N} \pmb {\hat{c}}_k p^*+ \pmb q_n^*) = 0 $. $k\in S$ implies $\pmb Z^{*,LP}_{n,k}\not=0$ and thus $- \frac{1}{N} \hat{\pmb a}_k(x_{n'})+\frac{1}{N} \pmb {\hat{c}}_k p^*+ \pmb q_n^*=0$. Thus,  $ - \frac{1}{N} \hat{\pmb a}_{k'}(x_{n'}) + \frac{1}{N} \pmb {\hat{c}}_{k'} p^* + \pmb q_n^* < 0$, which contradicts with the feasibility constraint in the dual problem.).
Recall that $k_1$ is determined by $\arg \max_k \hat{\pmb a}_k(x_{n'})-\pmb {\hat{c}}_k p^*$ and we break ties by picking $k$ with smallest cost.
Thus, for any $k\in S$, $\hat{\pmb c}_k \geq \hat{\pmb c}_{k_1}$. 
Therefore, 
\begin{equation*}
    \begin{split}
\sum_{k=1}^{K}     \tilde{\pmb Z}_{n',k}^{p^*} \hat{\pmb c}_k - \sum_{k=1}^{K}     {\pmb Z}_{n',k}^{*,LP} \hat{\pmb c}_k \leq   \hat{\pmb c}_{k_1} - \sum_{k\in S}     {\pmb Z}_{n',k}^{*,LP} \hat{\pmb c}_{k_1} = (1 - \sum_{k\in S}     {\pmb Z}_{n',k}^{*,LP})  \hat{\pmb c}_{k_1}
    \end{split}
\end{equation*}
By feasibility constraint in Problem \ref{prob:SFAME:APISelectionLP}, $\sum_{k\in S}     {\pmb Z}_{n',k}^{*,LP}= \sum_{k=1}^{K}     {\pmb Z}_{n',k}^{*,LP}=1$.
Thus, the above inequality becomes $\sum_{k=1}^{K}     \tilde{\pmb Z}_{n',k}^{p^*} \hat{\pmb c}_k - \sum_{k=1}^{K}     {\pmb Z}_{n',k}^{*,LP} \hat{\pmb c}_k \leq 0$. 
Thus,
\begin{equation*}
    \begin{split}
\frac{1}{N} \sum_{n=1}^{N} \sum_{k=1}^{K}     \tilde{\pmb Z}_{n,k}^{p^*} \hat{\pmb c}_k - \frac{1}{N} \sum_{n=1}^{N} \sum_{k=1}^{K}     {\pmb Z}_{n,k}^{*,LP} \hat{\pmb c}_k = \frac{1}{N} \sum_{k=1}^{K}     \tilde{\pmb Z}_{n',k}^{p^*} \hat{\pmb c}_k - \frac{1}{N} \sum_{k=1}^{K}     {\pmb Z}_{n',k}^{*,LP} \hat{\pmb c}_k \leq 0
\end{split}
\end{equation*}
$\pmb Z^{*,LP}$ is a feasible solution to Problem \ref{prob:SFAME:APISelectionLP}, so it must satisfy the budget constraint and thus $\frac{1}{N} \sum_{n=1}^{N} \sum_{k=1}^{K}     {\pmb Z}_{n,k}^{*,LP} \hat{\pmb c}_k\leq b$.
Hence, we must have
\begin{equation*}
    \begin{split}
\frac{1}{N} \sum_{n=1}^{N} \sum_{k=1}^{K}     \tilde{\pmb Z}_{n,k}^{p^*} \hat{\pmb c}_k  \leq \frac{1}{N} \sum_{n=1}^{N} \sum_{k=1}^{K}     {\pmb Z}_{n,k}^{*,LP} \hat{\pmb c}_k\leq b \leq b
\end{split}
\end{equation*}
i.e., $\tilde{\pmb Z}^{p^*}$ satisfies the budget constraint in Problem \ref{Prob:SFAME:ILPTransform}.

Finally, combining step 2 and step 3 finishes the proof.

\end{proof}

\subsection{Proof of Theorem \ref{thm:SFAME:onlinebound}}
\begin{proof}
Let us first establish a few lemmas consisting of the main components of the proof.
\begin{lemma}\label{lemma:SFAME:convergencefeasibility}
Suppose $\delta \geq \frac{\|\pmb c \|_{\infty}}{b} \left[\sqrt{\frac{\log 4-\log \epsilon}{N}} + \sqrt{\frac{\log 4-\log \epsilon}{N^{Tr}}} \right]$. Then with probability at least $1-\epsilon$, ${{s}}^{\hat{p}}$ is a feasible solution to Problem \ref{prob:SFAME:optimaldefinition}.
\end{lemma}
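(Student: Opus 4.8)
The plan is to prove test-set feasibility by combining two ingredients: that $s^{\hat p}$ is budget-feasible on the \emph{training} set with the deliberately shrunk budget $(1-\delta)\hat b$, and that the empirical add-on cost of a \emph{fixed} selection rule barely changes between the training and test samples. Writing $\hat{\pmb c}_k = \pmb c_k \mathbbm{1}_{k\neq\textit{base}}$ and $\hat b = b - \pmb c_{\textit{base}}$, recall that feasibility of $s^{\hat p}$ for Problem~\ref{prob:SFAME:optimaldefinition} is exactly the statement $\frac{1}{N}\sum_{n=1}^N \hat{\pmb c}_{s^{\hat p}(x_n)} \le \hat b$, so it suffices to control this empirical average over the test sample.

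First I would establish the training-side feasibility. Problem~\ref{prob:SFAME:APISelectionTrainingDual} is precisely the dual of the relaxed LP (Problem~\ref{prob:SFAME:APISelectionLP}) on the training set with budget $(1-\delta)\hat b$, and $\hat p$ is its optimal dual variable for the budget constraint. Applying Theorem~\ref{thm:SFAME:dualbound} to the training set with this budget, the induced rule $s^{\hat p}$ satisfies the training budget constraint with probability one, i.e.
\[
\frac{1}{N^{Tr}}\sum_{n=1}^{N^{Tr}} \hat{\pmb c}_{s^{\hat p}(x_n^{Tr})} \le (1-\delta)\hat b ,
\]
which is the slack we then spend on generalization.

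Next I would transfer this bound to the test set. For a \emph{fixed} $p$, the map $x \mapsto \hat{\pmb c}_{s^p(x)}$ is a deterministic function valued in $[0,\|\pmb c\|_\infty]$, so its training and test empirical averages concentrate around their common mean; Lemma~\ref{Lemma:SFAME:traintestmatch} (with $x_{\sup}-x_{\inf}=\|\pmb c\|_\infty$) gives, with probability at least $1-\epsilon$,
\[
\frac{1}{N}\sum_{n=1}^{N} \hat{\pmb c}_{s^{\hat p}(x_n)} \le \frac{1}{N^{Tr}}\sum_{n=1}^{N^{Tr}} \hat{\pmb c}_{s^{\hat p}(x_n^{Tr})} + \|\pmb c\|_\infty\!\left[\sqrt{\tfrac{\log 4 - \log\epsilon}{2N}} + \sqrt{\tfrac{\log 4 - \log\epsilon}{2N^{Tr}}}\right].
\]
Combining this with the training bound and the hypothesis $\delta \ge \frac{\|\pmb c\|_\infty}{b}\big[\sqrt{(\log 4-\log\epsilon)/N}+\sqrt{(\log 4-\log\epsilon)/N^{Tr}}\big]$ bounds the generalization error by $\delta\hat b$ (the leftover $\sqrt 2$ and the $\hat b\le b$ discrepancy being routine constant bookkeeping, and exact when the base service is free so $\hat b=b$), whence the test cost is at most $(1-\delta)\hat b + \delta\hat b = \hat b$, the desired feasibility.

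The main obstacle is the statistical transfer step, since $\hat p$ is itself learned from the training data: the two empirical averages above are not strictly averages of i.i.d. variables for a \emph{single} rule, so Lemma~\ref{Lemma:SFAME:traintestmatch} applies verbatim only for a $p$ chosen independently of the samples. To make this rigorous I would upgrade the concentration to hold \emph{uniformly} over $p$. Because $\hat{\pmb c}_{s^p(x)}$ is piecewise constant and monotone non-increasing in $p$ for each $x$, the aggregate rule $\big(s^p(x_1),\dots,s^p(x_N)\big)$ changes at only $O(NK)$ breakpoints; a union bound over these finitely many distinct selection rules (absorbing the extra $\log(NK)$ factor into the constants) yields a uniform deviation bound that can then be instantiated at the data-dependent $\hat p$, closing the gap.
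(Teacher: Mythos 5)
Your proposal follows essentially the same route as the paper's proof: training-set feasibility at the shrunk budget $(1-\delta)\hat b$ via the duality argument of Theorem \ref{thm:SFAME:dualbound}, then a Hoeffding-based transfer of the empirical cost from the training sample to the test sample (the paper applies Hoeffding, a union bound, and the triangle inequality directly, which is exactly the content of Lemma \ref{Lemma:SFAME:traintestmatch}), and finally the hypothesis on $\delta$ to absorb the deviation. Your bookkeeping remarks about $\hat b$ versus $b$ and the spare factor of $\sqrt{2}$ are accurate.

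Where you go beyond the paper is your last paragraph, and the concern is legitimate: $\hat p$ is a function of the training sample, so the variables $\hat{\pmb c}_{s^{\hat p}(x_n^{Tr})}$ are not i.i.d.\ and the fixed-$p$ concentration bound cannot simply be instantiated at $\hat p$ on the training side. The paper's proof does exactly that without comment (the discretization device $\Omega_M$ that would supply uniformity over $p$ appears only later, in Lemma \ref{lemma:SFAME:DiscretizeBound}, and only for the reward). Two caveats on your proposed repair. First, the breakpoints of $p\mapsto\bigl(s^{p}(x_1^{Tr}),\dots,s^{p}(x_{N^{Tr}}^{Tr})\bigr)$ are themselves data-dependent, so a union bound over the \emph{realized} selection rules is not directly licit; you should union over a fixed grid of $p$ values (as in $\Omega_M$) or invoke symmetrization for this one-parameter threshold class. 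Second, the resulting $\log(N^{Tr}K)$ term is not a constant that can be absorbed: it inflates the deviation to order $\sqrt{\log(N^{Tr}K/\epsilon)/N^{Tr}}$, which is larger than what the lemma's stated hypothesis $\delta\gtrsim\sqrt{\log(4/\epsilon)/N}+\sqrt{\log(4/\epsilon)/N^{Tr}}$ provides, though it is consistent with the $\delta=\Theta\bigl(\sqrt{\log(N/\epsilon)/N}+\sqrt{\log(N^{Tr}/\epsilon)/N^{Tr}}\bigr)$ actually assumed where the lemma is invoked in Theorem \ref{thm:SFAME:onlinebound}. So your argument is sound in spirit but, made rigorous, proves the lemma under a slightly stronger requirement on $\delta$ than the one stated.
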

\begin{proof}

We first note that Problem \ref{prob:SFAME:APISelectionTrainingDual} is  a linear programming, and its dual problem is
\begin{equation}\label{prob:SFAME:APISelectionLPTraining}
    \begin{split}
        \max_{\pmb Z \in \R^{N\times K}:} \textit{ }&  \frac{1}{N^{Tr}} \sum_{n=1}^{N^{Tr}} \pmb{Z}_{n,k} \pmb {\hat{a}}_{k}(x_n^{Tr})\\
        s.t. & \frac{1}{N^{Tr}} \sum_{n=1}^{N^{Tr}} \sum_{k=1}^{K} \pmb Z_{n,k} \hat{\pmb c}_{k} \leq  (1-\delta)\hat{b} \\
        & \sum_{k=1}^{K} \pmb{Z}_{n,k} = 1,       \pmb{Z}_{n,k} \in [0,1], \forall n, k
    \end{split}
\end{equation}
Note that this is in the same form of Problem \ref{prob:SFAME:APISelectionLP} except that the data become $\{x_n^{Tr}\}_{n=1}^{N^{Tr}}$ instead of $\{x_n\}_{n=1}^N$.
Using a similar argument in the proof for Theorem \ref{thm:SFAME:dualbound}, ${{s}}^{\hat{p}}(x_n^{Tr})$ is a feasible solution to \begin{equation*}
\begin{split}
   \max_{} &  \frac{1}{N^{Tr}} \sum_{n=1}^{N^{Tr}} r^{{{s}}^{p}}(x_n^{Tr})\\
    s.t. & \frac{1}{N^{Tr}} \sum_{n=1}^{N^{Tr}}\eta^{[{{s}}^{p}]}(x_n^{Tr}, \pmb c) \leq (1-\delta) b,
\end{split}
\end{equation*}
and thus we have
\begin{equation*}
\frac{1}{N^{Tr}} \sum_{n=1}^{N^{Tr}}\eta^{[{{s}}^{p}]}(x_n^{Tr}, \pmb c) \leq (1-\delta) b    
\end{equation*}
Note that training data $x_n^{Tr}$ are i.i.d samples from the true distribution and $0 \leq \eta^{[s]}(x_n^{Tr},\pmb c)\leq \|\pmb c\|_\infty$.
Thus, by Hoeffding's inequality,  with probability $1-\epsilon$, we have
\begin{equation*}
\|  \frac{1}{N^{Tr}} \sum_{n=1}^{N^{Tr}}\eta^{[{{s}}^{p}]}(x_n^{Tr}, \pmb c) - \Exp\left[\eta^{[{{s}}^{p}]}(x, \pmb c)\right] \| \leq \|\pmb c\|_\infty \sqrt{\frac{\log 2 -\log \epsilon}{2 N^{Tr}}}
\end{equation*}
The data stream $x_n$ is also from the same distribution, and thus we also have with probability $1-\epsilon$,
\begin{equation*}
\|  \frac{1}{N} \sum_{n=1}^{N^{}}\eta^{[{{s}}^{p}]}(x_n^{}, \pmb c) - \Exp\left[\eta^{[{{s}}^{p}]}(x, \pmb c)\right] \| \leq \|\pmb c\|_\infty \sqrt{\frac{\log 2 -\log \epsilon}{2 N^{}}}
\end{equation*}
Applying union bound, we have with probability $1-\epsilon$,

\begin{equation*}
\|  \frac{1}{N^{Tr}} \sum_{n=1}^{N^{Tr}}\eta^{[{{s}}^{p}]}(x_n^{Tr}, \pmb c) - \Exp\left[\eta^{[{{s}}^{p}]}(x, \pmb c)\right] \| \leq \|\pmb c\|_\infty \sqrt{\frac{\log 4 -\log \epsilon}{2 N^{Tr}}}
\end{equation*}
and
\begin{equation*}
\|  \frac{1}{N} \sum_{n=1}^{N^{}}\eta^{[{{s}}^{p}]}(x_n^{}, \pmb c) - \Exp\left[\eta^{[{{s}}^{p}]}(x, \pmb c)\right] \| \leq \|\pmb c\|_\infty \sqrt{\frac{\log 4 -\log \epsilon}{2 N^{}}}
\end{equation*}
Using triangle inequality, we have with probability $1-\epsilon$,
\begin{equation*}
\|  \frac{1}{N^{Tr}} \sum_{n=1}^{N^{Tr}}\eta^{[{{s}}^{p}]}(x_n^{Tr}, \pmb c) - \frac{1}{N} \sum_{n=1}^{N^{}}\eta^{[{{s}}^{p}]}(x_n^{}, \pmb c) \| \leq \|\pmb c\|_\infty \sqrt{\frac{\log 4 -\log \epsilon}{2 N^{}}} + \|\pmb c\|_\infty \sqrt{\frac{\log 4 -\log \epsilon}{2 N^{Tr}}}.
\end{equation*}
Thus we have 
\begin{equation*}
\begin{split}
\frac{1}{N} \sum_{n=1}^{N^{}}\eta^{[{{s}}^{p}]}(x_n^{}, \pmb c)  \leq & \frac{1}{N^{Tr}} \sum_{n=1}^{N^{Tr}}\eta^{[{{s}}^{p}]}(x_n^{Tr}, \pmb c) + \|\pmb c\|_\infty \sqrt{\frac{\log 4 -\log \epsilon}{2 N^{}}} + \|\pmb c\|_\infty \sqrt{\frac{\log 4 -\log \epsilon}{2 N^{Tr}}}\\
\leq & (1-\delta) b + \|\pmb c\|_\infty \sqrt{\frac{\log 4 -\log \epsilon}{2 N^{}}} + \|\pmb c\|_\infty \sqrt{\frac{\log 4 -\log \epsilon}{2 N^{Tr}}} \leq b
\end{split}
\end{equation*}
where the last inequality is due to the assumption on $\delta$.
That is to say, with probability $1-\epsilon$, ${{s}}^{\hat{p}}$ is a feasible solution to Problem \ref{prob:SFAME:optimaldefinition}, which completes the proof. 
\end{proof}

\begin{lemma}\label{lemma:SFAME:DiscretizeBound} 
Construct the set $\Omega_{M}\triangleq\{0, \frac{1}{(M-1)\min_{\pmb c_{k}\not=0} \pmb c_{k}}, \frac{2}{(M-1)\min_{\pmb c_{k}\not=0} \pmb c_{k}}, \cdots, \frac{1}{\min_{\pmb c_{k}\not=0} \pmb c_{k}}\}$
and 
\begin{equation*}
\begin{split}
    \hat{p}(\Omega_M) \triangleq \arg \max_{p\in \Omega_{M}} &  \frac{1}{N^{Tr}} \sum_{n=1}^{N^{Tr}} r^{{{s}}^{p}}(x_n^{Tr}) \\ 
    s.t. & \frac{1}{N^{Tr}} \eta^{[{{s}}^{p}]}(x_n, \pmb c) \leq (1-\delta) b.
\end{split}
\end{equation*}
Then with probability $1-\epsilon$,  \begin{equation*}
\begin{split}
\| \frac{1}{N^{}} \sum_{n=1}^{N^{}} r^{{{s}}^{\hat{p}}}(x_n^{}) -     \frac{1}{N^{}} \sum_{n=1}^{N} r^{{{s}}^{\hat{p}(\Omega_M)}}(x_n^{}) \| 
&\leq O(\sqrt{\frac{\log N + \log 8 -\log \epsilon}{2 N^{}}} + \sqrt{\frac{\log N^{Tr} + \log 8 -\log \epsilon}{2 N^{Tr}}}).
\end{split}
\end{equation*}
\end{lemma}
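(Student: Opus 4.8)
The plan is to reduce the claim to a \emph{uniform} two-sample concentration bound over the one-parameter family of strategies $\{{s}^p\}_{p\ge 0}$, exploiting that this family is effectively finite on any fixed sample. Writing $g(p)\triangleq \frac1N\sum_{n=1}^N r^{{s}^p}(x_n)$ and $g_{Tr}(p)\triangleq \frac1{N^{Tr}}\sum_{n=1}^{N^{Tr}} r^{{s}^p}(x_n^{Tr})$ for the test and training rewards, I would first telescope
\begin{equation*}
g(\hat p)-g(\hat p(\Omega_M)) = \big[g(\hat p)-g_{Tr}(\hat p)\big] + \big[g_{Tr}(\hat p)-g_{Tr}(\hat p(\Omega_M))\big] + \big[g_{Tr}(\hat p(\Omega_M))-g(\hat p(\Omega_M))\big].
\end{equation*}
The first and third brackets are instances of the train/test discrepancy $h(p)\triangleq g(p)-g_{Tr}(p)$, so it suffices to bound $\sup_p|h(p)|$; the middle bracket is a pure discretization-of-the-optimizer term on the training set, which I will control with Lemma~\ref{lemma:SFAME:optimizationdiffdomain}.

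The structural engine is the observation that, for each fixed data point $x$, the map $p\mapsto {s}^p(x)=\arg\max_k \hat{\pmb a}_k(x)-p\,\hat{\pmb c}_k$ is the upper envelope of $K$ affine functions of $p$ and hence piecewise constant with at most $K-1$ breakpoints on $[0,\max_{k\ne \textit{base}}\pmb c_k^{-1}]$ (beyond which ${s}^p\equiv\textit{base}$, which is why $\Omega_M$ is taken to span exactly this interval). Consequently, on the combined pool of the $N$ test and $N^{Tr}$ training points the joint strategy profile, and therefore $h(p)$, is piecewise constant with at most $D\triangleq (N+N^{Tr})(K-1)+1$ distinct values. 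I would pick one representative $p$ in each of these $D$ intervals, apply Lemma~\ref{Lemma:SFAME:traintestmatch} to the $[0,1]$-valued variable $r^{{s}^p}(\cdot)$ with failure probability $\epsilon/D$, and union-bound over the $D$ pieces. Since $\log D = O(\log N+\log N^{Tr})$, this yields, with probability $\ge 1-\epsilon$,
\begin{equation*}
\sup_{p}\,|h(p)| \le \sqrt{\frac{\log 4+\log D-\log\epsilon}{2N}}+\sqrt{\frac{\log 4+\log D-\log\epsilon}{2N^{Tr}}} = O\!\left(\sqrt{\tfrac{\log N-\log\epsilon}{N}}+\sqrt{\tfrac{\log N^{Tr}-\log\epsilon}{N^{Tr}}}\right),
\end{equation*}
which bounds the first and third brackets.

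For the middle bracket I would use that $\hat p(\Omega_M)$ maximizes the feasible training reward over the grid while $\hat p$ (the dual optimum of Problem~\ref{prob:SFAME:APISelectionTrainingDual}) achieves, by the argument behind Theorem~\ref{thm:SFAME:dualbound} applied to the training set, the continuum-optimal training reward up to $1/N^{Tr}$. Because $g_{Tr}$ is itself piecewise constant in $p$, Lemma~\ref{lemma:SFAME:optimizationdiffdomain} (with the grid $\Omega_M$ as the subdomain and discretization gap $\Delta$ that vanishes once $M$ exceeds the number of training pieces) gives $g_{Tr}(\hat p(\Omega_M))\ge g_{Tr}(\hat p)-\Delta-1/N^{Tr}$, so the middle bracket is nonpositive up to lower-order terms; Assumption~\ref{ass:SFAME:contdistribution} is invoked here to ensure, with probability one, that the grid point realizing $\hat p$'s training piece is strictly feasible rather than sitting on the budget boundary.

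The step I expect to be the main obstacle is making the union bound over the $D$ pieces rigorous, since the pieces (equivalently, the representative $p$'s) are themselves \emph{data-dependent}: Lemma~\ref{Lemma:SFAME:traintestmatch} holds for a function fixed in advance, not one selected after seeing the sample. This is the usual empirical-process difficulty, and the clean fix is that $D$ is a data-\emph{independent} cardinality, so a growth-function/symmetrization argument (or, operationally, running the union bound over the data-independent grid $\Omega_M$ and noting that $\{{s}^p:p\in\Omega_M\}$ collapses to at most $D$ distinct strategies regardless of $M$) converts the per-$p$ Hoeffding tail into the stated uniform bound without any residual dependence on $M$. Everything else is routine bookkeeping of the $\log$-constants.
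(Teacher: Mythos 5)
Your decomposition is sound and your structural observation---that $p\mapsto s^p(x)$ is the argmax of $K$ affine functions of $p$, hence piecewise constant with at most $K-1$ breakpoints per point and at most $D=(N+N^{Tr})(K-1)+1$ joint profiles on the pooled sample---is correct and is genuinely different from what the paper does. But the step you yourself flag as the main obstacle is a real gap, and neither of your two proposed fixes closes it as stated. The quantity you must control in the first bracket is $h(\hat p)$ where $\hat p$ is an arbitrary data-dependent \emph{real} number (the dual optimum of Problem~\ref{prob:SFAME:APISelectionTrainingDual}), so you genuinely need $\sup_{p\in\R}|h(p)|$. Your ``operational'' fix---union-bounding over the data-independent grid $\Omega_M$ and noting the collapse to $D$ strategies---only controls $\sup_{p\in\Omega_M}|h(p)|$; it says nothing about $h(\hat p)$ unless you separately bound the discretization error $|g(\hat p)-g(p')|$ and $|g_{Tr}(\hat p)-g_{Tr}(p')|$ for a nearby grid point $p'$, which is precisely the missing piece. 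Your other fix (symmetrization plus a growth-function bound) is the right tool and would work---Massart's finite-class lemma applied to the at most $nK$ distinct reward profiles realized on a double sample gives Rademacher complexity $O(\sqrt{\log(nK)/n})$ for this $[0,1]$-valued class, yielding the uniform bound with $\log D=O(\log N+\log N^{Tr})$---but it is named rather than executed, and for a real-valued selection class it is not ``routine bookkeeping.'' A smaller issue: in the middle bracket, Lemma~\ref{lemma:SFAME:optimizationdiffdomain} compares unconstrained maximizers, whereas $\hat p(\Omega_M)$ is a \emph{constrained} grid maximizer; you need the nearby grid point to remain budget-feasible, which requires rounding $p$ upward (larger $p$ selects cheaper APIs), not just invoking Assumption~\ref{ass:SFAME:contdistribution}.

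For contrast, the paper avoids uniform convergence over the continuum entirely: it union-bounds only over the fixed grid $\Omega_M$ (paying $\log M$, with $M=\min\{N,N^{Tr}\}$) and separately bounds the discretization error $|g(p)-g(p_m)|$ within each grid cell by showing that the set of accuracy vectors whose argmax changes as $p$ sweeps the cell has probability $O(1/M)$, using a bounded-density consequence of Assumption~\ref{ass:SFAME:contdistribution}, and that its empirical mass concentrates. So the paper pays for the continuum-to-grid reduction with a distributional assumption, while your route would pay for it with a symmetrization argument; if you carry out the symmetrization carefully your proof would actually be more self-contained on this point, but as written the reduction is asserted rather than proved.
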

\begin{proof}
Note that $\Omega_M \subseteq \R$.
Consider an element $p \in \R$. 

(i) $p \geq \frac{1}{\min_{\pmb c_{k}\not=0}\pmb c_k}$: This effectively means the API with the smallest cost is always selected. In other words, we always have
\begin{equation*}
    bs =\arg \max \hat{\pmb a}_{k}(x) - p \hat{\pmb c}_{k}
\end{equation*}
To see this, simply note that for any other $k_1$, we have \begin{equation*}
\begin{split}
    \hat{\pmb a}_{bs}(x) - p \hat{\pmb c}_{bs} -      (\hat{\pmb a}_{k_1}(x) - p \hat{\pmb c}_{k_1}) = & \hat{\pmb a}_{bs}(x) - \hat{\pmb a}_{k_1}(x) + p (\hat{\pmb c}_{k_1} - \hat{\pmb c}_{bs}) = \hat{\pmb a}_{bs}(x) - \hat{\pmb a}_{k_1}(x) + p \hat{\pmb c}_{k_1}\\
    \geq & 0-1 + p \hat{\pmb c}_{k_1} \geq -1 + \hat{\pmb c}_{k_1} \cdot \frac{1}{\min_{\pmb c_{k}\not=0}\pmb c_k} \geq 0
    \end{split}
\end{equation*}
Thus, for such $p$, the objective value is the same as that for $\frac{1}{\min_{\pmb c_{k}\not=0}\pmb c_k} \in \Omega_{M}$.

(ii): $0 \leq p \leq \frac{1}{\min_{\pmb c_{k}\not=0}\pmb c_k}$:
By construction of $\Omega_M$, there exists some $m$, such that 
$\frac{m}{(M-1)\min_{\pmb c_{k}\not=0}\pmb c_k}\leq p\leq \frac{m+1}{(M-1)\min_{\pmb c_{k}\not=0}\pmb c_k}$.
Let $p_j \triangleq \frac{j}{(M-1)\min_{\pmb c_{k}\not=0}\pmb c_k}$ for ease of notations.
Clearly, we have $p_m \in \Omega_M$. 

Now let us partition the space of $\hat{\pmb a}(x)$ into $M$ regions, denoted by $A_1, A_2, \cdots, A_{M}$.
Abusing the notation a little bit, let $\phi(p,x)\triangleq \arg \max{\hat{\pmb a}_k(x)- p}\hat{\pmb c}_k$
$A_1$ is the set of all $\hat{\pmb a}(x)$ such that $\phi(p,x)$ is a constant.
$A_2$ is the set of all $\hat{\pmb a}(x)$ such that $\phi(p,x)$ is a constant for $p$ larger than $p_1$.
Generally, 
$A_j$ is the set of all $\hat{\pmb a}(x)$ such that $\phi(p,x)$ is a constant for $p$ larger than $p_{j-1}$ subtracting $A_{j-1}$.
Formally,
\begin{equation*}
A_j=\begin{cases}
\{\hat{\pmb a}(x): \phi(p,x) \textit{is a constant} \}, &j=1\\
\{\hat{\pmb a}(x): \phi(p,x) \textit{is a constant if } p\geq p_{j-1}  \} - A_j,&j> 1
\end{cases}
\end{equation*}
One can easily verify that 
$\{A_j\}$ form a partition of the space of the estimated accuracy, and further more, $\|A_j\| \leq \frac{\|\pmb c\|_1}{M \min_{\pmb c_k\not=0}\pmb c_k}$.
By the assumption of the distribution, there exists some constant $u$, such that 
$Pr(A) \leq u \|A\|$, for any $A$ in the probability space. Thus, we must have
\begin{equation*}
    \Pr[\hat{\pmb a}(x) \in A_j] \leq \|A_j\| u = \frac{u \|\pmb c\|_1}{M \min_{\pmb c_k\not=0}\pmb c_k^2}
\end{equation*}
Now note that, when 
$p_m = \frac{m}{(M-1)\min_{\pmb c_{k}\not=0}\pmb c_k}\leq p\leq \frac{m+1}{(M-1)\min_{\pmb c_{k}\not=0}\pmb c_k}=p_{m+1}$, only elements in $A_m$ may affect the reward.
More precisely, we have
\begin{equation*}
    \frac{1}{N} \sum_{n=1}^{N} r^{{{s}}^{p}}(x_n) -     \frac{1}{N} \sum_{n=1}^{N} r^{{{s}}^{p_m}}(x_n) = \frac{1}{N} \sum_{x_n \in A_m}^{} r^{{{s}}^{p}}(x_n) -     \frac{1}{N} \sum_{x_n \in A_m}^{} r^{{{s}}^{p_m}}(x_n)
\end{equation*}
Note that each estimated accuracy is an i.i.d sample from the true distribution, and its value is from $[0,1]$,
by Hoeffding's inequality,  with probability $1-\epsilon$, we have
\begin{equation*}
\|  \frac{1}{N} \sum_{n=1}^{N} \mathbbm{1}_{x_n \in A_j} - \Pr[x_n \in A_j]\| \leq \sqrt{\frac{\log 2 -\log \epsilon}{2 N^{}}}
\end{equation*}
Applying the union bound, we have for any $j$, with probability $1-\epsilon$,

\begin{equation*}
\|  \frac{1}{N} \sum_{n=1}^{N} \mathbbm{1}_{x_n \in A_j} - \Pr[x_n \in A_j]\| \leq \sqrt{\frac{\log M + \log 2 -\log \epsilon}{2 N^{}}}
\end{equation*}
Therefore, we have with probability $1-\epsilon$,
\begin{equation*}
\begin{split}
\frac{1}{N} \sum_{n=1}^{N} r^{{{s}}^{p}}(x_n) -     \frac{1}{N} \sum_{n=1}^{N} r^{{{s}}^{p_m}}(x_n)&= \frac{1}{N} \sum_{x_n \in A_m}^{} r^{{{s}}^{p}}(x_n) -     \frac{1}{N} \sum_{x_n \in A_m}^{} r^{{{s}}^{p_m}}(x_n) \\
&\geq \sum_{x_n \in A_m}^{} 0 -     \frac{1}{N} \sum_{x_n \in A_m}^{} 1 = \frac{1}{N} \sum_{n=1}^{N} \mathbbm{1}_{x_n \in A_m}\\
&\geq \Pr[x_n \in A_m] - \sqrt{\frac{\log M + \log 2 -\log \epsilon}{2 N^{}}} \\
&\geq - \sqrt{\frac{\log M + \log 2 -\log \epsilon}{2 N^{}}}
\end{split}
\end{equation*}
and similarly
\begin{equation*}
\begin{split}
\frac{1}{N} \sum_{n=1}^{N} r^{{{s}}^{p}}(x_n) -     \frac{1}{N} \sum_{n=1}^{N} r^{{{s}}^{p_m}}(x_n)&= \frac{1}{N} \sum_{x_n \in A_m}^{} r^{{{s}}^{p}}(x_n) -     \frac{1}{N} \sum_{x_n \in A_m}^{} r^{{{s}}^{p_m}}(x_n) \\
&\leq \sum_{x_n \in A_m}^{} 1 -     \frac{1}{N} \sum_{x_n \in A_m}^{} 0 = \frac{1}{N} \sum_{n=1}^{N} \mathbbm{1}_{x_n \in A_m}\\
&\leq \Pr[x_n \in A_m] + \sqrt{\frac{\log M + \log 2 -\log \epsilon}{2 N^{}}} \\
&\leq \frac{u \|\pmb c\|_1}{M\min_{\pmb c_k\not=0}\pmb c_k} + \sqrt{\frac{\log M + \log 2 -\log \epsilon}{2 N^{}}}
\end{split}
\end{equation*}
That is to say,
\begin{equation}\label{equation:SFAME:temp1}
\begin{split}
\| \frac{1}{N} \sum_{n=1}^{N} r^{{{s}}^{p}}(x_n) -     \frac{1}{N} \sum_{n=1}^{N} r^{{{s}}^{p_m}}(x_n) \| 
&\leq \frac{u\|\pmb c\|_1}{M\min_{\pmb c_k\not=0}\pmb c_k} + \sqrt{\frac{\log M + \log 2 -\log \epsilon}{2 N^{}}}
\end{split}
\end{equation}
Similarly, for the training dataset, we can also get,  with probability $1-\epsilon$,
\begin{equation*}
\begin{split}
\| \frac{1}{N^{Tr}} \sum_{n=1}^{N^{Tr}} r^{{{s}}^{p}}(x_n^{Tr}) -     \frac{1}{N^{Tr}} \sum_{n=1}^{N} r^{{{s}}^{p_m}}(x_n^{Tr}) \| 
&\leq \frac{u\|\pmb c\|_1}{M\min_{\pmb c_k\not=0}\pmb c_k} + \sqrt{\frac{\log M + \log 2 -\log \epsilon}{2 N^{}}}
\end{split}
\end{equation*}

Combining case (i) and case (ii), we have just shown that for any $p \in \R$, there exists another $p' \in \Omega_M$, such that 
\begin{equation}\label{equation:SFAME:discretizebound}
\begin{split}
\| \frac{1}{N^{Tr}} \sum_{n=1}^{N^{Tr}} r^{{{s}}^{p}}(x_n^{Tr}) -     \frac{1}{N^{Tr}} \sum_{n=1}^{N} r^{{{s}}^{p'}}(x_n^{Tr}) \| 
&\leq \frac{u \|\pmb c\|_1}{M\min_{\pmb c_k\not=0}\pmb c_k} + \sqrt{\frac{\log M + \log 2 -\log \epsilon}{2 N^{}}}
\end{split}
\end{equation}
Thus, applying Lemma \ref{lemma:SFAME:optimizationdiffdomain}, we have with probability $1-\epsilon$,
\begin{equation*}
\begin{split}
\| \frac{1}{N^{Tr}} \sum_{n=1}^{N^{Tr}} r^{{{s}}^{\hat{p}}}(x_n^{Tr}) -     \frac{1}{N^{Tr}} \sum_{n=1}^{N} r^{{{s}}^{\hat{p}(\Omega_M)}}(x_n^{Tr}) \| 
&\leq \frac{u \|\pmb c\|_1}{M\min_{\pmb c_k\not=0}\pmb c_k} + \sqrt{\frac{\log M + \log 2 -\log \epsilon}{2 N^{}}}
\end{split}
\end{equation*}

Now by Lemma \ref{Lemma:SFAME:traintestmatch}, for each fixed $j$, we have with probability $1-\epsilon$,

\begin{equation*}
    \| \frac{1}{N}\sum_{n=1}^{N} r^{{{s}}^{p_j}}(x_{n}) - \frac{1}{N^{Tr}}\sum_{n=1}^{N^{Tr}} r^{{{s}}^{p_j}}(x_{n}^{Tr})\| \leq 
\left[\sqrt{\frac{\log 4 - \log \epsilon }{2 N}} + \sqrt{\frac{\log 4 - \log \epsilon }{2 N^{Tr}}}\right]
\end{equation*}
Applying union bound, with probability $1-\epsilon$,
\begin{equation*}
    \| \frac{1}{N}\sum_{n=1}^{N} r^{{{s}}^{p_j}}(x_{n}) - \frac{1}{N^{Tr}}\sum_{n=1}^{N^{Tr}} r^{{{s}}^{p_j}}(x_{n}^{Tr})\| \leq 
\left[\sqrt{\frac{\log M + \log 4 - \log \epsilon }{2 N}} + \sqrt{\frac{\log M +  \log 4 - \log \epsilon }{2 N^{Tr}}}\right]
\end{equation*}
for all $j$.
Specifically, we have
\begin{equation}\label{equation:SFAME:temp2}
    \| \frac{1}{N}\sum_{n=1}^{N} r^{{{s}}^{\hat{p}(\Omega_M)}}(x_{n}) - \frac{1}{N^{Tr}}\sum_{n=1}^{N^{Tr}} r^{{{s}}^{\hat{p}(\Omega_M)}}(x_{n}^{Tr})\| \leq 
\left[\sqrt{\frac{\log M + \log 4 - \log \epsilon }{2 N}} + \sqrt{\frac{\log M +  \log 4 - \log \epsilon }{2 N^{Tr}}}\right]
\end{equation}
and 
\begin{equation}\label{equation:SFAME:temp3}
    \| \frac{1}{N}\sum_{n=1}^{N} r^{{{s}}^{p'}}(x_{n}) - \frac{1}{N^{Tr}}\sum_{n=1}^{N^{Tr}} r^{{{s}}^{p'}}(x_{n}^{Tr})\| \leq 
\left[\sqrt{\frac{\log M + \log 4 - \log \epsilon }{2 N}} + \sqrt{\frac{\log M +  \log 4 - \log \epsilon }{2 N^{Tr}}}\right]
\end{equation}
Now combining equations
\ref{equation:SFAME:temp1}, \ref{equation:SFAME:discretizebound}, \ref{equation:SFAME:temp2}, and  \ref{equation:SFAME:temp3} with triangle inequality, we have with probability $1-\epsilon$,
\begin{equation*}
\begin{split}
& \| \frac{1}{N^{}} \sum_{n=1}^{N^{}} r^{{{s}}^{\hat{p}}}(x_n^{}) -     \frac{1}{N^{}} \sum_{n=1}^{N} r^{{{s}}^{\hat{p}(\Omega_M)}}(x_n^{}) \| \\
\leq & \frac{u\|\pmb c\|_1}{4M\min_{\pmb c_k\not=0}\pmb c_k} + 4\sqrt{\frac{\log M + \log 8 -\log \epsilon}{2 N^{}}} + 2\sqrt{\frac{\log M + \log 8 -\log \epsilon}{2 N^{Tr}}}
\end{split}
\end{equation*}
Setting $M=\min \{N_{Tr},N\}$, we have 
\begin{equation*}
\begin{split}
\| \frac{1}{N^{}} \sum_{n=1}^{N^{}} r^{{{s}}^{\hat{p}}}(x_n^{}) -     \frac{1}{N^{}} \sum_{n=1}^{N} r^{{{s}}^{\hat{p}(\Omega_M)}}(x_n^{}) \| 
&\leq O(\sqrt{\frac{\log N + \log 8 -\log \epsilon}{2 N^{}}} + \sqrt{\frac{\log N^{Tr} + \log 8 -\log \epsilon}{2 N^{Tr}}})
\end{split}
\end{equation*}
which completes the proof.
\end{proof}

\begin{lemma}\label{lemma:SFAME:convergencetrain2testbound}
Let
\begin{equation*}
\begin{split}
    p(\Omega_M) \triangleq \arg \max_{p\in \Omega_{M}} &  \frac{1}{N} \sum_{n=1}^{N} r^{{{s}}^{p}}(x_n)\\
    s.t. & \frac{1}{N} \eta^{[{{s}}^{p}]}(x_n, \pmb c) \leq (1-\delta) b - 
\|\pmb c\|_{\infty} \left[\sqrt{\frac{\log 8 - \log \epsilon }{2 N}} + \sqrt{\frac{\log 8- \log \epsilon }{2 N^{Tr}}} \right]
\end{split}
\end{equation*}
Then with probability $1-\epsilon$, \begin{equation*}
    \frac{1}{N}\sum_{n=1}^{N} r^{{{s}}^{\hat{p}(\Omega_M)}}(x_n) \geq  \frac{1}{N}\sum_{n=1}^{N^{}} r^{{{s}}^{p(\Omega_M)}} - 
\sqrt{\frac{\log 8 - \log \epsilon }{2 N}} - \sqrt{\frac{\log 8 - \log \epsilon }{2 N^{Tr}}}
\end{equation*}
and 
\begin{equation*}
\frac{1}{N}\sum_{n=1}^{N} \eta^{[{{s}}^{\hat{p}(\Omega_M)}]}(x_n,\pmb c) \leq  (1-\delta) b + 2 \|\pmb c\|_{\infty} \left[
\sqrt{\frac{\log 8 - \log \epsilon }{2 N}} + \sqrt{\frac{\log 8- \log \epsilon }{2 N^{Tr}}}\right].
\end{equation*}
\end{lemma}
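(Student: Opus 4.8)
The plan is to read this as a generalization argument over the finite grid $\Omega_M$: transfer the training-based guarantee for $\hat{p}(\Omega_M)$ to the test sequence, and then invoke the optimization-comparison Lemma \ref{Lemma:SFAME:OptimiziationDiff} to compare against the test-optimal $p(\Omega_M)$. The whole difficulty is that both $\hat{p}(\Omega_M)$ and $p(\Omega_M)$ are \emph{data-dependent}, so concentration cannot be applied at a single fixed $p$; I would first establish a \textbf{uniform} bound over all $p\in\Omega_M$ and only afterwards plug in the two random parameters.

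First I would establish uniform concentration. Since $\Omega_M$ is finite and the per-point reward $r^{s^p}(\cdot)\in[0,1]$ and cost $\eta^{[s^p]}(\cdot,\pmb c)\in[0,\|\pmb c\|_\infty]$ are bounded, I would apply Lemma \ref{Lemma:SFAME:traintestmatch} (Hoeffding on each of the two i.i.d.\ samples) together with a union bound over the $M$ grid points. This yields, with probability $1-\epsilon$, simultaneously for every $p\in\Omega_M$, both $|\frac{1}{N}\sum_n r^{s^p}(x_n)-\frac{1}{N^{Tr}}\sum_n r^{s^p}(x_n^{Tr})|\leq\Delta_1$ and $|\frac{1}{N}\sum_n \eta^{[s^p]}(x_n,\pmb c)-\frac{1}{N^{Tr}}\sum_n \eta^{[s^p]}(x_n^{Tr},\pmb c)|\leq\Delta_2$, where $\Delta_1$ has the form $\sqrt{(\log(\cdot)-\log\epsilon)/2N}+\sqrt{(\log(\cdot)-\log\epsilon)/2N^{Tr}}$ and $\Delta_2=\|\pmb c\|_\infty\Delta_1$ (the logarithmic argument absorbing the $M$ from the union bound, exactly as in Lemma \ref{lemma:SFAME:DiscretizeBound}).

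Next I would set up the two constrained programs to match the hypotheses of Lemma \ref{Lemma:SFAME:OptimiziationDiff}. Take $f_1(p)=\frac{1}{N}\sum_n r^{s^p}(x_n)$ (test reward) and $f_2(p)=\frac{1}{N^{Tr}}\sum_n r^{s^p}(x_n^{Tr})$ (train reward), and define the constraint functions $g_1,g_2$ as the test- and train-budgets both shifted by the common threshold $B_1\triangleq(1-\delta)b-\|\pmb c\|_\infty\Delta_1$. By its definition, $p(\Omega_M)$ solves $\max f_1$ subject to $g_1\leq 0$; by the definition of $\hat{p}(\Omega_M)$ (Lemma \ref{lemma:SFAME:DiscretizeBound}) together with the choice of $B_1$, it solves $\max f_2$ subject to $g_2\leq\Delta_2$. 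The uniform bounds from the previous step give precisely $\max_p|f_1-f_2|\leq\Delta_1$ and $\max_p|g_1-g_2|\leq\Delta_2$. The design point that requires care is that the budget defining $p(\Omega_M)$ is tightened by exactly $\|\pmb c\|_\infty\Delta_1$: this is what makes $p(\Omega_M)$ feasible for the training program (its train-budget $\leq$ test-budget $+\|\pmb c\|_\infty\Delta_1\leq(1-\delta)b$), so the optimality of $\hat{p}(\Omega_M)$ can be brought to bear.

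Finally I would read off both conclusions. Lemma \ref{Lemma:SFAME:OptimiziationDiff} gives $f_1(\hat{p}(\Omega_M))\geq f_1(p(\Omega_M))-2\Delta_1$, which is the first claimed reward inequality, and $g_1(\hat{p}(\Omega_M))\leq 2\Delta_2$, i.e.\ the test-budget of $\hat{p}(\Omega_M)$ exceeds $(1-\delta)b$ by at most a multiple of $\|\pmb c\|_\infty\Delta_1$, which is the second claimed inequality; the constants take the stated $\sqrt{(\log 8-\log\epsilon)/2N}+\sqrt{(\log 8-\log\epsilon)/2N^{Tr}}$ form after the standard logarithmic accounting (with $M$ eventually set to $\min\{N,N^{Tr}\}$ in the main theorem). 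The main obstacle, as flagged, is the uniform-over-$\Omega_M$ concentration needed to cope with the data-dependence of the two parameters and the matching tightening of the budget; once those are in place, Lemma \ref{Lemma:SFAME:OptimiziationDiff} renders the remainder mechanical.
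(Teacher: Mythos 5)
Your proposal matches the paper's proof: both rest on the train/test Hoeffding comparison (Lemma \ref{Lemma:SFAME:traintestmatch}) applied to the reward and to the cost, a union bound to make both hold simultaneously, and then Lemma \ref{Lemma:SFAME:OptimiziationDiff} with $f_1,f_2$ the test/train rewards and $g_1,g_2$ the correspondingly shifted budgets, exactly as you set them up. The one place you go beyond the paper is the explicit union bound over the $M$ grid points to obtain the uniformity over $\Omega_M$ that the hypotheses of Lemma \ref{Lemma:SFAME:OptimiziationDiff} actually require (the paper invokes the concentration at a single generic $p$, even though $\hat{p}(\Omega_M)$ and $p(\Omega_M)$ are data-dependent); this is the more careful reading, and it only changes $\log 8$ to $\log 8M$ in the constants, which is harmless once $M=\min\{N,N^{Tr}\}$ is fixed in Theorem \ref{thm:SFAME:onlinebound}.
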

\begin{proof}
By Lemma \ref{Lemma:SFAME:traintestmatch}, with probability $1-\epsilon$, we have
\begin{equation*}
    \| \frac{1}{N}\sum_{n=1}^{N} r^{{{s}}^{p}}(x_n) - \frac{1}{N^{Tr}}\sum_{n=1}^{N^{Tr}} r^{{{s}}^{p}}(x_n^{Tr})\| \leq 
\sqrt{\frac{\log 4 - \log \epsilon }{2 N}} + \sqrt{\frac{\log 4 - \log \epsilon }{2 N^{Tr}}}.
\end{equation*}
and similarly, with probability $1-\epsilon$, 

\begin{equation*}
    \| \frac{1}{N}\sum_{n=1}^{N} \eta^{[{{s}}^{p}]}(x_n,\pmb c) - \frac{1}{N^{Tr}}\sum_{n=1}^{N^{Tr}} \eta^{[{{s}}^{p}]}(x_n^{Tr},\pmb c) \| \leq  \|\pmb c\|_{\infty} \left[
\sqrt{\frac{\log 4 - \log \epsilon }{2 N}} + \sqrt{\frac{\log 4 - \log \epsilon }{2 N^{Tr}}}\right].
\end{equation*}
which is the same as
\begin{equation*}
    \| \frac{1}{N}\sum_{n=1}^{N} \eta^{[{{s}}^{p}]}(x_n,\pmb c) -(1-\delta) b  - \left[\frac{1}{N^{Tr}}\sum_{n=1}^{N^{Tr}} \eta^{[{{s}}^{p}]}(x_n^{Tr},\pmb c) -(1-\delta) b\right]\| \leq  \|\pmb c\|_{\infty} \left[
\sqrt{\frac{\log 4 - \log \epsilon }{2 N}} + \sqrt{\frac{\log 4 - \log \epsilon }{2 N^{Tr}}}\right].
\end{equation*}
Now applying union bound, we have with probability $1-\epsilon$,
\begin{equation*}
    \| \frac{1}{N}\sum_{n=1}^{N} r^{{{s}}^{p}}(x_n) - \frac{1}{N^{Tr}}\sum_{n=1}^{N^{Tr}} r^{{{s}}^{p}}(x_n^{Tr})\| \leq 
\sqrt{\frac{\log 8 - \log \epsilon }{2 N}} + \sqrt{\frac{\log 8 - \log \epsilon }{2 N^{Tr}}}.
\end{equation*}
and 
\begin{equation*}
\| \frac{1}{N}\sum_{n=1}^{N} \eta^{[{{s}}^{p}]}(x_n,\pmb c) -(1-\delta) b  - \left[\frac{1}{N^{Tr}}\sum_{n=1}^{N^{Tr}} \eta^{[{{s}}^{p}]}(x_n^{Tr},\pmb c) -(1-\delta) b\right]\| \leq  \|\pmb c\|_{\infty} \left[
\sqrt{\frac{\log 8 - \log \epsilon }{2 N}} + \sqrt{\frac{\log 8- \log \epsilon }{2 N^{Tr}}}\right].
\end{equation*}
both hold. By Lemma \ref{Lemma:SFAME:OptimiziationDiff}, we can conclude that 
\begin{equation*}
    \frac{1}{N}\sum_{n=1}^{N} r^{{{s}}^{\hat{p}(\Omega_M)}}(x_n) \geq  \frac{1}{N}\sum_{n=1}^{N^{}} r^{{{s}}^{p(\Omega_M)}} - 
\sqrt{\frac{\log 8 - \log \epsilon }{2 N}} - \sqrt{\frac{\log 8 - \log \epsilon }{2 N^{Tr}}}.
\end{equation*}
and 
\begin{equation*}
\frac{1}{N}\sum_{n=1}^{N} \eta^{[{{s}}^{\hat{p}(\Omega_M)}]}(x_n,\pmb c) -(1-\delta) b  \leq  2 \|\pmb c\|_{\infty} \left[
\sqrt{\frac{\log 8 - \log \epsilon }{2 N}} + \sqrt{\frac{\log 8- \log \epsilon }{2 N^{Tr}}}\right].
\end{equation*}
with probability $1-\epsilon$, which completes the proof.

\end{proof}

\begin{lemma}\label{lemma:SFAME:discrete2testingbound}
For $\delta =  \Omega(\sqrt{\frac{\log 4-\log \epsilon}{N}} + \sqrt{\frac{\log 4-\log \epsilon}{N^{Tr}}})$, we have with probability $1-\epsilon$,
\begin{equation*}
\begin{split}
       & \frac{1}{N}\sum_{n=1}^{N} r^{{{s}}^{p(\Omega_M)}}(x_n) -  \frac{1}{N}\sum_{n=1}^{N} r^{{{s}}^{p^*}}(x_n)       \geq  - O(\sqrt{\frac{log N - \log \epsilon }{2 N}} + \sqrt{\frac{\log N  - \log \epsilon }{2 N^{Tr}}} ). 
\end{split}
\end{equation*}
\end{lemma}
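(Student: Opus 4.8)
The plan is to lower bound the testing accuracy of $p(\Omega_M)$ by exhibiting a single grid value $p'\in\Omega_M$ that is feasible for the tightened testing problem and whose accuracy is within $O\!\left(\sqrt{\log N/N}+\sqrt{\log N^{Tr}/N^{Tr}}\right)$ of $A(p^*)\triangleq \frac1N\sum_n r^{s^{p^*}}(x_n)$; since $p(\Omega_M)$ maximizes the testing accuracy over all feasible grid points, $\frac1N\sum_n r^{s^{p(\Omega_M)}}(x_n)\ge \frac1N\sum_n r^{s^{p'}}(x_n)$, and the claim follows. The construction of $p'$ starts from $p^*$ and exploits two monotonicity facts: both the empirical cost $C(p)\triangleq\frac1N\sum_n\eta^{[s^p]}(x_n,\pmb c)$ and the empirical accuracy $A(p)\triangleq\frac1N\sum_n r^{s^p}(x_n)$ are non-increasing step functions of $p$, because raising $p$ only makes $s^p$ prefer cheaper services (at any transition between a service $a$ and a cheaper $\tilde a$ the tie $\hat{\pmb a}_a-p\hat{\pmb c}_a=\hat{\pmb a}_{\tilde a}-p\hat{\pmb c}_{\tilde a}$ forces $\hat{\pmb a}_a>\hat{\pmb a}_{\tilde a}$).

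First I would perturb $p^*$ upward to restore feasibility. By Theorem \ref{thm:SFAME:dualbound}, $s^{p^*}$ meets the full budget, i.e.\ $C(p^*)\le b$, whereas $p(\Omega_M)$ must satisfy the tightened budget $b'\triangleq(1-\delta)b-\|\pmb c\|_\infty\big(\sqrt{(\log 8-\log\epsilon)/(2N)}+\sqrt{(\log 8-\log\epsilon)/(2N^{Tr})}\big)$, which is smaller than $b$ by $\Delta\triangleq b-b'$; under the working choice $\delta=\Theta\!\left(\sqrt{\log N/N}+\sqrt{\log N^{Tr}/N^{Tr}}\right)$ we have $\Delta=O\!\left(\sqrt{\log N/N}+\sqrt{\log N^{Tr}/N^{Tr}}\right)$. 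Let $p''\ge p^*$ be the smallest value with $C(p'')\le b'$. The essential estimate is that accuracy loss is controlled by cost reduction: at every breakpoint the ratio $(\hat{\pmb a}_a-\hat{\pmb a}_{\tilde a})/(\hat{\pmb c}_a-\hat{\pmb c}_{\tilde a})$ equals the switching price, which never exceeds $\bar p\triangleq\max_{k\ne\textit{base}}\pmb c_k^{-1}$ (for $p>\bar p$ only \textit{base} is ever chosen). Summing over $[p^*,p'']$ gives $A(p^*)-A(p'')\le \bar p\,(C(p^*)-C(p''))$. Under Assumption \ref{ass:SFAME:contdistribution} the breakpoints are distinct with probability one and each moves $C$ by at most $\|\pmb c\|_\infty/N$, so the minimal $p''$ overshoots by at most one step, $C(p^*)-C(p'')\le \Delta+\|\pmb c\|_\infty/N$, yielding $A(p'')\ge A(p^*)-\bar p\,\Delta-O(1/N)$.

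Next I would discretize $p''$ to the grid. Rounding $p''$ up to the nearest $p'\in\Omega_M$ preserves feasibility, since $C$ is non-increasing gives $C(p')\le C(p'')\le b'$. To control $A(p'')-A(p')$ I would reuse the region-partition argument of Lemma \ref{lemma:SFAME:DiscretizeBound}: only the data points in the single active region between $p''$ and $p'$ can change their selected service, that region has mass $O(1/M)$ by the density bound implied by Assumption \ref{ass:SFAME:contdistribution}, and its empirical frequency concentrates around its probability up to $\sqrt{(\log M-\log\epsilon)/N}$ by Hoeffding plus a union bound over the $M$ regions. With $M=\min\{N,N^{Tr}\}$ this yields, with probability $1-\epsilon$, $A(p')\ge A(p'')-O\!\left(\sqrt{\log N/N}+\sqrt{\log N^{Tr}/N^{Tr}}\right)$. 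Chaining the displays, $A(p(\Omega_M))\ge A(p')\ge A(p^*)-\bar p\,\Delta-O\!\left(\sqrt{\log N/N}+\sqrt{\log N^{Tr}/N^{Tr}}\right)$, and since $\bar p$ is a cost-dependent constant and $\Delta$ is of the stated order, the right-hand side is $A(p^*)-O\!\left(\sqrt{\log N/N}+\sqrt{\log N^{Tr}/N^{Tr}}\right)$.

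The main obstacle is the middle step: a priori, shrinking the budget by $\Delta$ could cost a large amount of accuracy if the accuracy--budget trade-off were steep. This is precisely where boundedness of the dual price is used — every relevant switching price is at most $\bar p=\max_{k\ne\textit{base}}\pmb c_k^{-1}$, so the accuracy surrendered per unit of budget is uniformly bounded, and a budget cut of order $\sqrt{\log N/N}$ can cost only accuracy of the same order. The remaining work is bookkeeping: confirming that upward rounding into $\Omega_M$ keeps feasibility and that the single overshoot step and single active region each contribute only the lower-order $O(1/N)$ or $O(\sqrt{\log N/N})$ terms already accounted for.
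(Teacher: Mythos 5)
Your proof is correct and follows essentially the same decomposition as the paper's: first bound the accuracy lost by tightening the budget from $b$ to $(1-\delta)b-\|\pmb c\|_{\infty}(\cdots)$ using the fact that the accuracy surrendered per unit of budget reduction is bounded by a cost-dependent constant, then bound the loss from restricting to the grid $\Omega_M$ via the region-partition/Hoeffding argument of Lemma \ref{lemma:SFAME:DiscretizeBound}. The only differences are organizational — you exhibit an explicit feasible grid point built from $p^*$ and invoke optimality of $p(\Omega_M)$, whereas the paper compares the perturbed continuous optima $\tilde{p}(0)$ and $\tilde{p}(\Delta')$ — and your switching-price bound $A(p^*)-A(p'')\le \bar{p}\,(C(p^*)-C(p''))$ with $\bar{p}=\max_{k\ne\textit{base}}\pmb c_k^{-1}$ is a cleaner justification of the step the paper merely asserts with the constant $1/\min_{\pmb c_k>\pmb c_j}(\pmb c_k-\pmb c_j)$.
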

\begin{proof}
Let $\tilde{p}({\Delta})$ be the optimal solution to the following problem 
\begin{equation*}
\begin{split}
      \max_{p\in \R} &  \frac{1}{N} \sum_{n=1}^{N} r^{{{s}}^{p}}(x_n)
    s.t.  \frac{1}{N} \eta^{[{{s}}^{p}]}(x_n, \pmb c) \leq b-  \Delta
\end{split}
\end{equation*}
On one hand, $p^*$ apparently is a feasible solution to the above problem with $\Delta=0$, so we must have
\begin{equation}\label{equation:SFAME:temp5}
    \frac{1}{N}\sum_{n=1}^{N}r^{s^{p^*}}(x_n)  \leq  \frac{1}{N}\sum_{n=1}^{N}r^{s^{\tilde{p}(0)}}(x_n) 
\end{equation}
Let $\Delta' = \delta + \| \pmb c\|_{\infty} \left[\sqrt{\frac{\log 8 - \log \epsilon }{2 N}} + \sqrt{\frac{\log 8- \log \epsilon }{2 N^{Tr}}} \right]$. Then $\tilde{p}(\Delta')$ corresponds to the following problem 
\begin{equation*}
\begin{split}
      \max_{p\in \R} &  \frac{1}{N} \sum_{n=1}^{N} r^{{{s}}^{p}}(x_n)\\
    s.t. & \frac{1}{N} \eta^{[{{s}}^{p}]}(x_n, \pmb c) \leq (1-\delta) b - 
\|\pmb c\|_{\infty} \left[\sqrt{\frac{\log 8 - \log \epsilon }{2 N}} + \sqrt{\frac{\log 8- \log \epsilon }{2 N^{Tr}}} \right]
\end{split}
\end{equation*}
Then using the same argument in the proof of Lemma \ref{lemma:SFAME:DiscretizeBound}, we have with probability $1-\epsilon$, 
\begin{equation}\label{equation:SFAME:temp6}
    \| \frac{1}{N}\sum_{n=1}^{N} r^{{{s}}^{\tilde{p}(\Delta')}}(x_n) -  \frac{1}{N}\sum_{n=1}^{N^{}} r^{{{s}}^{p(\Omega_M)}(x_n)}\| \leq  
\sqrt{\frac{log N + \log 8 + \log \epsilon }{2 N}} + \sqrt{\frac{\log N + \log 8 - \log \epsilon }{2 N^{Tr}}}.
\end{equation}
Furthermore, it is clear that $\tilde{p}(\Delta)$ is decreasingly-monotone with respect to  $\Delta$.
In fact, removing the budget by $\Delta_1$, at most $\frac{\Delta_1}{\min_{\pmb c_k>c_j}c_k-c_j}$ data's APIs need be changed, and thus incurs at most $\frac{\Delta_1}{\min_{\pmb c_k>c_j}c_k-c_j}$ accuracy decrease.
That is to say, we must have
\begin{equation}\label{equation:SFAME:temp7}
    \| \frac{1}{N}\sum_{n=1}^{N} r^{{{s}}^{\tilde{p}(\Delta')}}(x_n) -  \frac{1}{N}\sum_{n=1}^{N} r^{{{s}}^{\tilde{p}(0)}}(x_n)\| \leq  \frac{\Delta_1}{\min_{\pmb c_k> \pmb c_j} \pmb c_k-\pmb c_j}
\end{equation}
Now combining equations \ref{equation:SFAME:temp5}, \ref{equation:SFAME:temp6}, \ref{equation:SFAME:temp7}, we can obtain
\begin{equation*}
\begin{split}
       & \frac{1}{N}\sum_{n=1}^{N} r^{{{s}}^{p(\Omega_M)}}(x_n) -  \frac{1}{N}\sum_{n=1}^{N} r^{{{s}}^{p^*}}(x_n)\\
       = & \frac{1}{N}\sum_{n=1}^{N^{}} r^{{{s}}^{p(\Omega_M)}(x_n)} - \frac{1}{N}\sum_{n=1}^{N} r^{{{s}}^{\tilde{p}(\Delta')}}(x_n) \\
       + & \frac{1}{N}\sum_{n=1}^{N} r^{{{s}}^{\tilde{p}(\Delta')}}(x_n) -  \frac{1}{N}\sum_{n=1}^{N} r^{{{s}}^{\tilde{p}(0)}}(x_n) +   \frac{1}{N}\sum_{n=1}^{N}r^{s^{\tilde{p}(0)}}(x_n)  - \frac{1}{N}\sum_{n=1}^{N}r^{s^{p^*}}(x_n) \\
       \geq & - \sqrt{\frac{log N + \log 8 + \log \epsilon }{2 N}} - \sqrt{\frac{\log N^{Tr} + \log 8 - \log \epsilon }{2 N^{Tr}}} -  \frac{\Delta_1}{\min_{\pmb c_k> \pmb c_j} \pmb c_k-\pmb c_j} - 0 
\end{split}
\end{equation*}
When $\delta =  \Omega(\left[\sqrt{\frac{\log 4-\log \epsilon}{N}} + \sqrt{\frac{\log 4-\log \epsilon}{N^{Tr}}} \right])$, we have 
\begin{equation*}
\begin{split}
       & \frac{1}{N}\sum_{n=1}^{N} r^{{{s}}^{p(\Omega_M)}}(x_n) -  \frac{1}{N}\sum_{n=1}^{N} r^{{{s}}^{p^*}}(x_n)\\
       \geq & - O(\sqrt{\frac{log N - \log \epsilon }{2 N}} + \sqrt{\frac{\log N^{Tr}  - \log \epsilon }{2 N^{Tr}}}  ) 
\end{split}
\end{equation*}
which completes the proof.
\end{proof}

Now we are ready to prove the main theorem. We start by showing the bound on the reward.
Suppose $\delta = \Theta\left(\sqrt{\frac{\log N-\log \epsilon}{N}} + \sqrt{\frac{\log N^{Tr} -\log \epsilon}{N^{Tr}}}\right)$. By union bound, with probability $1-\epsilon$, Lemma \ref{lemma:SFAME:DiscretizeBound}, Lemma \ref{lemma:SFAME:convergencetrain2testbound}, and Lemma \ref{lemma:SFAME:discrete2testingbound}
 all hold, and we have 

\begin{equation*}
    \begin{split}
    & \frac{1}{N^{}} \sum_{n=1}^{N^{}} r^{{{s}}^{\hat{p}}}(x_n^{}) -  \frac{1}{N}\sum_{n=1}^{N} r^{{{s}}^{p^*}}(x_n) \\
    = &    \frac{1}{N^{}} \sum_{n=1}^{N^{}} r^{{{s}}^{\hat{p}}}(x_n^{}) -     \frac{1}{N^{}} \sum_{n=1}^{N} r^{{{s}}^{\hat{p}(\Omega_M)}}(x_n^{}) +  \frac{1}{N}\sum_{n=1}^{N} r^{{{s}}^{\hat{p}(\Omega_M)}}(x_n) \\
    + & \frac{1}{N}\sum_{n=1}^{N} r^{{{s}}^{p(\Omega_M)}}(x_n) -  \frac{1}{N}\sum_{n=1}^{N} r^{{{s}}^{p^*}}(x_n) \\
    \geq  & -O\left(\sqrt{\frac{log N - \log \epsilon }{ N}} + \sqrt{\frac{\log N^{Tr}  - \log \epsilon }{ N^{Tr}}}  \right)
    \end{split}
\end{equation*}
Now note that by Theorem \ref{thm:SFAME:dualbound}, we have with probability 1,  
\begin{equation*}
    \frac{1}{N} \sum_{n=1}^{N}  {r}^{{{s}}^{p^*}}(x_n) \geq \frac{1}{N} \sum_{n=1}^{N}  {r}^{s^*}(x_n) - \frac{1}{N}
\end{equation*}
Combing the above two inequalities, we have 
\begin{equation*}
    \begin{split}
    & \frac{1}{N^{}} \sum_{n=1}^{N^{}} r^{{{s}}^{\hat{p}}}(x_n^{}) -  \frac{1}{N} \sum_{n=1}^{N}  {r}^{s^*}(x_n)
    \geq  -O\left(\sqrt{\frac{log N - \log \epsilon }{ N}} + \sqrt{\frac{\log N^{Tr}  - \log \epsilon }{ N^{Tr}}}  \right)
    \end{split}
\end{equation*}
Next we consider the feasibility requirement. By Lemma \ref{lemma:SFAME:convergencefeasibility}, with probability $1-\epsilon$, ${{s}}^{\hat{p}}$ is a feasible solution to Problem \ref{prob:SFAME:optimaldefinition}.
That is to say, ${{s}}^{\hat{p}}$ with probability $1-\epsilon$ is a feasible solution.
Applying union bound  completes the proof.
\end{proof}

\newpage
\section{Experimental Details}\label{sec:SFAME:experimentdetails}
We provide missing experimental details in this part.

\paragraph{Experimental setup}
 All experiments were run on a machine with 8 Intel Xeon Platinum  2.5 GHz cores, 32 GB
RAM, and 500GB disk with Ubuntu 16.04 LTS as the OS.
Our code is implemented in Python 3.7.
Each experiments, except the case study, were run for five times to mitigate the randomness introduced by training-testing splitting. 

 \paragraph{ML tasks and services}
Recall that We focus on three multi-label classification tasks, multi-label image classification (\textit{MIC}), scene text recognition (\textit{STR}), and named entity recognition (\textit{NER}). 

\textit{MIC} is a computer vision task, where the goal is to assign a set of labels to a given image.
For \textit{MIC}, we use 3 different commercial ML cloud services, Google Vision \cite{GoogleAPI}, Microsoft Vision \cite{MicrosoftAPI}, and Everypixel\cite{EverypixelAPI}.
We also use a  single shot detector model (SSD) pretrained on OpenImageV4 \cite{OpenImagesV4IJCV2020}, which is freely available from GitHub \cite{SSD_MIC_github}.
All of those APIs produce labels from a large (and unknown) set, but  the datasets we consider have bounded number of  labels. 
For example, there are only 80 distinct labels in COCO dataset.
Thus, we remove the predicted labels which are not in the full label set.
For example, if Google API gives label \textit{\{person, car, man\}} for an image in COCO, but \textit{man} is not  in the full label set of COCO, then we will use \textit{\{person, car\}} as the label set produced from Google.

\textit{STR} is a computer vision task, where the goal is to predict all texts in a natural scene image.
In the context of multi-label classification, we view each predicted word as a label, and all possible words as the label set.  
For \textit{STR}, the ML services used in the experiments are Google Vision \cite{GoogleAPI}, iFLYTEK API \cite{IflytekAPI}, and Tencent API  \cite{TencentAPI}. 
We also use PP-OCR \cite{PaddleOCR_github}, an open source model from GitHub.

\textit{NER} is a natural language processing task where the goal is to extract all possible entities from a given text. 
For example, for the sentence \textit{ICML was held in Long Beach in 2019}, \textit{ICML} should be extracted as an organization, and \textit{Long Beach} should be identified as a location. In this paper, we consider three common types of entities, \textit{person, location}, and \textit{organization}.
For any given text, each possible entity is viewed as a label, and the label set is the number of unique entities in the entire dataset.
For \textit{NER}, we use three common APIs: Amazon Comprehend \cite{AmazonAPI}, Google NLP \cite{GoNLPAPI}, and IBM natural language API \cite{IBMNLPAPI}.
 a  multi-task convolutional neural network  model\cite{Spacy_github} from GitHub is also used.

\paragraph{Datasets.}

The experiments were conducted on 9 datasets.
For \textit{MIC}, we use three popular datasets 
including PASCAL~\cite{Dataset_Pascal_2015}, MIR~\cite{Dataset_MIR_2008} and COCO~\cite{Dataset_COCO_2014}.
PASCAL is a standard object recognition dataset with 20 distinct labels, and COCO is another one with 80 unique labels. 
PASCAL's label set contains 20 common objects: \textit{person,
bird, cat, cow, dog, horse, sheep, aeroplane, bicycle, boat, bus, car, motorbike, train, bottle, chair, dining table, potted plant, sofa, tv/monitor}.
The 80 objects in COCO include: \textit{person, bicycle, car, motorcycle, airplane, bus, train, truck, boat, traffic light, fire hydrant, stop sign, parking meter, bench, bird, cat, dog, horse, sheep, cow, elephant, bear, zebra, giraffe, backpack, umbrella, handbag, tie, suitcase, frisbee, skis, snowboard, sports ball, kite, baseball bat, baseball glove, skateboard, surfboard, tennis racket, bottle, wine glass, cup, fork, knife, spoon, bowl, banana, apple, sandwich, orange, broccoli,  carrot,  hot dog,  pizza,  donut,  cake,  chair,  couch,  potted plant,  bed,  dining table,  toilet,  tv,  laptop,  mouse,  remote,  keyboard,  cell phone,  microwave,  oven,  toaster,  sink,  refrigerator,  book,  clock,  vase,  scissors,  teddy bear,  hair drier,  toothbrush}.
For those two datasets, we use their original associated labels as the label set.
MIR is a dataset designed for image retrieval.
There are originally 25 labels:   \textit{animals,  baby,  bird,  car,  clouds,  dog,  female,  flower,  food,  indoor,  lake,  male,  night,  people,  plant\_life,  portrait,  river,  sea,  sky,  structures,  sunset,  transport,  tree,  water}.     
We remove the label \textit{night} since it is not in the label set of any of the APIs or the GitHub model.
On average, there are 1.44 labels per image for PASCAL, 3.71 labels per image for MIR, and 2.91 labels per image for COCO.
The dataset statistic is summarized in Table \ref{tab:SFAME:DatasetStats}. 
Most of the datasets are open and under Creative Commons license (e.g., the dataset COCO \cite{Dataset_COCO_2014}).
The details can be found in their corresponding paper and repository. 
As those datasets are actually open, they do not require an in-person consent from the authors/developers. 
The datasets themselves may contain personal information (e.g., there are personal images in COCO). 
Though, they have been render anonymous.
For the purpose of deciding which API to call, we also do not use personally identifiable  information.

For \textit{STR}, we use three large scale Chinese text recognition datasets, 
MTWI~\cite{Dataset_MTWI_2018}, ReCTS~\cite{Dataset_ReCTS_2019} and LSVT~\cite{Dataset_LSVT_2019}.
The label set contains all possible Chinese characters as well as digits (0-9).
MTWI contains images from the internet mainly targeting at advertisements. 
Thus, most of its images have dense texts.
ReCTS includes photos taken on sign boards and thus has relatively fewer words.
The images from LSVT are typically street view images and hence have medium number of words.
All images in MTWI and ReCTS are fully annotated and used in our experiments.
LSVT contains both fully and partially annotated images, and we only use the subset with full annotations. 


The other datasets,  CONLL~\cite{Dataset_CONLL_2003}, ZHNER~\cite{ZHNER_dataset_github} and GMB~\cite{Dataset_GMB_2013}, are used for \textit{NER} task.
CONLL contains English sentences from newspapers, and texts from GMB are also English and from a wider range of sources. On the other hand, ZHNER is a Chinese text dataset.
We consider four common types of entities: organization, person, and location.
In this paper, we focus on three common types of entities that all datasets contain: \textit{persons},\textit{locations}, and \textit{organizations}. 
Each sentence from those datasets is extracted as a data point, and the associated label set is simply all entities in this sentence. 
An entity is considered correctly extracted if and only if it is labeled as an entity and its entity type is correct. 

\paragraph{GitHub model cost} We evaluate the inference time of all GitHub models on an Amazon EC2 p2.x instance, which is \$0.90 per hour. For multi-label image classification, the GitHub model \cite{SSD_MIC_github} takes 6s to classify each image, resulting in an equivalent cost of \$0.0015 per image.
For the named entity task, the GitHub model \cite{Spacy_github} can extract the entities from a sentence in 0.015s, leading to \$ 0.00000375 per sentence.
The GitHub model \cite{PaddleOCR_github} with the mobile version 3.0 text detector and recognizer requires 
1.5 on average to extract text from an image, causing a cost of \$ 0.000375 per image.
Compared to the commercial APIs, this cost is much cheaper.

\paragraph{Case study on COCO}
Now we provide more details about the case study on the multi-label image classification dataset, COCO.
\begin{figure}[t]
	\centering
	\includegraphics[angle=90,width=1.0\linewidth]{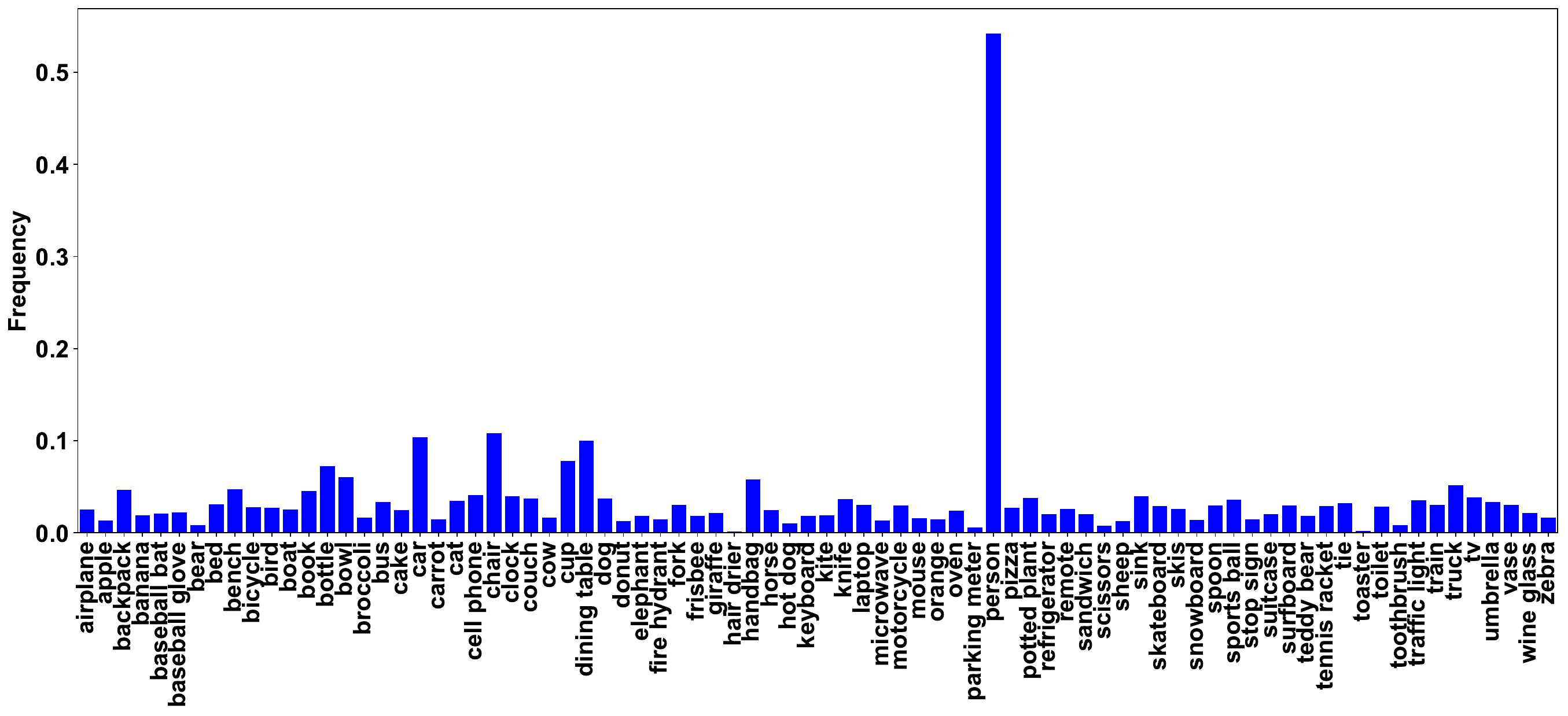}
	\caption{Label Distribution on COCO.}
	\label{fig:SFAME:COCOLabelDist}
\end{figure}
There are in total 123,287 images containing labels from 80 different categories in COCO.
Figure \ref{fig:SFAME:COCOLabelDist} gives the label distribution.
First note that the label distribution is quite skewed.
overall, the label person is the most frequent: more than 50\% of the images contain the person label.
Among others, car, chair, and dining tables are also quite common labels in this dataset with more than 10\% occurrence.
On the other hand, there are also quite some rare labels. 
For example, half driver and toaster appear in less than 1\% of the images. 
Such imbalance between different labels imposes a high data and computational complexity to directly apply previous approach that learns a decision rule per label, and thus verifies the necessity of the proposed framework,  \systemnamesecond{}.

\begin{figure}[t] \centering
\begin{subfigure}[GitHub]{\label{fig:SFAME:cocoperclassGitHub}\includegraphics[angle=90,width=0.85\linewidth]{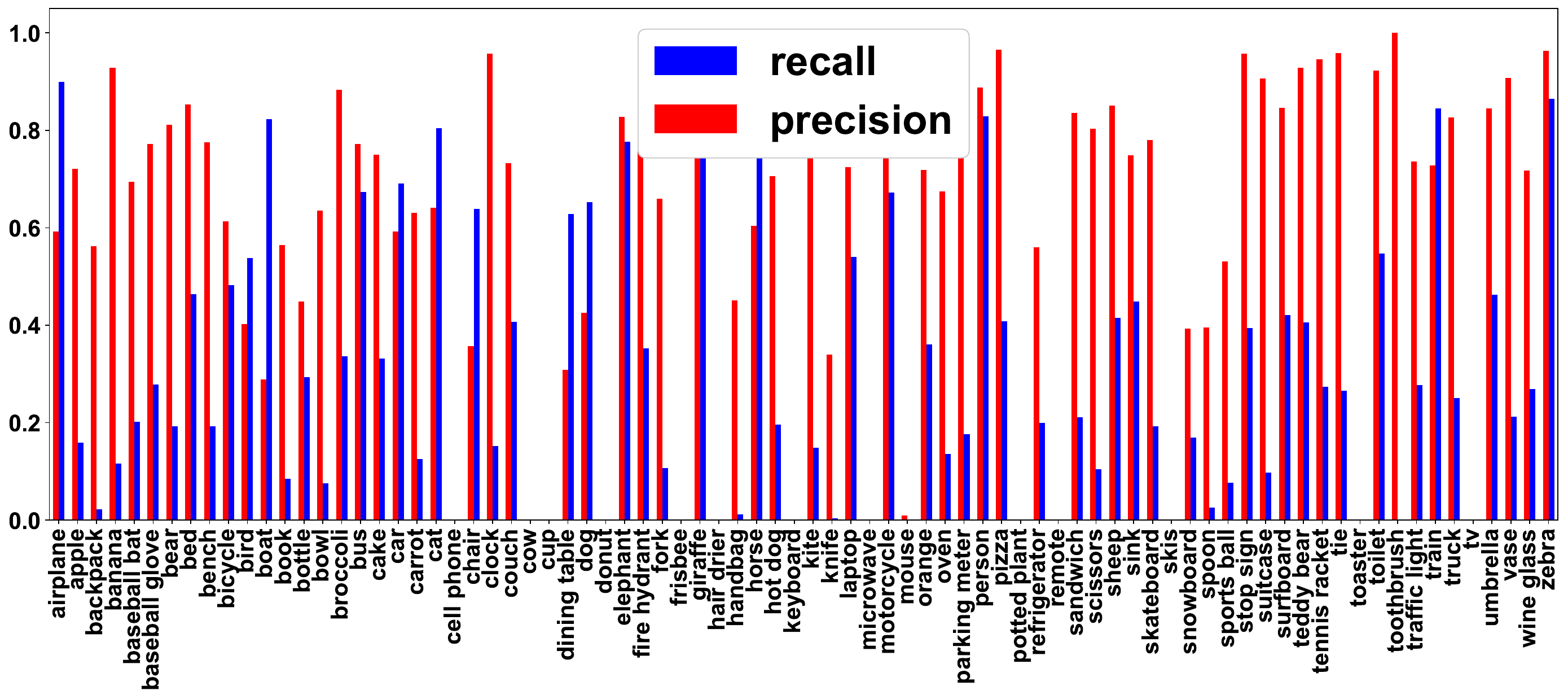}}
\end{subfigure}
\begin{subfigure}[Everypixel]{\label{fig:SFAME:cocoperclassEP}\includegraphics[angle=90,width=0.85\linewidth]{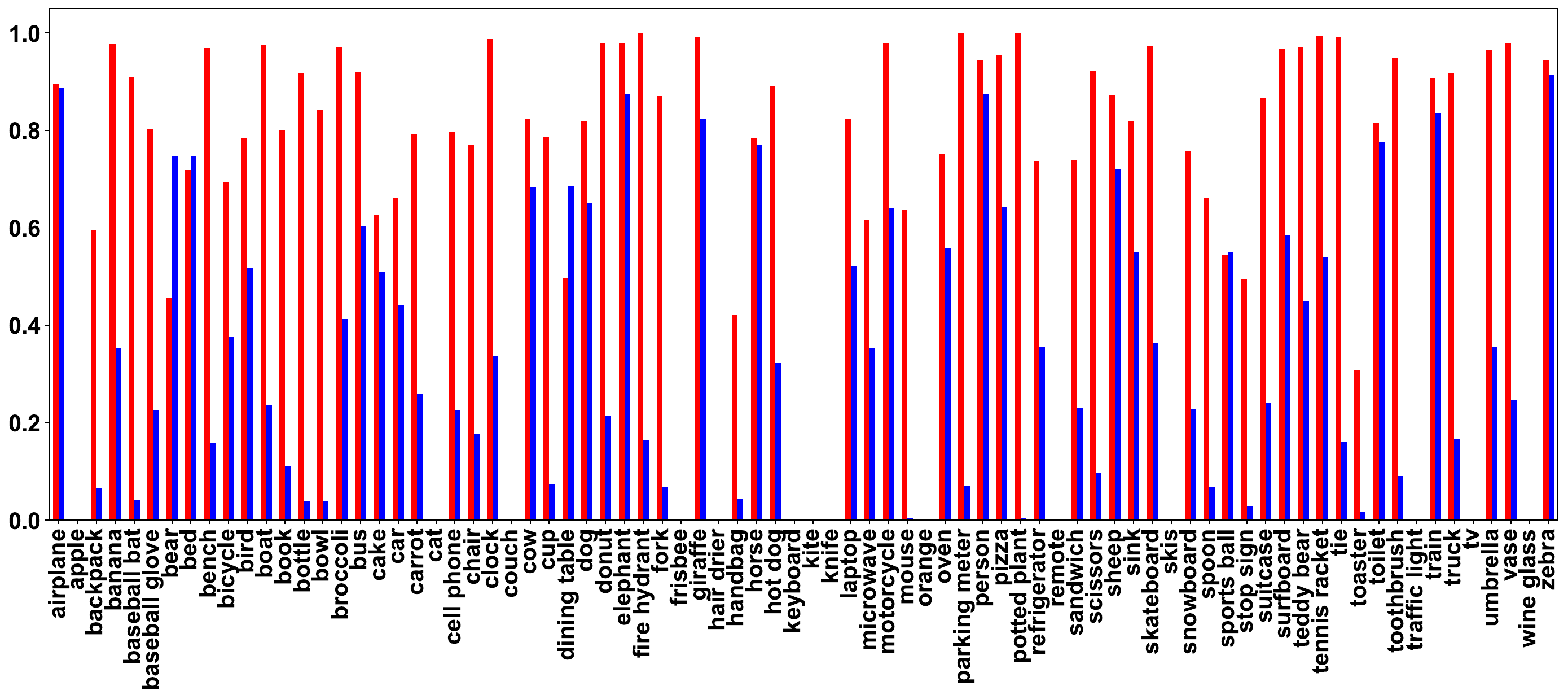}}
\end{subfigure}
\begin{subfigure}[Microsoft]{\label{fig:SFAME:cocoperclassMS}\includegraphics[angle=90,width=0.85\linewidth]{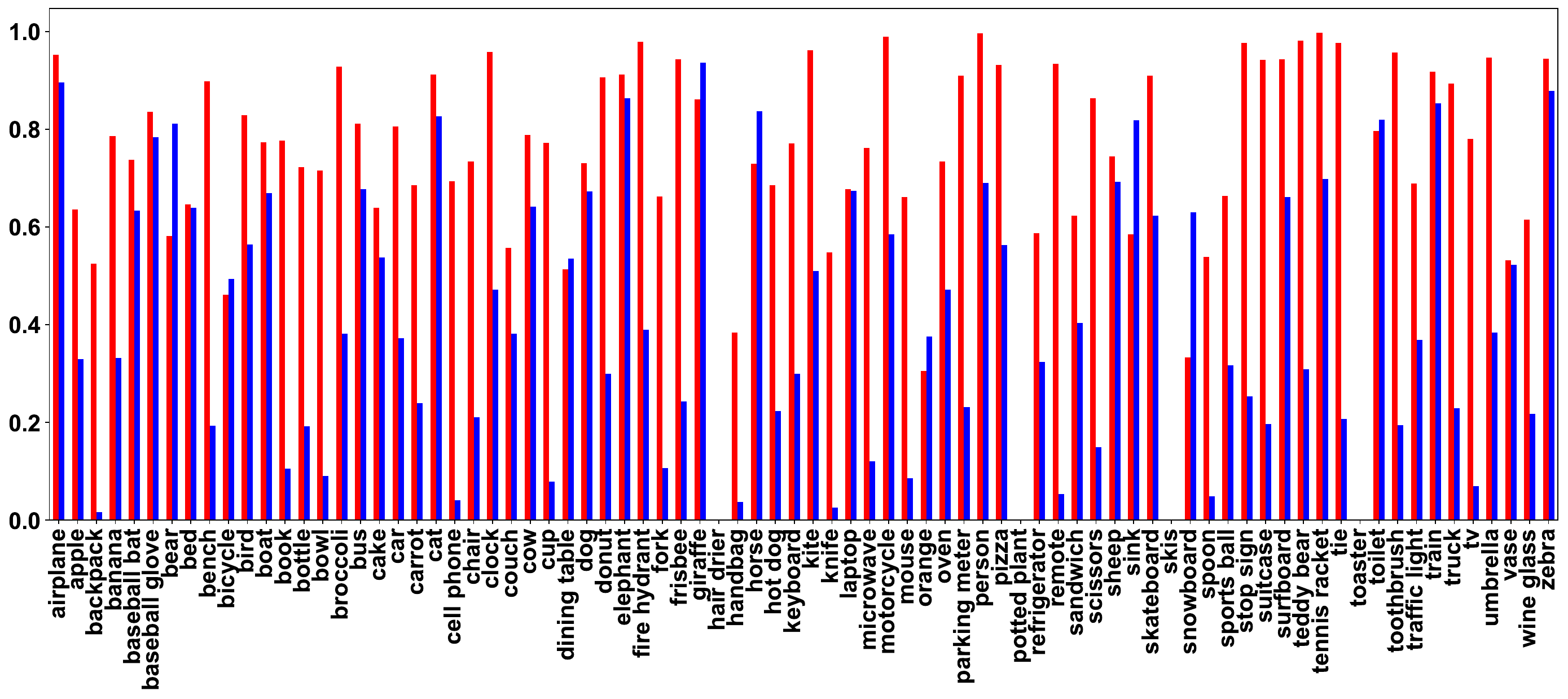}}
\end{subfigure}

	\caption{{The per class precision and recall of different APIs .}}\label{fig:SFAME:precisionrecall1}
\end{figure}

\begin{figure}[t] \centering
\begin{subfigure}[Google]{\label{fig:SFAME:cocoperclassGoogle}\includegraphics[angle=90,width=0.85\linewidth]{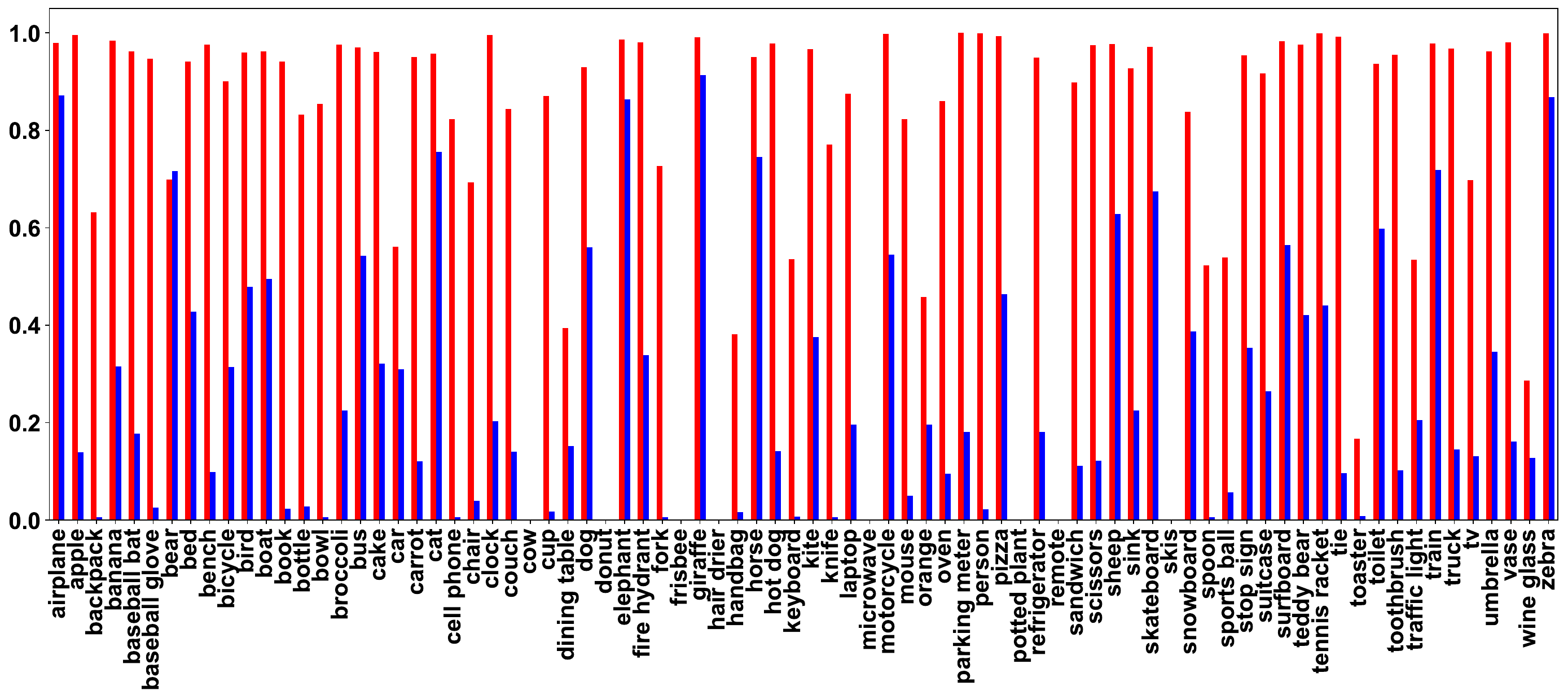}}
\end{subfigure}
\begin{subfigure}[Majority Vote]{\label{fig:SFAME:cocoperclassMajority}\includegraphics[angle=90,width=0.85\linewidth]{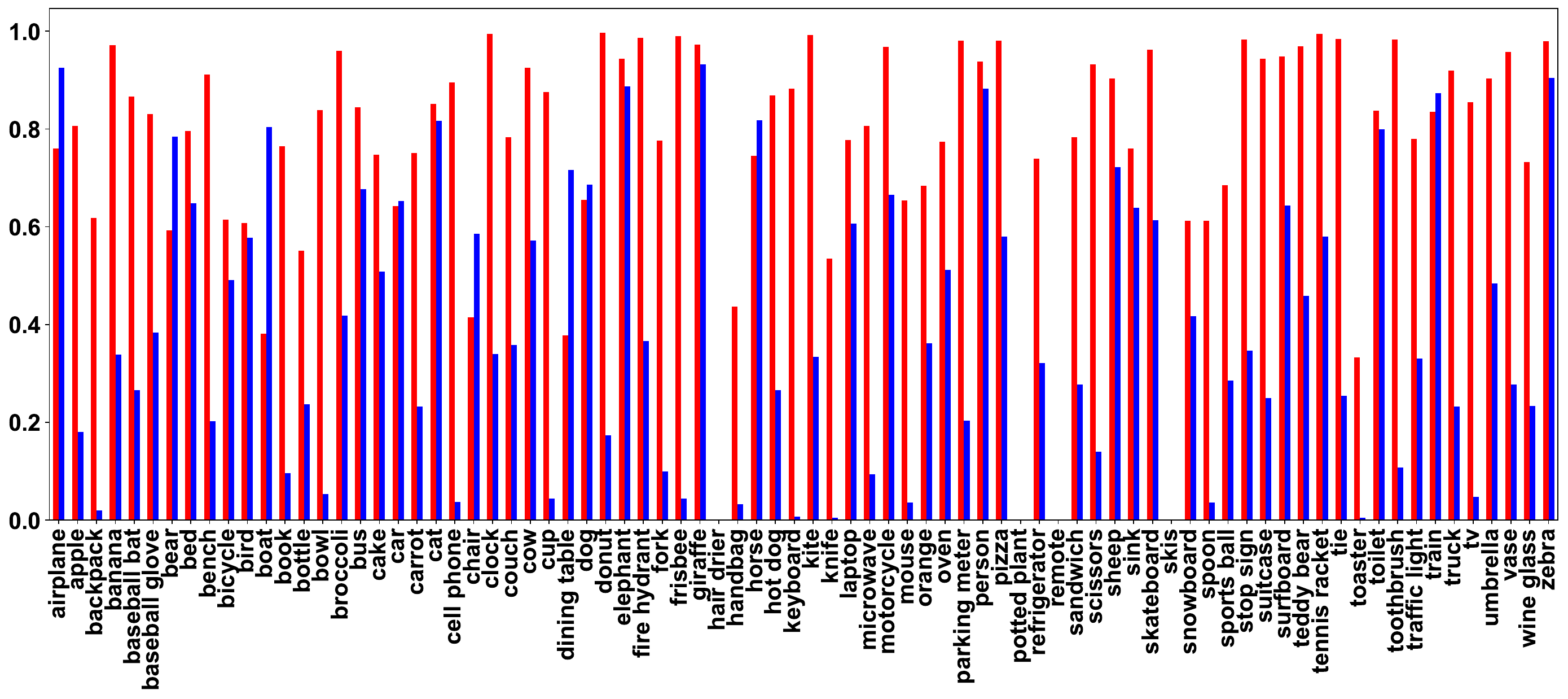}}
\end{subfigure}
\begin{subfigure}[\systemnamesecond{}]{\label{fig:SFAME:cocoperclassFrugalMCT}\includegraphics[angle=90,width=0.85\linewidth]{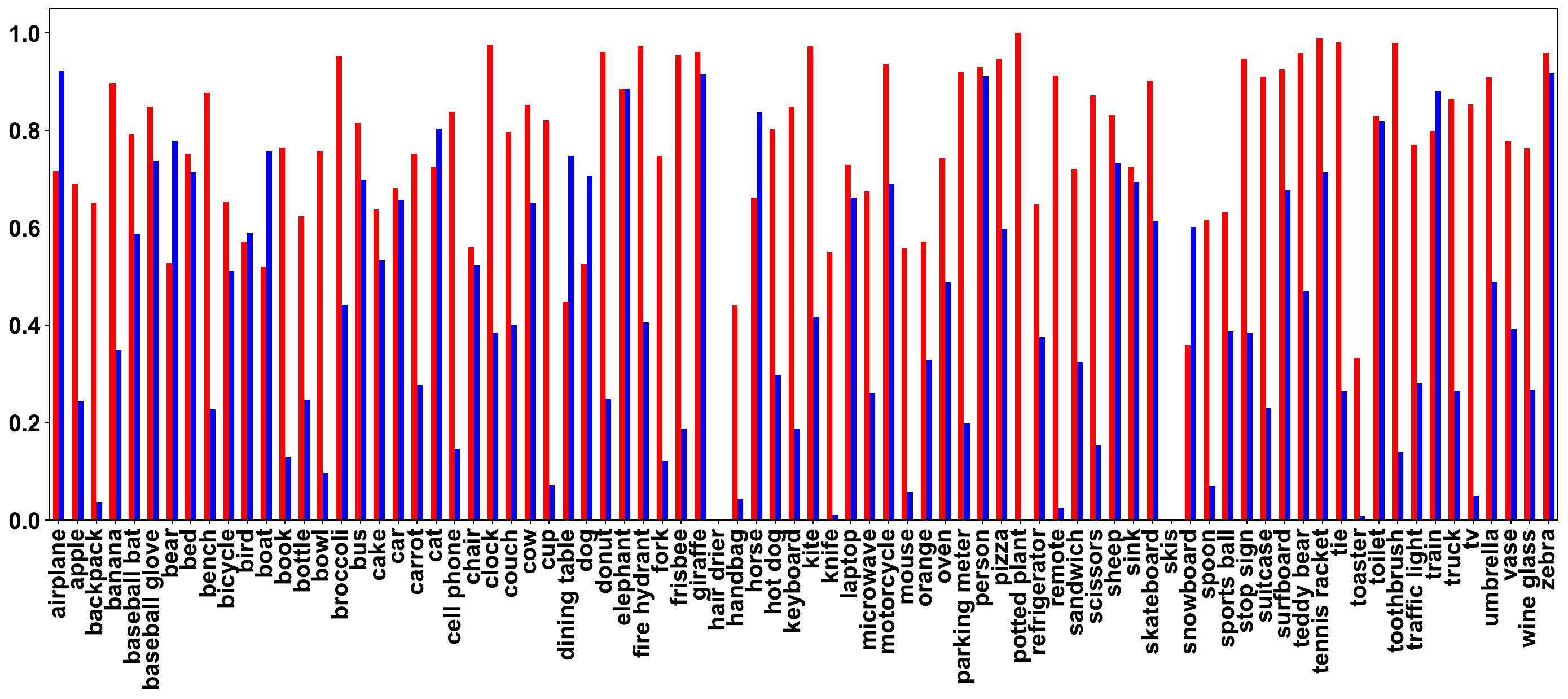}}
\end{subfigure}

	\caption{{The per class precision and recall of different APIs .}}\label{fig:SFAME:precisionrecall2}
\end{figure}

To further understand when and why  \systemnamesecond{} gives a better performance than single API, we present the precision and recall per class for each API, majority vote, and \systemnamesecond{} in Figure \ref{fig:SFAME:precisionrecall1} and Figure \ref{fig:SFAME:precisionrecall2}.
We first note that there is no API universally better than other APIs for each label.
For example, GitHub and Microsoft APIs can hardly correctly predict the label ``toaster'',  but Everypixel and Google APIs have a relatively high accuracy on label ``toaster''.
On the other hand, Everypixel has a low accuracy on label ``kite'' and ``knite'', while Microsoft, Google, GitHub APIs can usually predict those labels with higher accuracy.
This implies that combining different APIs may produce an accuracy better than any single one of them.
There are also some easy labels on which all APIs give a high accuracy.
For example, on the label ``zebra'', all APIs give a 90\% precision and recall.
This actually suggests that it is not always necessary to use all API. 
For example, if GitHub predicts an image has the label ``zebra'', and we know there is no other labels in this image, then probably there is no need to call any other APIs.

Another interesting observation is that \systemnamesecond{} improves the precision and recall for almost every label compared to any single API.
This is primarily because \systemnamesecond{} appropriately utilizes the predicted label information from GitHub model to infer which API is better on certain input, and combine its performance with the base API aptly.
Yet, the precision and recall difference can be quite different for different APIs.
For example, as shown in Figure \ref{fig:SFAME:cocoperclassFrugalMCT}, the recall for ``airplane'' is much higher than its precision, but banana's precision is much higher than its recall. 
For applications that have specific precision and recall requirements, we may adopt different accuracy metrics in \systemnamesecond{}. 
Another interesting observation is that the precision and recall for some labels is extremely.
For example, ``hair drier'' cannot be predicted by \systemnamesecond{}, which is due to that no API actually predicts this label correctly.
How to extend \systemnamesecond{} to recognize unseen labels remains an open question.

\paragraph{Ensemble method comparison}
\eat{
\begin{table}[t]
  \centering
  \caption{Performance of \systemnamesecond{}'s accuracy predictor. Root mean square error (RMSE) quantifies the standard deviation of the differences between predicted and true accuracy. 
  }
    \begin{tabular}{|c|c|c|c|c|c|}
    \hline
    Data  & RMSE  & Data  & RMSE  & Data  & RMSE \bigstrut\\
    \hline
    \hline
    PASCAL & 0.28  & MIR   & 0.22  & COCO  & 0.24 \bigstrut\\
    \hline
    MTWI  & 0.17  & ReCTS & 0.22  & LSVT  & 0.19 \bigstrut\\
    \hline
    CONLL & 0.29  & ZHNER & 0.31  & GMB   & 0.28 \bigstrut\\
    \hline
    \end{tabular}%
  \label{tab:SFAME:AccPredMeasure}%
\end{table}%
}

\begin{table}[t]
  \centering
  \caption{Comparison of ensemble methods as well as cost-aware approaches. For \systemnamesecond{} and \systemname{}, we pick their corresponding strategies that minimize the cost while ensures that the accuracy reaches the highest possible.  }
    \begin{tabular}{|c||c|c|c|c|c|c|c|c|c|c|}
    \hline
          & \multicolumn{2}{c|}{best single API} & \multicolumn{2}{c|}{FrugalML} & \multicolumn{2}{c|}{FrugalMCT} & \multicolumn{2}{c|}{majority vote} & \multicolumn{2}{c|}{weighted maj vote} \bigstrut\\
    \hline
          & acc   & cost  & acc   & cost  & acc   & cost  & acc   & cost  & acc   & cost \bigstrut\\
    \hline
    \hline
    PASCAL & 74.8  & 10    & 76.9  & 11    & 78.5  & 8     & 77.8  & 31.01 & 77.8  & 31.01 \bigstrut\\
    \hline
    MIR   & 41.2  & 10    & 43.8  & 8     & 49.2  & 14    & 41.4  & 31.01 & 48.7  & 31.01 \bigstrut\\
    \hline
    COCO  & 47.5  & 10    & 49.3  & 8     & 54    & 12    & 50.1  & 31.01 & 52.8  & 31.01 \bigstrut\\
    \hline
    MTWI  & 67.9  & 210   & 68.1  & 213   & 71.1  & 208   & 75.4  & 275.01 & 75.4  & 275.01 \bigstrut\\
    \hline
    ReCTS & 61.3  & 210   & 63.4  & 213   & 64.7  & 208   & 70.2  & 275.01 & 70.2  & 275.01 \bigstrut\\
    \hline
    LSVT  & 53.8  & 210   & 56.2  & 213   & 57.2  & 208   & 62.8  & 275.01 & 62.8  & 275.01 \bigstrut\\
    \hline
    CONLL & 52.6  & 3     & 55.7  & 32    & 56.8  & 36.8  & 58.5  & 43.01 & 58.5  & 43.01 \bigstrut\\
    \hline
    ZHNER & 61.3  & 30    & 67.4  & 31.2  & 71.8  & 36.8  & 66    & 43.01 & 66    & 43.01 \bigstrut\\
    \hline
    GMB   & 50.1  & 30    & 52.6  & 30.1  & 53.1  & 20.5  & 51.3  & 43.01 & 51.5  & 43.01 \bigstrut\\
    \hline
    \end{tabular}%
  \label{tab:SFAME:ensemblecompare}%
\end{table}%

\begin{table}[h]
  \centering
  \caption{Accuracy predictor performance. RMSE and PCC stand for root mean square error and Pearson’s correlation coefficient.}
    \begin{tabular}{|c||c|c|c|c|}
    \hline
    \multirow{2}[4]{*}{Data} & \multicolumn{2}{c|}{RMSE} & \multicolumn{2}{c|}{PCC} \bigstrut\\
\cline{2-5}          & \systemnamesecond{} & DAP   & \systemnamesecond{} & DAP \bigstrut\\
    \hline
    \hline
    PASCAL & 0.28  & 0.35  & 0.55  & 0.012 \bigstrut\\
    \hline
    MIR   & 0.22  & 0.31   & 0.55  & -0.013 \bigstrut\\
    \hline
    COCO  & 0.24  & 0.31  & 0.63  & 0.001 \bigstrut\\
    \hline
    MTWI  & 0.17  & 0.21  & 0.57  & 0.004 \bigstrut\\
    \hline
    ReCTS & 0.22  & 0.27  & 0.57  & 0.001 \bigstrut\\
    \hline
    LSVT  & 0.19  & 0.24  & 0.61  & -0.003 \bigstrut\\
    \hline
    CONLL & 0.29  & 0.41  & 0.72  & -0.003 \bigstrut\\
    \hline
    ZHNER & 0.31  & 0.36  & 0.48  & -0.005 \bigstrut\\
    \hline
    GMB   & 0.28  & 0.40   & 0.69  & -0.006 \bigstrut\\
    \hline
    \end{tabular}%
  \label{tab:SFAME:AccPredMeasureFull}%
\end{table}%
For comparison, we compare \systemnamesecond{} against \systemname{} as well as two ensemble methods, majority vote and weighted majority vote.
In majority vote, for each label, we accept it if at least half of the APIs predict it.
In weighted majority vote, we assign each API's accuracy as its weight. 
Next, for each label,  we compute a label score, which is equal to the sum of each API's  confidence score on this label weighted by its corresponding weight.
If an API does not predict a label, then its confidence score is viewed as 0.
Finally, we only accept the label if its label score is larger than a threshold.
We pick a threshold that maximizes the overall accuracy by grid search.

The results are summarized in Table \ref{tab:SFAME:ensemblecompare}.
Overall, we observe that \systemnamesecond{} and ensemble methods have similar performance across different tasks and datasets, but with a much lower cost.
In fact, for datasets including COCO and ZHNER, \systemnamesecond{} can achieve an accuracy enven higher than ensemble methods.

\paragraph{Accuracy predictor performance} 
Note that \systemnamesecond{}'s performance highly depends on its accuracy predictors' performance.
Tn obtain a quantitative sense of the accuracy predictors, we evaluate the accuracy predictors' performance in Table \ref{tab:SFAME:AccPredMeasureFull}.
RMSE measures the standard deviation of the difference between accuracy predictor's output and the corresponding true accuracy.
PCC stands for Pearson correlation coefficient, which roughly measures the linear correlation between the true accuracy and the predicted value from the accuracy predictors. 
Overall, \systemnamesecond{}' random forest predictors enjoy a much smaller RMSE and higher PCC than DAP (the dummy accuracy predictors), which matches the fact that \systemnamesecond{} gives a higher end to end performance than using the DAP.

\end{document}